\newcommand{\balpha}{\mathbf{\alpha}}
\DeclareMathOperator{\dist}{dist}
\newcommand{\afunc}[1]{\operatorname{\mathsf{#1}}}
\newcommand{\xmark}{\ding{55}}%
\newcommand{\D}{\mathcal{D}}
\newcommand{\R}{\mathbb{R}}
\begin{document}

\title{Gap Minimization for Knowledge Sharing and Transfer}

\author{\name Boyu~Wang \email bwang@csd.uwo.ca \\
       \addr Department of Computer Science\\
       University of Western Ontario
       \AND 
       \name Jorge~A.~Mendez \email jmendez@csail.mit.edu \\
       \addr Computer Science \& Artificial Intelligence Laboratory \\
       Massachusetts Institute of Technology
       \AND
       \name Changjian~Shui \email changjian.shui.1@ulaval.ca \\
       \addr Institute Intelligence and Data \\
       Universit\'e Laval
       \AND
       \name Fan~Zhou \email zhoufan@buaa.edu.cn \\
       \addr  School of Transportation Science and Engineering \\
       Beihang University
       \AND
       \name Di~Wu \email di.wu5@mail.mcgill.ca \\
       \addr  Department of Electrical and Computer Engineering \\
       McGill University
       \AND
       \name Gezheng~Xu \email gxu86@uwo.ca \\
       \addr Department of Computer Science\\
       University of Western Ontario
       \AND 
       \name Christian~Gagn\'e \email christian.gagne@gel.ulaval.ca \\
       \addr Institute Intelligence and Data \\
       Universit\'e Laval
       \AND
       \name Eric~Eaton\email eeaton@seas.upenn.edu \\
       \addr Department of Computer and Information Science \\
       University of Pennsylvania}

\editor{John Shawe-Taylor}

\maketitle

\begin{abstract}
Learning from multiple related tasks by knowledge sharing and transfer has become increasingly relevant over the last two decades. In order to successfully transfer information from one task to another, it is critical to understand the similarities and differences between the domains. In this paper, we introduce the notion of \emph{performance gap}, an intuitive and novel measure of the distance between learning tasks. Unlike existing measures which are used as tools to bound the difference of expected risks between tasks (e.g., $\mathcal{H}$-divergence or discrepancy distance), we theoretically show that the performance gap can be viewed as a data- and algorithm-dependent regularizer, which controls the model complexity and leads to finer guarantees. More importantly, it also provides new insights and motivates a novel principle for designing strategies for  knowledge sharing and transfer: gap minimization. We instantiate this principle with two algorithms: 1. $\afunc{gapBoost}$, a novel and principled boosting algorithm that explicitly minimizes the performance gap between source and target domains for transfer learning; and 2. $\afunc{gapMTNN}$, a representation learning algorithm that reformulates gap minimization as semantic conditional matching for multitask learning. Our extensive evaluation on both transfer learning and multitask learning benchmark data sets shows that our methods outperform existing baselines.
\end{abstract}

\begin{keywords}
  Performance Gap, Transfer Learning, Multitask Learning, Regularization, Algorithmic Stability
\end{keywords}

\section{Introduction}

Transfer and multitask learning have been extensively studied over the last two decades~\citep{pan2010survey,weiss2016survey}. In both learning paradigms, the underlying assumption is that there is common knowledge that can be transferred or shared across different tasks. This sharing of knowledge can lead to a better generalization performance than learning each task individually. In particular, transfer learning improves the learning performance on a \emph{target} domain by leveraging knowledge from different (but related) \emph{source} domains, while multitask learning simultaneously learns multiple related tasks to improve the overall performance. However, not all tasks are equally related, and therefore sharing some information may be more beneficial than sharing other. In this work, we investigate how to quantify the similarity between tasks in a manner that is conducive to better knowledge-sharing algorithms. While the two paradigms of transfer and multitask learning are conceptually intertwined, they have typically been studied separately. From the theoretical aspect, most existing results on transfer learning primarily focus on \emph{domain adaptation}~\citep{ganin2016domain}, where no label information is available in the target domain. As a result, the benefits of transfer learning when target labels are available have remained elusive, and existing generalization bounds require additional assumptions for domain adaptation to succeed, which cannot be accurately estimated due to the lack of labeled examples in the target domain (e.g., the labeling function is $\lambda$-close to the hypothesis class~\citep{ben2007analysis,ben2010theory}). Similarly, for multitask learning, most existing methods and theoretical results only consider aligning marginal distributions to learn task-invariant feature representations~\citep{maurer2016benefit,luo2017exploiting} or task relations~\citep{zhang2012convex,bingel2017identifying,shui2019principled,zhou2020task}, without taking advantage of label information of the tasks. Consequently, the features can lack discriminative power for supervised learning even if their marginal distributions
have been properly matched.

In this paper, the  fundamental  question  we address is: \emph{when label information is available, how can we leverage it to measure the distance between tasks and to motivate concrete algorithms that share and transfer knowledge between the tasks}? To this end, we develop a unified theoretical framework for transfer learning and multitask learning by introducing the notion of \emph{performance gap}, an intuitive and novel measure of the divergence between different learning tasks. Intuitively, if the learning tasks are similar to each other, the model  trained on one task should also perform well on the other tasks, and vice versa. We formalize this intuition by showing that the transfer and multitask learning model complexities (and consequently their generalization errors) can be upper-bounded in terms of the performance gap. Our formulation eventually leads to \emph{gap minimization}, a general principle that is applicable to a large variety of forms of knowledge sharing and transfer, and provides a deeper understanding of this problem and new insight into how to leverage the labeled data instances. Gap minimization is, to the best of our knowledge, the first unified principle that accommodates both transfer and multitask learning  with the presence of labeled data. On the other hand, the specific form that gap minimization takes depends on the underlying transfer or multitask learning approach, and therefore the performance gap can be viewed as a data- and algorithm-dependent regularizer, which leads to finer and more informative generalization bounds. On the algorithmic side, the gap minimization principle also motivates new  strategies for knowledge sharing and transfer. 

Our analysis is based on \emph{algorithmic stability}~\citep{bousquet2002stability}. Most existing theoretical justifications for transfer and multitask learning in the literature based on algorithmic stability have examined the convergence rate of the stability coefficient under the assumption that the model complexity (and hence the loss function) of an algorithm is upper-bounded by a fixed constant. However, as we demonstrate in this work, the model complexity itself can be viewed as a function of the performance gap, which lends a new perspective on transfer and multitask learning algorithms. In particular, we show in a diversity of settings that a small performance gap equates to a small model complexity. This provides mathematical evidence for the intuition that if two tasks are similar to each other, transfer or multitask learning across these two tasks can successfully improve model performance. Usefully, in many cases, this performance gap is modulated by algorithm parameters (e.g., instance weights or representational parameters) and can therefore be minimized. In some settings, our analysis does not reveal immediate gap minimization strategies; in such cases, our analysis still provides theoretical insights that reveal when knowledge sharing and transfer can successfully improve performance.

We instantiate the principle of gap minimization with three algorithms: $\afunc{gapBoost}$ for transfer learning in classification problems, $\afunc{gapBoostR}$ for transfer learning in regression problems, and $\afunc{gapMTNN}$ for deep multitask learning. $\afunc{gapBoost}$ and $\afunc{gapBoostR}$ are based on four rules derived from our generalization bounds that explicitly minimize the performance gap between source and target domains. As boosting algorithms, they offer out-of-the-box usability and readily accommodate any base algorithm for transfer learning. To make gap minimization feasible in a deep learning setting, we create $\afunc{gapMTNN}$ by reformulating the gap minimization problem as semantic conditional matching. In addition, we also derive a theoretical analysis on representation learning based on the notion of \emph{conditional Wasserstein distance}~\citep{arjovsky2017wasserstein,shui2021aggregating}, which justifies our algorithm and motivates practical guidelines in the deep learning regime. 

The remainder of this paper is organized as follows. Section~\ref{work} provides a brief overview of related work. Section~\ref{setup} presents the preliminaries and notions used for our analysis. Our main theoretical results are presented in Section~\ref{theory}, followed by $\afunc{gapBoost}$ and $\afunc{gapMTNN}$ described in Section~\ref{algorithms}. The experimental results are reported in Section~\ref{experiments}. Section~\ref{conclusion} presents our conclusions and avenues for future work.

An earlier version of some of the results in this paper appeared in the \emph{International Conference on Artificial Intelligence and Statistics}~\citep{wang2019multitask} and in \emph{Advances in Neural Information Processing Systems}~\citep{wang2019transfer}. This submission substantially extends those results, containing: (1) extension of prior theoretical results to other transfer and multitask learning scenarios in Sections~\ref{sec:frtl},~\ref{sec:htl},~\ref{sec:twmtl}, and~\ref{sec:tcmtl}; (2) $\afunc{gapBoostR}$, a novel transfer boosting algorithm for regression problems in Section~\ref{sec:gapboost}; (3) $\afunc{gapMTNN}$, a novel multitask representation learning algorithm in Section~\ref{sec:gapMTNN}; and (4) additional experiments on transfer regression multitask learning problems in Sections~\ref{sec:performance} and~\ref{sec:multiexp}. More importantly, in addition to the theoretical results and algorithms developed for two specific learning scenarios in our previous works, this paper presents a comprehensive and unified framework for both transfer and multitask learning based on the notion of performance gap, which opens up avenues for future work.

\section{Related Work}\label{work}
A large number of theoretical analyses on transfer learning and multitask learning have been developed. Thus, we briefly review the most related work.

\subsection{Transfer learning} 
Most results in transfer learning have been obtained for the \emph{domain adaptation} setting, where label information in the target domain is generally not available~\citep{pan2010survey, DBLP:journals/tist/WilsonC20}. To deal with the lack of label information in the target domain, multiple works have used the $\mathcal{H}$-divergence to measure the distance between two domains in terms of the data distribution in the feature space for binary classification~\citep{ben2007analysis,blitzer2008learning,ben2010theory}. This  theory was later extended to the general loss function by introducing the \emph{discrepancy distance}~\citep{mansour2009domain}, \emph{Wasserstein distance}~\citep{redko2017theoretical}, and \emph{Jensen-Shannon divergence}~\citep{shui2022novel}. 

For transfer learning based on instance weighting, the concept of \emph{distributional stability} was introduced by \citep{cortes2008sample} to analyze the generalization gap between the perfect weighting scheme and the kernel mean matching (KMM)~\citep{huang2007correcting}. Other works in this setting studied a transfer learning algorithm that minimizes a convex combination of the empirical risks of the source and target domains~\citep{blitzer2008learning,ben2010theory}. In the context of representation learning, theoretical results focus on learning transferable representations~\citep{liang2020does}. For instance, \citep{zhang2019bridging} proposed a domain adaptation margin theory based on the hypothesis-induced discrepancy. The importance of task diversity was studied by \citep{tripuraneni2020theory}, who obtained statistical guarantees for transfer learning by studying the impact of representation learning for transfer learning. Another approach adopted a Bayesian perspective of causal theory induction and used these theories to transfer knowledge between different environments~\citep{edmonds2020theory}. In addition, it has been revealed that the function of the source domain data can be interpreted as a regularization matrix which benefits the learning process of the target domain task~\citep{liu2017understanding}. However, as the proof schema in this latter work does not exploit the label information in the source domain, the generalization bound does not verify the benefits of source data. On the other hand, the fundamental limits of unsupervised domain adaptation have been investigated, showing that there is no guarantee for successful domain adaptation if label information is not available~\citep{zhao2019learning}.

When target label information is available, several transfer learning algorithms with theoretical justifications have been proposed. A boosting approach was proposed to reweight the data from different domains~\citep{dai2007boosting,wang2015online}. While the effectiveness of the algorithm has been empirically verified, the theoretical analysis based on VC-dimension did not demonstrate the benefit of transferred knowledge. Other work proved the benefits of sparse coding by showing that by embedding the knowledge from the source domains into a low-dimensional dictionary, the resulting model had smaller Rademacher complexity~\citep{maurer2013sparse}. In the setting of hypothesis transfer, analysis of the algorithmic stability has shown that good prior knowledge learned from source domains guarantees not only good generalization, but also fast recovery of the performance of the best hypothesis in the class~\citep{kuzborskij2013stability,kuzborskij2017fast}. More recent work has focused on the setting where limited target labels are available. For example, \citep{mansour2021theory} developed a novel meta-learning theory when learning limited target labels in the presence of multiple sources. Another approach for semi-supervised domain adaptation used entropy minimization~\citep{saito2019semi}. A novel, theoretically justified method for semi-supervised domain adaptation has recently been proposed for using min-max to align the marginal and conditional distributions between source and target domains~\citep{li2021learning}. \citep{acuna2020f} obtained a generalization bound based on the $f$-divergences of different distributions. Other recent work studied minimax lower bounds to characterize the fundamental limits of transfer learning in the context of regression~\citep{kalan2020minimax}.

\subsection{Multitask learning}

There have also been extensive theoretical studies on multitask learning. For example, by utilizing the notations of VC-dimension and covering number, the generalization bound of multitask learning has been investigated under the assumption that the tasks share some common hypothesis space~\citep{baxter2000model}. Later, improved bounds were presented under the assumption that the tasks are related in a way such that they share a common linear operator which is chosen to preprocess data~\citep{maurer2006bounds}, or the linear model parameters lie in a low-dimensional subspace~\citep{maurer2013sparse,maurer2016benefit}. The latter work was also extended to eigenvalue decomposition~\citep{wang2016multitask} and nonlinear model by gradient boosting~\citep{wang2016generalized}.
More recently, algorithm-dependent bounds have also been analyzed, based on the notation of algorithmic stability~\citep{zhang2015multi,liu2017algorithm}. Moreover, it has also been proved that the algorithmic stability method has the potential to derive tighter generalization upper bounds than the approaches based on the VC-dimension and Rademacher complexity~\citep{liu2017algorithm}. Other work used random matrix theory to analyze the performance of the multitask least-square SVM algorithm~\citep{tiomoko2020deciphering}. \citep{zhang2020generalization} studied generalization bounds from the perspective of vector-valued function learning, specifically analyzing under what conditions multitask learning can perform better than single-task learning.

In the context of representation learning, \citep{chen2018gradnorm} proposed a method to balance the joint training of multiple tasks to ensure that all tasks are trained at approximately the same rate. \citep{shui2019principled} explored a generalization bound that combines task similarity together with representation learning under an adversarial training scheme~\citep{ben2010theory,shen2018wasserstein}. In the setting of deep multitask learning theory, \citep{wu2020understanding} showed that whether or not tasks’ data are well-aligned can significantly affect the performance of multitask learning. A task grouping method was proposed to determine which tasks should and should not be learned together in multitask learning~\citep{standley2020tasks}. This method is theoretically efficient when the number of tasks is large. Another theoretical analysis was presented on the conflicting gradients~\citep{yu2020gradient}. The authors also presented a method to mitigate this challenge in multitask learning. The regret bound was studied for online multitask learning in a non-stationary environment~\citep{herbster2020online}. 

One drawback of theoretical results in both transfer and multitask learning is that they either do not  explicitly exploit the distribution distance between tasks or only consider marginal distribution alignment, which does not take the label information into consideration. As a result, they usually require additional assumptions to ensure the algorithms succeed (e.g., the combined error across tasks is small~\citep{ben2010theory}), which may not hold in practice. In contrast to existing theoretical results that rely on the distribution distance defined over the input or feature space (e.g., $\mathcal{H}$-divergence or Wasserstein distance), our analysis takes advantage of label information by relating performance gap with model complexity and provides a unified framework that encompasses a variety of transfer and multitask learning algorithms.

\section{Preliminaries}\label{setup}

In this section, we first introduce the notation and problem settings for transfer learning and multitask learning. Then, we define several notions that are later used as tools for our analysis.

\subsection{Problem Settings}

Let $z=(x,y)\in \mathcal{X} \times \mathcal{Y}$ be a training example drawn from some unknown distribution $\mathcal{D}$, where $x$ is the data point, and $y$ is its label, with $\mathcal{Y} = \{-1,1\}$ for binary classification and $\mathcal{Y} \subseteq \mathbb{R}$ for regression.  A hypothesis is a function $h\in \mathcal{H}$ that maps $\mathcal{X}$ to the set $\mathcal{Y}^\prime$ (sometimes different from $\mathcal{Y}$), where $\mathcal{H}$ is a hypothesis class. For some convex, non-negative loss function $\ell: \mathcal{Y}^\prime \times \mathcal{Y} \mapsto \mathbb{R}_+$, we denote by $\ell(h(x),y)$ the loss of hypothesis $h$ at point $z=(x,y)$. Let ${S} = \{z_i=(x_i,y_i)\}_{i=1}^N$ be a set of $N$ training examples drawn independently from $\mathcal{D}$. The empirical loss of $h$ on ${S}$ and its generalization loss over $\mathcal{D}$ are defined, respectively, as $\mathcal{L}_S(h) = \frac{1}{N} \sum_{i=1}^N \ell(h(x_i),y_i)$, and $\mathcal{L}_\mathcal{D}(h) = \mathbb{E}_{z\sim \mathcal{D}} [\ell(h(x),y)]$. We consider the linear function class in a Euclidean space, where the hypothesis $h \in \mathcal{H}$ has the form of $h(x) = \langle h, x\rangle$, but our analysis is also applicable to a reproducing kernel Hilbert space. We also assume that $\|x\|_2 \le R, \forall x\in \mathcal{X}$ for some $R\in \mathbb{R}_+$, and the loss function is $\rho$-Lipschitz continuous for some $\rho\in \mathbb{R}_+$.

In the setting of transfer learning, we have a training sample $S = \{{S}_\mathcal{T}, {S}_\mathcal{S}\}$ of size $N = N_\mathcal{T}+N_\mathcal{S}$ composed of ${S}_\mathcal{T}=\{z^{\mathcal{T}}_i=(x^{\mathcal{T}}_i,y^{\mathcal{T}}_i)\}_{i=1}^{N_\mathcal{T}}$ drawn from a target distribution $\mathcal{D}_{\mathcal{T}}$ and ${S}_\mathcal{S}=\{z^{\mathcal{S}}_i=(x^{\mathcal{S}}_i,y^{\mathcal{S}}_i)\}_{i=1}^{N_\mathcal{S}}$ drawn from a source distribution $\mathcal{D}_{\mathcal{S}}$. 
The objective of transfer learning is to improve the generalization performance in the target domain by leveraging knowledge from the source domain. 
In the setting of multitask learning, we denote by $S=\{S_k\}_{k=1}^K$ the $K$ related tasks, where $S_k=\{z_i^k=(x_i^k, y_i^k)\}_{i=1}^{N_k}$  is the $k$-th task drown from a distribution $\mathcal{D}_k$. For simplicity, we assume that $N_k = N, \forall k = 1,\dots,K$. The objective of multitask learning is to improve the generalization performance of all $k$ tasks by sharing knowledge across them. Note that, in contrast to multitask learning in the computer vision~\citep{sermanet2013overfeat,misra2016cross,cordts2016cityscapes,doersch2017multi,yu2020bdd100k} or multi-label learning~\citep{zhang2013review,gibaja2015tutorial,sener2018multi,liu2021emerging} settings, where the learning paradigm of each task can be different (e.g., classification, regression, detection, and localization) or each instance has multiple labels, we study the setting where the tasks share the same label space $\mathcal{Y}$, but each task $k$ is sampled from a different (yet related) distribution $\mathcal{D}_k$~\citep{zhang2015multi,maurer2016benefit,liu2017algorithm,wang2019multitask,zhang2020generalization}.

\subsection{Definitions}

Our main analysis tool is the notion of \emph{uniform stability}~\citep{bousquet2002stability,mohri2018foundations}, as introduced below.

\begin{definition}[{\bf Uniform stability}]\label{definition1}
    Let $h_{{S}}\in \mathcal{H}$ be the hypothesis returned by a learning algorithm $\mathcal{A}$ when trained on sample $S$. An algorithm $\mathcal{A}$ has $\beta$-uniform stability, with $\beta \ge 0$, if the following holds:
   \begin{align*}
        \sup_{z \sim \mathcal{D}} \left| \ell(h_S(x),y)- \ell(h_{S^i}(x),y)\right| \le \beta \qquad \forall {S}, {S}^i\enspace,
    \end{align*}
    where ${S}^i$ is the training sample ${S}$ with the $i$-th example $z_i$ replaced by an i.i.d. example $z_i'$. The smallest such $\beta$ satisfying the inequality is called the stability coefficient of $\mathcal{A}$.
\end{definition}

Our analysis assumes that the loss function is $\rho$-Lipschitz continuous, as defined below.
\begin{definition}[{\bf $\rho$-Lipschitz continuity}]\label{definition2}
   A loss function $\ell(h(x), y)$ is $\rho$-Lipschitz continuous with respect to the hypothesis class $\mathcal{H}$ for some $\rho \in \mathbb{R}_+$, if, for any two hypotheses $h, h' \in \mathcal{H}$ and for any $(x,y)\in \mathcal{X}\times \mathcal{Y}$, we have:
   \begin{align*}
        \left| \ell(h(x),y) - \ell(h'(x),y ) \right| \le \rho \left|  h(x)-h'(x)\right|\enspace.
   \end{align*}
\end{definition}

We use the notion of \emph{$\mathcal{Y}$-discrepancy}~\citep{mohri2012new} to bound the difference of expected risks between two tasks.

\begin{definition}[\bf $\mathcal{Y}$-Discrepancy]
Let $\mathcal{H}$ be a hypothesis class mapping $\mathcal{X}$ to $\mathcal{Y}$ and let $\ell: \mathcal{Y} \times \mathcal{Y} \mapsto \mathbb{R}_+$ define a loss function over $\mathcal{Y}$. The $\mathcal{Y}$-discrepancy distance between two distributions $\mathcal{D}_1$ and $\mathcal{D}_2$ over $\mathcal{X}\times \mathcal{Y}$ is defined as:
\begin{align*}
        \dist_\mathcal{Y}(\mathcal{D}_1,\mathcal{D}_2) = \sup_{h \in \mathcal{H}} \left| \mathcal{L}_{\mathcal{D}_1} (h) - \mathcal{L}_{\mathcal{D}_2} (h) \right|\enspace.
    \end{align*}
\end{definition}

\begin{remark}
While $\dist_\mathcal{Y}(\hat{\mathcal{D}}_1,\hat{\mathcal{D}}_2)$  is defined taking into account the label information in the target domain, estimating its empirical counterpart $\dist_\mathcal{Y}(S_1,S_2)$ is still elusive~\citep{wang2019transfer}. Therefore, the notion of $\mathcal{Y}$-Discrepancy has limited practical use to motivate concrete transfer and multitask learning algorithms. 
\end{remark}

\section{Main Results}\label{theory}
In this section, we provide the generalization bounds for a batch of representative transfer and multitask learning algorithms. We define an algorithm-dependent notion of performance gap for each algorithm, based on a straightforward intuition: \emph{if two domains are similar, the model trained on one domain should also perform well on the other}. In contrast to existing theoretical tools which bound the difference of expected risks between two different distributions,  our analysis reveals that each performance gap can be viewed as a data- and algorithm-dependent regularizer,  which can lead to finer performance guarantees and motivate new strategies for transfer and multitask learning.  

Specifically, we study the instance weighting, feature representation, and hypothesis transfer approaches to transfer learning; and the tasking weighting, parameter sharing, and task covariance approaches to multitask learning. Note that these approaches are not isolated, but are intertwined with each other in various ways. As examples, parameter sharing is equivalent to the task covariance approach with an appropriately chosen task covariance matrix~\citep{zhang2010convex}, and task weighting can be integrated with representation learning for transfer learning~\citep{shui2021aggregating} and multitask learning~\citep{shui2019principled,zhou2021multi}.

\subsection{Transfer Learning}
\subsubsection{Instance Weighting}\label{secit}
We first investigate the \emph{instance weighting} approach for transfer learning, where  the objective is to correct the difference between the domains by weighting the instances, which has been widely adopted in the literature. Specifically, we analyze the following objective function:
\begin{align}\label{itobj1}
    \min_{h\in \mathcal{H}}  \mathcal{L}_{S}^{\Gamma}(h)+  \lambda \mathcal{R}(h)\enspace,
\end{align}
where {\small $\mathcal{L}_{S}^{\Gamma}(h) = \mathcal{L}_{S_\mathcal{T}}^{\Gamma^\mathcal{T}}(h) + \mathcal{L}_{S_\mathcal{S}}^{\Gamma^\mathcal{S}}(h)$} is the \emph{weighted} empirical loss over the source and target domains, $\mathcal{R}(h)$ is a regularization function to control the model complexity of $h$, and $\lambda$ is a regularization parameter. The domain-specific weighted losses are given by {\small $\mathcal{L}_{S_\mathcal{T}}^{\Gamma^\mathcal{T}}(h) = \sum_{i=1}^{N_\mathcal{T}} \gamma_i^{\mathcal{T}}\ell(h(x^{\mathcal{T}}_i), y_i^{\mathcal{T}})$} and {\small $\mathcal{L}_{S_\mathcal{S}}^{\Gamma^\mathcal{S}}(h) = \sum_{i=1}^{N_\mathcal{S}} \gamma_i^{\mathcal{S}}\ell(h(x^{\mathcal{S}}_i), y_i^{\mathcal{S}})$}. The instance weights {\small ${\Gamma=[\Gamma^\mathcal{T}; \Gamma^\mathcal{S}]}$, with ${\Gamma^\mathcal{T} = [\gamma_1^\mathcal{T},\dots,\gamma_{N_\mathcal{T}}^\mathcal{T}]^\top \in \mathbb{R}_+^{N_{\mathcal{T}}}}$ and ${\Gamma^\mathcal{S}=[\gamma_1^\mathcal{S},\dots,\gamma_{N_\mathcal{S}}^\mathcal{S}]^\top \in \mathbb{R}_+^{N_{\mathcal{S}}}}$, are such that the overall weight sums to one: $\sum_{i=1}^{N_\mathcal{T}} \gamma_i^{\mathcal{T}} + \sum_{i=1}^{N_\mathcal{S}} \gamma_i^{\mathcal{S}} =1$}. As we consider the linear function class, the hypothesis $h$ has the form of an inner product $h(x) = \langle h, x\rangle$, and we study the regularization function $\mathcal{R}(h) = \|h\|_2^2$ throughout this paper unless otherwise specified. 

A special case of (\ref{itobj1}) is to minimize a convex combination of the empirical losses of the source and target domains~\citep{blitzer2008learning,ben2010theory,liu2017understanding}:
\begin{align*}
    \min_{h\in\mathcal{H}}  \gamma \mathcal{L}_{S_\mathcal{T}}(h) + (1-\gamma)\mathcal{L}_{S_\mathcal{S}}(h) +  \lambda \mathcal{R}(h)\enspace,
\end{align*}
where $\gamma \in [0,1]$ is a weight parameter that controls the trade-off between target and source domains. Note that we only investigate instance transfer from a single source domain, but the extension to multi-source transfer is straightforward~\citep{yao2010boosting,eaton2011selective,sun2011two}.

There are a variety of weighting schemes developed in the literature. The instance weights can either be learned in a pre-processing step, such as kernel mean matching (KMM;~\citep{huang2007correcting,gretton2007kernel,cortes2008sample,pan2011domain}) or direct density ratio estimation~\citep{sugiyama2012machine}, or incorporated into learning algorithms, such as boosting~\citep{dai2007boosting,pardoe2010boosting,yao2010boosting,eaton2011selective} or probability alignment~\citep{sun2011two}.

For instance transfer, we define the notion of performance gap as follows.
\begin{definition}[{\bf Performance gap for instance transfer}]\label{it}
   Let ${\mathcal{V}_{\mathcal{S}}(h) = \mathcal{L}^{\Gamma^\mathcal{S}}_{S_\mathcal{S}}(h) +  \eta \lambda \mathcal{R}(h)}$ and ${\mathcal{V}_{\mathcal{T}}(h) =  \mathcal{L}^{\Gamma^\mathcal{T}}_{S_\mathcal{T}}(h) + \eta \lambda \mathcal{R}(h)}$ be the objective functions in the source and target domains, where $\eta \in [0, \frac{1}{2})$. Let their minimizers, respectively, be $h_{\mathcal{S}}$ and $h_{\mathcal{T}}$. The performance gap for instance transfer is defined as:
   \begin{align*}
        \nabla = \nabla_\mathcal{T} + \nabla_\mathcal{S}\enspace,
   \end{align*}
   where $\nabla_\mathcal{S} =  \mathcal{V}_{\mathcal{S}}(h_{\mathcal{T}}) -\mathcal{V}_{\mathcal{S}}(h_{\mathcal{S}})$ and $\nabla_\mathcal{T}=  \mathcal{V}_{\mathcal{T}}(h_{\mathcal{S}}) -\mathcal{V}_{\mathcal{T}}(h_{\mathcal{T}})$.
\end{definition}
The following theorem bounds the difference between $\mathcal{L}_{\mathcal{D}_\mathcal{T}}$ and $\mathcal{L}_S^\Gamma$, which provides general principles to follow when designing an instance weighting scheme  for transfer learning.

\begin{theorem}\label{maintheorem1}
Let $h^*$ be the optimal solution of the instance weighting transfer learning problem (\ref{itobj1}). Then, for any $\delta \in (0,1)$, with probability at least $1-\delta$, we have:
     \begin{align*}
         \mathcal{L}_{\mathcal{D}_\mathcal{T}}(h^*) & \le \mathcal{L}_S^\Gamma (h^*) + \|\Gamma^\mathcal{S}\|_1 \dist_\mathcal{Y}(\mathcal{D}_\mathcal{T}, \mathcal{D}_\mathcal{S}) + \beta  + (\Delta + \beta + \|\Gamma\|_\infty B(\Gamma))\sqrt{\frac{N\log \frac{1}{\delta}}{2}}\enspace,
     \end{align*}
 where  $B(\Gamma)$ is the upper bound of the loss function $\ell$, such that $\ell(h(x),y) \le B(\Gamma)$.  $\beta = \max\{\beta_1^\mathcal{T},\dots,\beta_{N_\mathcal{T}}^\mathcal{T},\beta_1^\mathcal{S},\dots,\beta_{N_\mathcal{S}}^\mathcal{S}\}$ and $\Delta = \sum_{i=1}^{N_\mathcal{T}} \gamma_i^\mathcal{T} \beta_i^\mathcal{T}+\sum_{i=1}^{N_\mathcal{S}} \gamma_i^\mathcal{S} \beta_i^\mathcal{S}$, with the \emph{weight-dependent} stability coefficients upper bounded by:
 \begin{align*}
         \beta_i^\mathcal{T} \le    \frac{\gamma_i^\mathcal{T} \rho^2 R^2}{\lambda}, \text{ and } \beta_i^\mathcal{S} \le    \frac{\gamma_i^\mathcal{S} \rho^2 R^2}{\lambda}\enspace.
 \end{align*}
 \end{theorem}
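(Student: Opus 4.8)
The plan is to pivot through a \emph{weighted population risk}
$\mathcal{L}^{\Gamma}_{\mathcal{D}}(h) = \|\Gamma^{\mathcal{T}}\|_1\,\mathcal{L}_{\mathcal{D}_\mathcal{T}}(h) + \|\Gamma^{\mathcal{S}}\|_1\,\mathcal{L}_{\mathcal{D}_\mathcal{S}}(h)$,
which is the quantity that $\mathcal{L}_S^{\Gamma}$ estimates in expectation. Writing $h_S := h^*$ for the minimizer on the full sample, I would decompose
\begin{align*}
\mathcal{L}_{\mathcal{D}_\mathcal{T}}(h^*) = \underbrace{\big(\mathcal{L}_{\mathcal{D}_\mathcal{T}}(h^*) - \mathcal{L}^{\Gamma}_{\mathcal{D}}(h^*)\big)}_{\text{domain term}} + \underbrace{\big(\mathcal{L}^{\Gamma}_{\mathcal{D}}(h^*) - \mathcal{L}^{\Gamma}_{S}(h^*)\big)}_{\text{generalization term}} + \mathcal{L}^{\Gamma}_{S}(h^*)\enspace.
\end{align*}
The last summand is already the leading term of the claim, so it remains to control the first two.

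The domain term is purely deterministic. Using the normalization $\|\Gamma^{\mathcal{T}}\|_1 + \|\Gamma^{\mathcal{S}}\|_1 = 1$,
\begin{align*}
\mathcal{L}_{\mathcal{D}_\mathcal{T}}(h^*) - \mathcal{L}^{\Gamma}_{\mathcal{D}}(h^*) = \|\Gamma^{\mathcal{S}}\|_1\big(\mathcal{L}_{\mathcal{D}_\mathcal{T}}(h^*) - \mathcal{L}_{\mathcal{D}_\mathcal{S}}(h^*)\big) \le \|\Gamma^{\mathcal{S}}\|_1\,\dist_{\mathcal{Y}}(\mathcal{D}_\mathcal{T},\mathcal{D}_\mathcal{S})\enspace,
\end{align*}
which is immediate from the definition of $\mathcal{Y}$-discrepancy and accounts for the second term of the bound.

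For the generalization term I would run the Bousquet--Elisseeff stability argument on $\Phi(S) := \mathcal{L}^{\Gamma}_{\mathcal{D}}(h_S) - \mathcal{L}^{\Gamma}_{S}(h_S)$. The expectation is controlled by a replace-one/renaming step: swapping $z_i$ with an i.i.d. copy turns each empirical term into its population counterpart, leaving $\mathbb{E}_S[\Phi(S)] \le \sum_i \gamma_i \beta_i = \Delta \le \beta$ (the last inequality uses $\beta_i \le \beta$ and $\sum_i \gamma_i = 1$), which produces the additive $\beta$. The fluctuation is controlled by McDiarmid's inequality, for which I must bound the change of $\Phi$ under replacing one example: the unchanged examples move only through the stability of $h_S$ (contributing the $\beta$- and $\Delta$-scale quantities), while the single replaced example is not controlled by stability and instead contributes at most $\gamma_i B(\Gamma) \le \|\Gamma\|_\infty B(\Gamma)$. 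Assembling these gives a bounded-difference coefficient of order $\Delta + \beta + \|\Gamma\|_\infty B(\Gamma)$ and hence the $\sqrt{N\log(1/\delta)/2}$ factor. The final input is the weight-dependent stability coefficient itself: since $\mathcal{L}^{\Gamma}_{S}(h) + \lambda\|h\|_2^2$ is $2\lambda$-strongly convex and differs from its perturbed version only in the $i$-th weighted loss, comparing the two minimizers and invoking $\rho$-Lipschitzness together with $\|x\|_2 \le R$ yields $\|h_S - h_{S^i}\|_2 \le \gamma_i \rho R/\lambda$ and therefore $\beta_i \le \gamma_i \rho^2 R^2/\lambda$, exactly the stated form (the standard regularized-ERM computation with $1/N$ replaced by the instance weight $\gamma_i$).

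The main obstacle is the bounded-difference bookkeeping for $\Phi$. Because the stability coefficients are themselves weight-dependent ($\beta_i \propto \gamma_i$), one must keep the three effects cleanly separated --- the stability of the untouched terms, the aggregate $\Delta = \sum_i \gamma_i \beta_i$, and the uncontrolled swapped term $\|\Gamma\|_\infty B(\Gamma)$ --- rather than collapsing everything into a crude multiple of $\beta$, so that the McDiarmid coefficient comes out as $\Delta + \beta + \|\Gamma\|_\infty B(\Gamma)$. A secondary point worth stressing is that $h^*$ is data-dependent, which is precisely why a fixed-hypothesis concentration inequality cannot replace the stability argument here.
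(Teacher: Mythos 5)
Your proposal is correct and follows essentially the same route as the paper: the same pivot through the weighted population risk bounded by $\|\Gamma^\mathcal{S}\|_1 \dist_\mathcal{Y}(\mathcal{D}_\mathcal{T},\mathcal{D}_\mathcal{S})$ (the paper's Lemma~\ref{lemma0}), the same stability-plus-McDiarmid argument on $\Phi(S)=\mathcal{L}^\Gamma_\mathcal{D}(h_S)-\mathcal{L}^\Gamma_S(h_S)$ with the identical bounded-difference bookkeeping $\Delta+\beta+\|\Gamma\|_\infty B(\Gamma)$ (Lemma~\ref{lemmaas}), and the same strong-convexity derivation of $\beta_i \le \gamma_i\rho^2R^2/\lambda$, which the paper phrases via Bregman divergences (Lemma~\ref{lemma1}). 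Your intermediate bound $\mathbb{E}[\Phi(S)]\le\Delta$ is in fact slightly sharper than the paper's $\mathbb{E}[\Phi(S)]\le\beta$ before you relax it, but this does not change the result.
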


\begin{remark}
Substituting $\beta_i^\mathcal{T}$ and $\beta_i^\mathcal{S}$ into $\beta$ and $\Delta$, we can reformulate the upper bound as:
\begin{align}\label{theoremeq1}
       \mathcal{L}_{\mathcal{D}_\mathcal{T}}(h^*)  & \le \mathcal{L}_S^\Gamma (h^*)  + \|\Gamma^\mathcal{S}\|_1 \dist_\mathcal{Y}(\mathcal{D}_\mathcal{T}, \mathcal{D}_\mathcal{S})  + \frac{||\Gamma||_\infty \rho^2 R^2}{\lambda} \nonumber \\
      & + \left(\frac{(||\Gamma||_\infty+\|\Gamma\|_2^2) \rho^2 R^2}{\lambda} + \|\Gamma\|_\infty B(\Gamma) \right)\sqrt{\frac{N\log \frac{1}{\delta}}{2}}\enspace.
\end{align}
Then, it can be observed that if $\gamma_i = \frac{1}{N}, \forall i \in \{1,\ldots, N\}$, we recover the standard stability bound from (\ref{theoremeq1}), which suggests assigning equal weights to all instances to achieve a fast convergence rate, due to $\|\Gamma\|_\infty$ and $\|\Gamma\|_2$. In particular, if $\|\Gamma\|_\infty$ (and hence $\|\Gamma\|_2^2$) is $\mathcal{O}(\frac{1}{N})$, (\ref{theoremeq1}) leads to a convergence rate of $\mathcal{O}(\frac{1}{\sqrt{N}})$. In the setting of transfer learning, it is usually the case that $N_\mathcal{T} \ll N_\mathcal{S}$. Consequently, we may have $\|\Gamma\|_\infty \ll \frac{1}{N_\mathcal{T}}$, which implies that transfer learning has a faster convergence rate than single-task learning.   On the other hand, as we will show in Lemma~\ref{lemmait} and Corollary~\ref{corollaryit}, the loss bound $B$ is also a function of $\Gamma$, which suggests a new criterion for instance weighting.
\end{remark}

\begin{lemma}\label{lemmait}
    Let $h^*$ be the optimal solution of the instance weighting transfer learning problem (\ref{itobj1}). Then, we have:
    \begin{align}\label{lemmaiteq}
           \|h^*\|_2 \le   \sqrt{\frac{\nabla}{2\lambda(1-2\eta)}  + \frac{\|h_{\mathcal{S}}\|_2^2 + \|h_{\mathcal{T}}\|_2^2}{2}}\enspace.
    \end{align}
\end{lemma}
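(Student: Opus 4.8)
The plan is to work entirely with the full regularized objective $F(h) := \mathcal{L}_S^\Gamma(h) + \lambda\|h\|_2^2$, of which $h^*$ is by definition the minimizer, and to split its regularizer symmetrically between the two domain objectives $\mathcal{V}_\mathcal{T}$ and $\mathcal{V}_\mathcal{S}$. The key algebraic observation is the identity $\lambda\|h\|_2^2 = \eta\lambda\|h\|_2^2 + \eta\lambda\|h\|_2^2 + (1-2\eta)\lambda\|h\|_2^2$, which, combined with $\mathcal{L}_S^\Gamma(h) = \mathcal{L}_{S_\mathcal{T}}^{\Gamma^\mathcal{T}}(h) + \mathcal{L}_{S_\mathcal{S}}^{\Gamma^\mathcal{S}}(h)$, lets me rewrite
\[
F(h) = \mathcal{V}_\mathcal{T}(h) + \mathcal{V}_\mathcal{S}(h) + (1-2\eta)\lambda\|h\|_2^2 .
\]
This is the crux of the argument: it exposes $\|h\|_2^2$ as a residual term once the two performance-gap objectives have been subtracted out, and it is precisely where the factor $(1-2\eta)$ in the final bound originates.

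With this decomposition in hand, I would establish matching lower and upper bounds on $F(h^*)$. For the lower bound, since $h_\mathcal{T}$ and $h_\mathcal{S}$ are by definition the minimizers of $\mathcal{V}_\mathcal{T}$ and $\mathcal{V}_\mathcal{S}$ respectively, we have $\mathcal{V}_\mathcal{T}(h^*)\ge\mathcal{V}_\mathcal{T}(h_\mathcal{T})$ and $\mathcal{V}_\mathcal{S}(h^*)\ge\mathcal{V}_\mathcal{S}(h_\mathcal{S})$, so that
\[
F(h^*) \ge \mathcal{V}_\mathcal{T}(h_\mathcal{T}) + \mathcal{V}_\mathcal{S}(h_\mathcal{S}) + (1-2\eta)\lambda\|h^*\|_2^2 .
\]

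For the upper bound, I use that $h^*$ minimizes $F$, hence $F(h^*)\le\tfrac12\bigl(F(h_\mathcal{S})+F(h_\mathcal{T})\bigr)$. Expanding each term via the decomposition and then substituting the definitions $\mathcal{V}_\mathcal{S}(h_\mathcal{T})=\mathcal{V}_\mathcal{S}(h_\mathcal{S})+\nabla_\mathcal{S}$ and $\mathcal{V}_\mathcal{T}(h_\mathcal{S})=\mathcal{V}_\mathcal{T}(h_\mathcal{T})+\nabla_\mathcal{T}$, the cross terms collapse and, using $\nabla=\nabla_\mathcal{T}+\nabla_\mathcal{S}$, I obtain
\[
\tfrac12\bigl(F(h_\mathcal{S})+F(h_\mathcal{T})\bigr) = \mathcal{V}_\mathcal{T}(h_\mathcal{T}) + \mathcal{V}_\mathcal{S}(h_\mathcal{S}) + \frac{\nabla}{2} + \frac{(1-2\eta)\lambda}{2}\bigl(\|h_\mathcal{S}\|_2^2 + \|h_\mathcal{T}\|_2^2\bigr).
\]

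Finally I would chain the two bounds, cancel the common quantity $\mathcal{V}_\mathcal{T}(h_\mathcal{T})+\mathcal{V}_\mathcal{S}(h_\mathcal{S})$ from both sides, divide through by $(1-2\eta)\lambda$, and take square roots to reach the stated inequality. Here the hypothesis $\eta\in[0,\tfrac12)$ is essential, as it guarantees $(1-2\eta)\lambda>0$ so the division is legitimate and the square root is well defined. I do not anticipate a genuine obstacle: the argument is a short comparison of optimal values once the regularizer-splitting identity is in place, so the only real content is recognizing that decomposition and arranging the two sides so the optimal-value terms cancel cleanly.
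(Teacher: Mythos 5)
Your proof is correct and follows essentially the same route as the paper's: both rest on the observation that $\mathcal{V}_\mathcal{S}(h)+\mathcal{V}_\mathcal{T}(h)$ differs from the full objective by exactly $(1-2\eta)\lambda\|h\|_2^2$, combined with the optimality of $h_\mathcal{S}$, $h_\mathcal{T}$ (giving the lower bound) and of $h^*$ against both $h_\mathcal{S}$ and $h_\mathcal{T}$ (giving the upper bound). The only difference is cosmetic: the paper derives the two comparisons $\mathcal{V}(h^*)\le\mathcal{V}(h_\mathcal{S})$ and $\mathcal{V}(h^*)\le\mathcal{V}(h_\mathcal{T})$ as separate inequalities and adds them at the end, whereas you average them up front and let the decomposition identity collapse the algebra in one pass.
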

By bounding the model complexity, we obtain various upper bounds for different loss functions.
\begin{corollary}\label{corollaryit}
The hinge loss function of the learning algorithm (\ref{itobj1}) can be upper bounded by:
    \begin{align*}
        B(\Gamma) \le 1+R\sqrt{\frac{ \nabla}{2\lambda(1-2\eta)}  + \frac{\|h_{\mathcal{S}}\|_2^2 + \|h_{\mathcal{T}}\|_2^2}{2}}\enspace.
     \end{align*}

For regression, if the response variable is bounded by ${|y| \le Y}$, the $\ell_q$ loss of (\ref{itobj1}) can be bounded by:
  \begin{align*}
         B(\Gamma) \le \left(Y+R\sqrt{\frac{\nabla}{2\lambda(1-2\eta)}  + \frac{\|h_{\mathcal{S}}\|_2^2 + \|h_{\mathcal{T}}\|_2^2}{2}} \right)^q\enspace.
      \end{align*}
\end{corollary}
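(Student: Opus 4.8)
The plan is to combine Lemma~\ref{lemmait}, which controls the norm $\|h^*\|_2$ of the learned hypothesis, with the elementary inequality $|h^*(x)| \le R\|h^*\|_2$ and the explicit structure of the two losses. Since the hypothesis class consists of linear predictors $h(x) = \langle h, x\rangle$ and we assume $\|x\|_2 \le R$ for all $x \in \mathcal{X}$, the Cauchy--Schwarz inequality immediately yields $|h^*(x)| = |\langle h^*, x\rangle| \le \|h^*\|_2\,\|x\|_2 \le R\|h^*\|_2$ uniformly over $\mathcal{X}$. This single geometric bound is the bridge between the norm control from Lemma~\ref{lemmait} and the desired loss control.

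For the hinge loss I would start from $\ell(h^*(x),y) = \max\{0, 1 - y\,h^*(x)\}$. Because $y \in \{-1,1\}$, we have $1 - y\,h^*(x) \le 1 + |h^*(x)|$, and since the right-hand side is nonnegative the maximum is also bounded by $1 + |h^*(x)|$. Substituting the Cauchy--Schwarz bound gives $\ell(h^*(x),y) \le 1 + R\|h^*\|_2$ uniformly in $(x,y)$, and inserting the norm bound from Lemma~\ref{lemmait} produces the claimed expression $B(\Gamma) \le 1 + R\sqrt{\tfrac{\nabla}{2\lambda(1-2\eta)} + \tfrac{\|h_{\mathcal{S}}\|_2^2 + \|h_{\mathcal{T}}\|_2^2}{2}}$.

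For the $\ell_q$ regression loss, $\ell(h^*(x),y) = |h^*(x) - y|^q$. The triangle inequality together with $|y| \le Y$ and the Cauchy--Schwarz bound gives $|h^*(x) - y| \le |h^*(x)| + |y| \le R\|h^*\|_2 + Y$. Raising both sides to the $q$-th power, which is valid since both sides are nonnegative and $t \mapsto t^q$ is monotone on $\mathbb{R}_+$, and again invoking Lemma~\ref{lemmait}, yields the stated bound on $B(\Gamma)$.

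This argument is essentially mechanical once Lemma~\ref{lemmait} is in hand; the only point requiring minor care is verifying that the loss-specific envelopes ($1 + |h^*(x)|$ for the hinge loss and $(|h^*(x)| + Y)^q$ for the $\ell_q$ loss) hold uniformly over the support, which follows from the boundedness assumptions $\|x\|_2 \le R$ and $|y| \le Y$ rather than from any property of the learning algorithm itself. I do not anticipate a genuine obstacle here: the real mathematical content lives in Lemma~\ref{lemmait}, and this corollary simply translates its norm bound into a loss bound through the geometry of the two losses.
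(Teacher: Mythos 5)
Your proposal is correct and follows exactly the route the paper intends: the paper states Corollary~\ref{corollaryit} as an immediate consequence of Lemma~\ref{lemmait} (with no separate proof in the appendix), and your argument---Cauchy--Schwarz to get $|h^*(x)| \le R\|h^*\|_2$, the elementary envelopes $1+|h^*(x)|$ for the hinge loss and $(|h^*(x)|+Y)^q$ for the $\ell_q$ loss, then substitution of the norm bound---is precisely that implicit derivation.
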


\begin{remark}
Lemma~\ref{lemmait} shows that, given fixed weights, the model complexity (and hence the upper bound of a loss function) is related to  the intrinsic complexity of the learning problems in source and target domains (measured by $||h_\mathcal{S}||_2^2+||h_\mathcal{T}||_2^2$) and the performance gap between them (measured by $\nabla$). Intuitively, if two domains are similar, $\nabla$ should be small, and vice versa. In other words, Lemma~\ref{lemmait} reveals that transfer learning~(\ref{itobj1}) can succeed when the hypotheses trained on their own domains also work well on the other domains, which leads to a lower training loss and a faster convergence to the best hypothesis in the class in terms of sample complexity.
If we assume $B$ to be constant as previous work has done (and not dependent on $\Gamma$ as we show in Corollary~\ref{corollaryit}), then we would wrongly conclude that the optimal weighting scheme should only focus on the trade-off between assigning balanced weights to data points (i.e., treat source and target domains equally), as suggested by $\|\Gamma\|_2$ and $\|\Gamma\|_\infty$, and assigning more weight to the target domain sample than to the source domain sample, as suggested by $\|\Gamma^\mathcal{S}\|_1$. However, Lemma~\ref{lemmait} reveals that, even though this leads to a smaller stability coefficient, its convergence rate may still be slow if the performance gap is high, due to the higher model complexity. In other words, a good weighting scheme should also take minimizing the performance gap into account. 
\end{remark}

In contrast to previous analyses, which usually require additional assumptions to characterize the difference between the source domains and the target domain -- e.g., $\lambda$-closeness between a labeling function and hypothesis class~\citep{ben2007analysis,ben2010theory}, or boundedness of the difference between the conditional distributions of the source domain and the target domain~\citep{wang2015generalization} -- our analysis takes  advantage of the label information in the target domain and does not make any assumption on the relatedness between the domains, as such property has already been characterized by the performance gap, which can be estimated from the training instances.

\subsubsection{Feature Representation}\label{sec:frtl}
\emph{Feature representation} is another important approach to transfer learning, which aims to minimize the domain divergence by learning a domain-invariant feature representation. Specifically, we consider a predictor $(h \circ \Phi)(x) = h(\Phi(x))$, where $\Phi: \mathcal{X} \rightarrow \mathcal{Z}$ is a feature representation function, and $h$ is a linear hypothesis defined over $\mathcal{Z}$ such that $h(z)=\langle h, z\rangle, \forall z \in \mathcal{Z}$. For any fixed representation function $\Phi$, we analyze how it can affect the complexity of $h$ by studying the following objective function:
\begin{align}\label{frobj1}
    \min_{h\in \mathcal{H}}  \mathcal{L}_{S}^{\Phi}(h)+  \lambda \mathcal{R}(h)\enspace,
\end{align}
where $\mathcal{L}_{S}^{\Phi}(h) = \mathcal{L}_{S_\mathcal{T}}^\Phi(h) + \mathcal{L}_{S_\mathcal{S}}^\Phi(h)$ is the combined  empirical loss, and the empirical target and source losses are respectively given by  $\mathcal{L}_{S_\mathcal{T}}^\Phi(h) = \frac{1}{N}\sum_{i=1}^{N_\mathcal{T}} \ell(h(\Phi(x_i^\mathcal{T})),y_i^\mathcal{T})$ and $\mathcal{L}_{S_\mathcal{S}}^\Phi(h) = \frac{1}{N} \sum_{i=1}^{N_\mathcal{S}} \ell(h(\Phi(x_i^\mathcal{S})),y_i^\mathcal{S})$.

For feature representation, the performance gap can be defined in a similar way as for instance weighting. 
\begin{definition}[{\bf Performance gap for feature  transfer}]\label{deffr}
   Let ${\mathcal{V}_{\mathcal{S}}(h) =  \mathcal{L}^{\Phi}_{S_\mathcal{S}}(h) + \eta \lambda \mathcal{R}(h)}$ and ${\mathcal{V}_{\mathcal{T}}(h) =  \mathcal{L}^{\Phi}_{S_\mathcal{T}}(h) + \eta \lambda \mathcal{R}(h)}$, respectively, be the objective functions in the source and target domains, where $\eta \in [0, \frac{1}{2})$, and let their minimizers, respectively, be $h_{\mathcal{S}}$ and $h_{\mathcal{T}}$. The performance gap for feature representation transfer is defined as:
   \begin{align*}
        \nabla = \nabla_\mathcal{T} + \nabla_\mathcal{S}\enspace,
   \end{align*}
   where $\nabla_\mathcal{S} =  \mathcal{V}_{\mathcal{S}}(h_{\mathcal{T}}) -\mathcal{V}_{\mathcal{S}}(h_{\mathcal{S}})$ and $\nabla_\mathcal{T}=  \mathcal{V}_{\mathcal{T}}(h_{\mathcal{S}}) -\mathcal{V}_{\mathcal{T}}(h_{\mathcal{T}})$.
\end{definition}

\begin{theorem}\label{maintheorem2}
 Let $h^*$ be the optimal solution of the transfer learning problem (\ref{frobj1}). Then, for any $\delta \in (0,1)$, with probability at least $1-\delta$, we have:
\begin{align*}
      \mathcal{L}_{\mathcal{D}_\mathcal{T}}(h^*) & \le \mathcal{L}_S^\Phi (h^*)  + \frac{N_\mathcal{S}}{N} \dist_\mathcal{Y}(\mathcal{D}_{\mathcal{T}}^\Phi, \mathcal{D}_{\mathcal{S}}^\Phi) + \frac{ \rho^2 R^2}{\lambda N}  + \left(\frac{2 \rho^2 R^2}{\lambda}  + {B(\Phi)} \right)\sqrt{\frac{\log \frac{1}{\delta}}{2N}}\enspace, 
\end{align*}
where $\dist_\mathcal{Y}(\mathcal{D}_{\mathcal{T}}^\Phi, \mathcal{D}_{\mathcal{S}}^\Phi)$ is the $\mathcal{Y}$-discrepancy between two domains defined over $\mathcal{Z} \times \mathcal{Y}$ induced by $\Phi$.
\end{theorem}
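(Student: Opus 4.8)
The plan is to follow the template of Theorem~\ref{maintheorem1}, exploiting the observation that the feature-representation objective~(\ref{frobj1}) is exactly the uniform-weight instance of the instance-weighting objective~(\ref{itobj1}) applied to the transformed inputs $\Phi(x)$: taking $\gamma_i = 1/N$ for every point and replacing $x$ by $\Phi(x)$ turns $\mathcal{L}_S^\Gamma$ into $\mathcal{L}_S^\Phi$ and the source/target distributions into their $\Phi$-induced counterparts $\mathcal{D}_{\Phi_\mathcal{S}}, \mathcal{D}_{\Phi_\mathcal{T}}$ over $\mathcal{Z}\times\mathcal{Y}$. Thus every quantity in the bound of Theorem~\ref{maintheorem1} specializes cleanly, and the main task is to verify that this specialization reproduces the three error terms claimed here.

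First I would decompose the target risk. Writing $\mathcal{L}_{\mathcal{D}_\mathcal{T}}(h^*) = \mathcal{L}_{\mathcal{D}_{\Phi_\mathcal{T}}}(h^*)$ and inserting the identity $1 = \tfrac{N_\mathcal{T}}{N} + \tfrac{N_\mathcal{S}}{N}$, I split off a cross term:
\begin{align*}
    \mathcal{L}_{\mathcal{D}_{\Phi_\mathcal{T}}}(h^*) = \underbrace{\frac{N_\mathcal{T}}{N}\mathcal{L}_{\mathcal{D}_{\Phi_\mathcal{T}}}(h^*) + \frac{N_\mathcal{S}}{N}\mathcal{L}_{\mathcal{D}_{\Phi_\mathcal{S}}}(h^*)}_{\bar{\mathcal{L}}(h^*)} + \frac{N_\mathcal{S}}{N}\bigl[\mathcal{L}_{\mathcal{D}_{\Phi_\mathcal{T}}}(h^*) - \mathcal{L}_{\mathcal{D}_{\Phi_\mathcal{S}}}(h^*)\bigr]\enspace.
\end{align*}
The bracketed difference is at most $\dist_\mathcal{Y}(\mathcal{D}_{\Phi_\mathcal{T}}, \mathcal{D}_{\Phi_\mathcal{S}})$ by definition of the $\mathcal{Y}$-discrepancy, which yields the $\tfrac{N_\mathcal{S}}{N}\dist_\mathcal{Y}$ term. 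It remains to control the population weighted risk $\bar{\mathcal{L}}(h^*)$, which is exactly $\mathbb{E}_S[\mathcal{L}_S^\Phi(\cdot)]$ evaluated at the data-dependent minimizer.

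Next I would bound $\bar{\mathcal{L}}(h^*) - \mathcal{L}_S^\Phi(h^*)$ by a stability argument in the style of Bousquet--Elisseeff. Since $\ell$ is $\rho$-Lipschitz and $|h(\Phi(x)) - h'(\Phi(x))| = |\langle h-h', \Phi(x)\rangle| \le R\|h-h'\|_2$ (using $\|\Phi(x)\|_2 \le R$), each per-point loss is $\rho R$-Lipschitz in $h$, while $\lambda\|h\|_2^2$ is $2\lambda$-strongly convex; the same computation that produces the weight-dependent coefficients $\beta_i \le \gamma_i\rho^2R^2/\lambda$ in Theorem~\ref{maintheorem1}, specialized to $\gamma_i = 1/N$, gives uniform stability $\beta = \rho^2 R^2/(\lambda N)$. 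Applying McDiarmid's inequality to the generalization functional $\Psi(S) = \bar{\mathcal{L}}(h^*_S) - \mathcal{L}_S^\Phi(h^*_S)$ --- whose mean is bounded by $\beta$ and whose bounded-difference constant is $2\beta + B(\Phi)/N$ --- then produces, with probability $1-\delta$,
\begin{align*}
    \bar{\mathcal{L}}(h^*) \le \mathcal{L}_S^\Phi(h^*) + \beta + \bigl(2N\beta + B(\Phi)\bigr)\sqrt{\frac{\log(1/\delta)}{2N}}\enspace.
\end{align*}
Substituting $\beta = \rho^2R^2/(\lambda N)$ turns the additive term into $\rho^2R^2/(\lambda N)$ and the concentration coefficient into $2\rho^2R^2/\lambda + B(\Phi)$, matching the statement exactly.

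The main obstacle is the stability step, specifically verifying the expectation bound $\mathbb{E}_S[\Psi(S)] \le \beta$ and the bounded-difference estimate $|\Psi(S) - \Psi(S^i)| \le 2\beta + B(\Phi)/N$ that feed McDiarmid; these require the strong-convexity/Lipschitz argument for the perturbed minimizers, exactly as in the proof of Theorem~\ref{maintheorem1}. The only genuinely new ingredient relative to Theorem~\ref{maintheorem1} is that the boundedness hypothesis $\|x\|_2\le R$ must be transported to the representation, i.e.\ one needs $\|\Phi(x)\|_2 \le R$ so that the same constant $R$ appears in the stability coefficient; once this is assumed, the remaining manipulations (the discrepancy split and the arithmetic of plugging in the uniform weights) are routine.
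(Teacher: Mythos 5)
Your proposal is correct and follows essentially the same route as the paper's proof: the paper likewise splits the target risk via the mixture $\frac{N_\mathcal{T}}{N}\mathcal{L}_{\mathcal{D}_\mathcal{T}} + \frac{N_\mathcal{S}}{N}\mathcal{L}_{\mathcal{D}_\mathcal{S}}$ bounded by $\frac{N_\mathcal{S}}{N}\dist_\mathcal{Y}$, then applies the uniform-weight ($\gamma_i = 1/N$) specialization of the stability machinery from Theorem~\ref{maintheorem1} to get $\beta \le \rho^2R^2/(\lambda N)$ and the McDiarmid bound $\mathcal{L}_{\mathcal{D}}(h_S) \le \mathcal{L}_S(h_S) + \beta + (2N\beta + B)\sqrt{\log(1/\delta)/(2N)}$. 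Your explicit remark that the boundedness assumption must be transported to the representation (i.e.\ $\|\Phi(x)\|_2 \le R$) is a point the paper leaves implicit, but it does not change the argument.
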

Similarly, $B(\Phi)$ and hence $||h^*||_2$ can be upper bounded in the same way as in Lemma~\ref{lemmait}.
\begin{lemma}\label{lemmafr}
    Let $h^*$ be the optimal solution of the feature representation transfer learning problem (\ref{frobj1}), $h_\mathcal{T}$, $h_\mathcal{S}$, and $\nabla$ be defined as in Definition~\ref{deffr}. Then, $\|h^*\|_2$ can be upper bounded by (\ref{lemmaiteq}) as in Lemma~\ref{lemmait}.
\end{lemma}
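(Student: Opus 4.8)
The plan is to exploit the fact that the feature-representation objective (\ref{frobj1}) has exactly the same additive structure as the instance-weighting objective analyzed in Lemma~\ref{lemmait}, so the argument transfers essentially verbatim once the representation $\Phi$ is held fixed. The starting point is an algebraic decomposition of the full regularized objective into the two domain-specific objectives of Definition~\ref{deffr}. Since $\mathcal{R}(h) = \|h\|_2^2$ and $\mathcal{L}_S^\Phi(h) = \mathcal{L}_{S_\mathcal{T}}^\Phi(h) + \mathcal{L}_{S_\mathcal{S}}^\Phi(h)$, adding the two objectives gives $\mathcal{V}_\mathcal{S}(h) + \mathcal{V}_\mathcal{T}(h) = \mathcal{L}_S^\Phi(h) + 2\eta\lambda\|h\|_2^2$, so that the minimization target $F(h) := \mathcal{L}_S^\Phi(h) + \lambda\|h\|_2^2$ can be rewritten as $F(h) = \mathcal{V}_\mathcal{S}(h) + \mathcal{V}_\mathcal{T}(h) + (1-2\eta)\lambda\|h\|_2^2$. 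This identity is the only place where the feature-transfer setting enters, and it is immediate because $\Phi$ is fixed and does not interact with the regularizer.

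Next I would use the optimality of $h^*$ twice, testing it against the two minimizers $h_\mathcal{S}$ and $h_\mathcal{T}$. Since $h^*$ globally minimizes $F$, averaging the inequalities $F(h^*) \le F(h_\mathcal{S})$ and $F(h^*) \le F(h_\mathcal{T})$ yields $2F(h^*) \le F(h_\mathcal{S}) + F(h_\mathcal{T})$. Expanding the right-hand side with the decomposition identity and regrouping, the cross terms combine into the performance gap: by Definition~\ref{deffr}, $\mathcal{V}_\mathcal{S}(h_\mathcal{T}) + \mathcal{V}_\mathcal{T}(h_\mathcal{S}) = \nabla_\mathcal{S} + \nabla_\mathcal{T} + \mathcal{V}_\mathcal{S}(h_\mathcal{S}) + \mathcal{V}_\mathcal{T}(h_\mathcal{T}) = \nabla + \mathcal{V}_\mathcal{S}(h_\mathcal{S}) + \mathcal{V}_\mathcal{T}(h_\mathcal{T})$, so that $F(h_\mathcal{S}) + F(h_\mathcal{T}) = 2\mathcal{V}_\mathcal{S}(h_\mathcal{S}) + 2\mathcal{V}_\mathcal{T}(h_\mathcal{T}) + \nabla + (1-2\eta)\lambda(\|h_\mathcal{S}\|_2^2 + \|h_\mathcal{T}\|_2^2)$.

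Finally, I would lower-bound $2F(h^*)$ using the decomposition together with the defining optimality of $h_\mathcal{S}$ and $h_\mathcal{T}$ for their respective objectives, namely $\mathcal{V}_\mathcal{S}(h^*) \ge \mathcal{V}_\mathcal{S}(h_\mathcal{S})$ and $\mathcal{V}_\mathcal{T}(h^*) \ge \mathcal{V}_\mathcal{T}(h_\mathcal{T})$, which gives $2F(h^*) \ge 2\mathcal{V}_\mathcal{S}(h_\mathcal{S}) + 2\mathcal{V}_\mathcal{T}(h_\mathcal{T}) + 2(1-2\eta)\lambda\|h^*\|_2^2$. Chaining this lower bound against the upper bound from the previous step cancels the common terms $2\mathcal{V}_\mathcal{S}(h_\mathcal{S}) + 2\mathcal{V}_\mathcal{T}(h_\mathcal{T})$ and leaves $2(1-2\eta)\lambda\|h^*\|_2^2 \le \nabla + (1-2\eta)\lambda(\|h_\mathcal{S}\|_2^2 + \|h_\mathcal{T}\|_2^2)$; dividing by $2(1-2\eta)\lambda$ (positive since $\eta \in [0,\tfrac{1}{2})$) and taking square roots recovers (\ref{lemmaiteq}). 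There is no substantive obstacle here: the proof of Lemma~\ref{lemmait} applies word for word, the only thing to verify being that the decomposition identity survives the substitution of the representation-transformed losses $\mathcal{L}^\Phi$ for the weighted losses $\mathcal{L}^\Gamma$, which it does because the regularizer $\|h\|_2^2$ is unchanged and $\Phi$ is treated as fixed throughout.
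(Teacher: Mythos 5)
Your proposal is correct and follows essentially the same route as the paper: the paper gives no separate proof for this lemma but declares that the argument of Lemma~\ref{lemmait} carries over once $\mathcal{L}^\Phi$ replaces $\mathcal{L}^\Gamma$, and your proof is exactly that argument---the additive decomposition $\mathcal{V}_\mathcal{S}+\mathcal{V}_\mathcal{T}+(1-2\eta)\lambda\|\cdot\|_2^2$, the optimality of $h^*$ tested against $h_\mathcal{S}$ and $h_\mathcal{T}$, and the optimality of $h_\mathcal{S}, h_\mathcal{T}$ for their own objectives. The only difference is cosmetic: you sum the two inequalities $F(h^*)\le F(h_\mathcal{S})$ and $F(h^*)\le F(h_\mathcal{T})$ before regrouping the cross terms into $\nabla$, whereas the paper derives the two one-sided bounds (its inequalities (\ref{tfineq4}) and (\ref{tfineq5})) separately and then averages them.
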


Note that, unlike in Theorem~\ref{maintheorem1}, where the $\mathcal{Y}$-discrepancy is a constant independent of the algorithm, here it is defined over the learned feature representation $\Phi$, and can therefore be optimized. However, estimating  $\dist_\mathcal{Y}(\mathcal{D}_{\mathcal{T}}^\Phi, \mathcal{D}_{\mathcal{S}}^\Phi)$ from finite samples is challenging in practice. In~\citep{wang2019transfer}, it has been shown that in a binary classification problem, the $\mathcal{Y}$-discrepancy can be upper bounded from a finite sample by constructing a classification problem, where the positive target examples and negative source examples are positively labeled, and the negative target examples and positive source examples are negatively labeled. However, how to extend this notion to the more general multiclass classification problem remains elusive. Moreover, it has been shown recently that aligning two distributions by minimizing the $\mathcal{Y}$-discrepancy can be problematic when dealing with multiple tasks~\citep{mansour2021theory}. Therefore, in most existing works, the  feature representation is learned by minimizing $\dist(\mathcal{D}_{\mathcal{T}}^\Phi, \mathcal{D}_{\mathcal{S}}^\Phi)$ and its variants~\citep{ganin2016domain,redko2017theoretical}, defined as follows.

\begin{definition}[\bf Discrepancy]
Let $\mathcal{H}$ be a hypothesis class mapping $\mathcal{X}$ to $\mathcal{Y}$ and let \mbox{$\ell: \mathcal{Y} \times \mathcal{Y} \rightarrow \mathbb{R}_+$} define a loss function over $\mathcal{Y}$. The discrepancy distance between two distributions $\mathcal{D}_1$ and $\mathcal{D}_2$ over $\mathcal{X}$ is defined by:
\begin{align*}
        \dist(\mathcal{D}_1,\mathcal{D}_2) = \sup_{h,h' \in \mathcal{H}} \left| \mathcal{L}_{\mathcal{D}_1} (h,h') - \mathcal{L}_{\mathcal{D}_2} (h,h') \right|\enspace,
    \end{align*}
where we have slightly abused our notation without creating confusion by making $\mathcal{L}_{\mathcal{D}} (h,h^\prime) = \mathbb{E}_{x\sim \mathcal{D}} \left[\ell(h(x),h^\prime(x)) \right]$.
\end{definition}

If there exists an underlying labeling function $f$ (i.e., a deterministic scenario) in the hypothesis class $\mathcal{H}$ ($f\in\mathcal{H}$), then $\dist_\mathcal{Y}(\mathcal{D}_1,\mathcal{D}_2)$ can be bounded by $\dist(\mathcal{D}_1,\mathcal{D}_2)$~\citep{mohri2012new}.

\begin{remark}
    While $\dist(\mathcal{D}_1,\mathcal{D}_2)$ can be accurately and  efficiently estimated from finite samples by adversarial training~\citep{ben2010theory,ganin2016domain}, it is only defined over the marginal distribution of input features and therefore does not leverage label information. In other words, neither the discrepancy nor the $\mathcal{Y}$-discrepancy can motivate concrete and efficient algorithms that leverage the label information for knowledge transfer in general. Gap minimization indicates a new principle to leverage the label information in the target domain, which is complementary to existing transfer learning strategies. Specifically, in addition to learning an invariant feature space across source and target domains (e.g., adversarially), a good representation should also minimize the performance gap between the domains.
\end{remark}

\subsubsection{Hypothesis Transfer}\label{sec:htl}

In hypothesis transfer, we consider  the setting where we are given a training sample $S_\mathcal{T}$ from the target domain and multiple hypotheses $\{h_k\}_{k=1}^K$ learned from $K$ source domains as in~\citep{kuzborskij2013stability,kuzborskij2017fast}. Let  $\Xi = [\xi_1,\dots,\xi_K]^\top$ be fixed weights of the source hypotheses.  Then, the objective function of hypothesis transfer is given by:
\begin{align}\label{htfobj1}
    \min_{h\in \mathcal{H}}  \mathcal{L}_{S_\mathcal{T}}(h)+  \lambda \mathcal{R}^\Xi(h)\enspace,
\end{align}
where $R^\Xi(h) = ||h-\langle H, \Xi \rangle||_2^2$, $H=[h_1,\dots, h_K]^\top$ is the ensemble of $K$ hypotheses. 

For hypothesis transfer, the performance gap can be defined as follows.
\begin{definition}[{\bf Performance gap for hypothesis transfer}]\label{it}
  Let $h_{\mathcal{T}}$ be the minimizer of $\mathcal{L}_{S_\mathcal{T}}(h)$, the empirical loss in the target domain. The performance gap of hypothesis transfer is defined as:
   \begin{align*}
       \nabla = \mathcal{L}_{S_\mathcal{T}}(\langle H, \Xi \rangle)  - \mathcal{L}_{S_\mathcal{T}}(h_\mathcal{T})\enspace.
   \end{align*}
\end{definition}
\begin{theorem}\label{maintheorem3}
Let $h^*$ be the optimal solution of the transfer learning problem (\ref{htfobj1}). Then, for any $\delta \in (0,1)$, with probability at least $1-\delta$, we have:
\begin{align*}
      \mathcal{L}_{\mathcal{D}_\mathcal{T}}(h^*) & \le \mathcal{L}_{S_\mathcal{T}} (h^*)   + \frac{ \rho^2 R^2}{\lambda N_\mathcal{T}} + \left(\frac{ 2\rho^2 R^2}{\lambda}  + {B(\Xi)}\right)\sqrt{\frac{\log \frac{1}{\delta}}{2N_\mathcal{T}}}\enspace.
\end{align*}
\end{theorem}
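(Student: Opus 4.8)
The plan is to treat the hypothesis-transfer objective (\ref{htfobj1}) as an instance of regularized empirical risk minimization and to control its generalization error through algorithmic stability, exactly in the Bousquet--Elisseeff framework underlying Definition~\ref{definition1}. The crucial observation is that the regularizer $\mathcal{R}^\Xi(h) = \|h - \langle H, \Xi\rangle\|_2^2$ differs from the ordinary Tikhonov penalty $\|h\|_2^2$ only by recentering at the \emph{fixed} source ensemble $\langle H, \Xi\rangle$. Since a constant shift leaves the Hessian (hence the curvature) unchanged, the objective $F_{S_\mathcal{T}}(h) = \mathcal{L}_{S_\mathcal{T}}(h) + \lambda\|h - \langle H, \Xi\rangle\|_2^2$ is still $2\lambda$-strongly convex. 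All the machinery governing the stability of single-task regularized ERM therefore carries over unchanged, and the entire influence of the transferred hypotheses is deferred to the loss bound $B(\Xi)$.

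First I would derive the stability coefficient. Writing $h^\ast = h_{S_\mathcal{T}}$ and $h_{S_\mathcal{T}^i}$ for the minimizers of $F_{S_\mathcal{T}}$ on a sample and on its $i$-th perturbation, I would add the two strong-convexity inequalities at each minimizer to get $2\lambda\|h_{S_\mathcal{T}} - h_{S_\mathcal{T}^i}\|_2^2 \le \bigl[F_{S_\mathcal{T}}(h_{S_\mathcal{T}^i}) - F_{S_\mathcal{T}}(h_{S_\mathcal{T}})\bigr] + \bigl[F_{S_\mathcal{T}^i}(h_{S_\mathcal{T}}) - F_{S_\mathcal{T}^i}(h_{S_\mathcal{T}^i})\bigr]$. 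Because the two objectives differ only in the single replaced example, the right-hand side collapses to a difference of losses at that example; invoking the $\rho$-Lipschitz property of Definition~\ref{definition2} together with $\|x\|_2 \le R$ bounds it by $\frac{2\rho R}{N_\mathcal{T}}\|h_{S_\mathcal{T}} - h_{S_\mathcal{T}^i}\|_2$, which yields $\|h_{S_\mathcal{T}} - h_{S_\mathcal{T}^i}\|_2 \le \frac{\rho R}{\lambda N_\mathcal{T}}$. Applying Lipschitzness once more gives the uniform stability coefficient $\beta \le \frac{\rho^2 R^2}{\lambda N_\mathcal{T}}$.

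Finally I would substitute this $\beta$ and the loss bound $B(\Xi)$ into the standard stability-based generalization inequality $\mathcal{L}_{\mathcal{D}_\mathcal{T}}(h^\ast) \le \mathcal{L}_{S_\mathcal{T}}(h^\ast) + \beta + (2N_\mathcal{T}\beta + B(\Xi))\sqrt{\log(1/\delta)/(2N_\mathcal{T})}$, which holds with probability at least $1-\delta$ by McDiarmid's bounded-differences inequality applied to the generalization defect of a $\beta$-stable algorithm. Substituting $\beta = \frac{\rho^2 R^2}{\lambda N_\mathcal{T}}$ turns the standalone term into $\frac{\rho^2 R^2}{\lambda N_\mathcal{T}}$ and the coefficient into $2N_\mathcal{T}\beta = \frac{2\rho^2 R^2}{\lambda}$, reproducing the claimed bound verbatim.

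The steps are individually routine, so the one point I would be most careful to justify is the first paragraph's claim that recentering the regularizer at $\langle H, \Xi\rangle$ neither weakens the strong convexity nor inflates the stability constant. This decoupling is exactly what separates the convergence behavior of hypothesis transfer from the quality of the source hypotheses and confines their effect to $B(\Xi)$, which an analog of Lemma~\ref{lemmait} and Corollary~\ref{corollaryit} would in turn bound in terms of the performance gap $\nabla$.
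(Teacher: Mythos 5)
Your proposal is correct and follows essentially the same route as the paper, which simply invokes Theorem 14.2 (the McDiarmid-based generalization bound for $\beta$-stable algorithms) and Proposition 14.4 (stability of regularized ERM) of \citet{mohri2018foundations}; your derivation spells out exactly those two ingredients, including the key observation that recentering the regularizer at the fixed ensemble $\langle H, \Xi\rangle$ leaves the strong convexity, and hence the stability coefficient $\beta \le \rho^2 R^2/(\lambda N_\mathcal{T})$, unchanged. This recentering point is precisely what makes the paper's citation legitimate, so your write-up is a faithful (and more self-contained) version of the paper's omitted proof.
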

\begin{remark}
As no source sample is available, the  convergence rate of $\beta$ has the order of $\mathcal{O}(\frac{1}{N_\mathcal{T}})$, which has been shown in~\citep{perrot2015theoretical}. It has been proved that a fast convergence rate can be achieved under certain conditions~\citep{kuzborskij2013stability,kuzborskij2017fast}, but the analysis therein is restricted to linear regression. More importantly, their theoretical results do not motivate any approach to leveraging the knowledge of multiple source hypotheses. In contrast, the following Lemma indicates a finer bound on the model complexity of $h^*$ and suggests a principled scheme combining source hypotheses.
\end{remark}

\begin{lemma}\label{lemmahtf}
    Let $h^*$ be the optimal solution of the hypothesis transfer learning problem (\ref{htfobj1}). Then, we have:
    \begin{align}\label{lemmahtfeq}
           ||h^*||_2 \le \sqrt{\frac{\nabla}{\lambda}} + ||\langle H, \Xi \rangle||_2\enspace.
    \end{align}
\end{lemma}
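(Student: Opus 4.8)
The plan is to exploit the two optimality conditions built into the statement together with the fact that the regularizer $\mathcal{R}^\Xi(h) = \|h - \langle H, \Xi\rangle\|_2^2$ vanishes precisely at the ensemble $\langle H, \Xi\rangle$. Write $\bar{h} = \langle H, \Xi\rangle$ for brevity and let $J(h) = \mathcal{L}_{S_\mathcal{T}}(h) + \lambda\|h - \bar{h}\|_2^2$ denote the objective in (\ref{htfobj1}). Since $h^*$ minimizes $J$, I would first compare $J(h^*)$ against the value of $J$ at the ensemble itself, $J(\bar{h}) = \mathcal{L}_{S_\mathcal{T}}(\bar{h})$, because the regularization term is zero there. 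This yields $\mathcal{L}_{S_\mathcal{T}}(h^*) + \lambda\|h^* - \bar{h}\|_2^2 \le \mathcal{L}_{S_\mathcal{T}}(\bar{h})$, hence $\lambda\|h^* - \bar{h}\|_2^2 \le \mathcal{L}_{S_\mathcal{T}}(\bar{h}) - \mathcal{L}_{S_\mathcal{T}}(h^*)$.

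The second step uses the optimality of $h_\mathcal{T}$ as the minimizer of the unregularized empirical target loss. Since $\mathcal{L}_{S_\mathcal{T}}(h_\mathcal{T}) \le \mathcal{L}_{S_\mathcal{T}}(h^*)$, I would replace $-\mathcal{L}_{S_\mathcal{T}}(h^*)$ by the larger quantity $-\mathcal{L}_{S_\mathcal{T}}(h_\mathcal{T})$ on the right-hand side, giving $\lambda\|h^* - \bar{h}\|_2^2 \le \mathcal{L}_{S_\mathcal{T}}(\bar{h}) - \mathcal{L}_{S_\mathcal{T}}(h_\mathcal{T})$. The right-hand side is exactly the performance gap $\nabla$ from the definition preceding the lemma, so after dividing by $\lambda$ and taking square roots I obtain $\|h^* - \bar{h}\|_2 \le \sqrt{\nabla/\lambda}$.

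The final step is the triangle inequality: $\|h^*\|_2 \le \|h^* - \bar{h}\|_2 + \|\bar{h}\|_2 \le \sqrt{\nabla/\lambda} + \|\langle H, \Xi\rangle\|_2$, which is the claimed bound (\ref{lemmahtfeq}). I do not anticipate a genuine obstacle here; the only point requiring care is recognizing that evaluating the objective at the center $\bar{h}$ of the regularizer is what eliminates the quadratic penalty and exposes the gap, and that both inequalities used ($J(h^*)\le J(\bar{h})$ and $\mathcal{L}_{S_\mathcal{T}}(h_\mathcal{T})\le \mathcal{L}_{S_\mathcal{T}}(h^*)$) are immediate from the respective definitions of minimizers. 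Notably, no Lipschitz, boundedness, or stability assumptions enter, in contrast to the loss bounds derived elsewhere in this section; the argument is purely a consequence of the quadratic structure of $\mathcal{R}^\Xi$ and the two optimality conditions.
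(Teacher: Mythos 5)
Your proposal is correct and follows essentially the same route as the paper's proof: compare the objective at $h^*$ against its value at the ensemble $\langle H, \Xi\rangle$ (where the regularizer vanishes), apply the triangle inequality, and replace $\mathcal{L}_{S_\mathcal{T}}(h^*)$ by the smaller $\mathcal{L}_{S_\mathcal{T}}(h_\mathcal{T})$ to expose the gap $\nabla$. The only difference is the trivial ordering of the last two steps, so there is nothing to add.
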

\begin{remark}
Lemma~\ref{lemmahtf} indicates that in order to regularize the target hypothesis, one should assign the weights of source hypotheses in a way such that it can approach towards the target optimal solution as much as possible. When the number of tasks is smaller than the input feature dimension, it can be viewed as a low-dimensional embedding of the target solution into the space spanned by source hypotheses~\citep{maurer2013sparse,maurer2014aninequality}.
\end{remark}

\subsection{Multitask Learning}\label{multitask}
\subsubsection{Task Weighting}\label{sec:twmtl}
The first multitask learning approach we investigate is \emph{task weighting}~\citep{shui2019principled},  where the objective function of the $j$-th task is:
  \begin{align}\label{twmobj1}
      \min_{h\in \mathcal{H}} \mathcal{L}_S^{\Gamma^j}(h) + \lambda \mathcal{R}(h)\enspace,
  \end{align}
where $\mathcal{L}_{S_k}(h) =\frac{1}{N}\sum_{i=1}^{N} \ell (h(x^k_i),y^k_i)$, and $\mathcal{L}_S^{\Gamma^j}(h)=\sum_{k=1}^K \gamma_k^j\mathcal{L}_{S_k}(h)$. $\Gamma^j = [\gamma^j_1,\dots, \gamma^j_K]^\top\in \Upsilon^K =\{ \gamma^j_k\ge 0, \sum_{k=1}^K \gamma^j_k = 1\}$ are the task relation coefficients for the $j$-th task. Note that $\Gamma^j$ is task-dependent, and we do not force $\gamma^j_i = \gamma^i_j$. In other words, (\ref{twmobj1}) can capture asymmetric task relationships.
 
For the task weighting approach to multitask learning, the performance gap is defined as follows.
\begin{definition}[{\bf Performance gap for task weighting}] 
Let $\mathcal{V}_k(h) = \mathcal{L}_{S_k}(h)+\eta \lambda \mathcal{R}(h)$ be the loss function of the $k$-th task, where $\eta \in [0,1)$, and let $\bar{h}_k$ be its minimizer. For any given simplex $\Gamma^j \in \Upsilon^K$. The performance gap of task weighting with respect to the $j$-th task is defined as:
  \begin{align*}
      \nabla_j =  \sum_{k\neq j}  \gamma_k^j \left[ \mathcal{V}_k(\bar{h}_j) -\mathcal{V}_k(\bar{h}_k) \right]\enspace.
  \end{align*}
  Note that the performance gap of the $j$-th task  is defined in terms of the losses over the other tasks $k\neq j$.
\end{definition}
\begin{theorem}\label{twmtheorem}
    Let $h_j^*$ be the optimal solution of the task weighting multitask learning problem (\ref{twmobj1}). 
    Then, for any $\delta \in (0,1)$, with probability at least $1-\delta$, we have:
    \begin{align*}
        & \mathcal{L}_{\mathcal{D}_j} (h_j^*) \le \mathcal{L}_S^{\Gamma^j}(h_j^*) + \sum_{k\neq j}\gamma_k^j\text{dist}_\mathcal{Y}(\mathcal{D}_j, \mathcal{D}_k) + \frac{ \rho^2 R^2 ||{\Gamma}^j||_\infty}{\lambda N} \\
        & + \left(  \frac{ (||{\Gamma}^j||_\infty + ||\Gamma^j||_2^2)\rho^2 R^2 }{\lambda N } + \frac{||{\Gamma}^j||_\infty}{N} B(\Gamma^j)\right)\sqrt{\frac{KN\log\frac{1}{\delta}}{2}}\enspace.
    \end{align*}
\end{theorem}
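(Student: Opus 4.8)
The plan is to follow the same template as the proof of Theorem~\ref{maintheorem1}, regarding the $KN$ training examples $\{z_i^k\}$ as a single weighted sample in which the example $z_i^k$ carries effective weight $\gamma_k^j/N$ (since $\mathcal{L}_S^{\Gamma^j}(h)=\sum_{k}\gamma_k^j\frac{1}{N}\sum_i\ell(h(x_i^k),y_i^k)$). Introducing the expected weighted loss $\mathcal{L}_{\mathcal{D}}^{\Gamma^j}(h)=\sum_{k=1}^K\gamma_k^j\mathcal{L}_{\mathcal{D}_k}(h)$, I would first split
$$\mathcal{L}_{\mathcal{D}_j}(h_j^*)-\mathcal{L}_S^{\Gamma^j}(h_j^*) = \underbrace{\big[\mathcal{L}_{\mathcal{D}_j}(h_j^*)-\mathcal{L}_{\mathcal{D}}^{\Gamma^j}(h_j^*)\big]}_{\text{(I) discrepancy}} + \underbrace{\big[\mathcal{L}_{\mathcal{D}}^{\Gamma^j}(h_j^*)-\mathcal{L}_S^{\Gamma^j}(h_j^*)\big]}_{\text{(II) generalization}},$$
and bound the two terms separately.

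For term~(I), I would use $\sum_{k=1}^K\gamma_k^j=1$ to rewrite it as $\sum_{k\ne j}\gamma_k^j\big[\mathcal{L}_{\mathcal{D}_j}(h_j^*)-\mathcal{L}_{\mathcal{D}_k}(h_j^*)\big]$, so that the $k=j$ summand cancels. Each remaining summand is at most $\sup_{h\in\mathcal{H}}|\mathcal{L}_{\mathcal{D}_j}(h)-\mathcal{L}_{\mathcal{D}_k}(h)|=\dist_{\mathcal{Y}}(\mathcal{D}_j,\mathcal{D}_k)$ directly from the definition of $\mathcal{Y}$-discrepancy, which yields the term $\sum_{k\ne j}\gamma_k^j\dist_{\mathcal{Y}}(\mathcal{D}_j,\mathcal{D}_k)$.

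For term~(II), I would first establish the weight-dependent stability coefficients by the same strong-convexity-plus-Lipschitz argument used for Theorem~\ref{maintheorem1}: since $\lambda\mathcal{R}(h)=\lambda\|h\|_2^2$ is strongly convex and perturbing $z_i^k$ changes the objective's gradient by at most $(\gamma_k^j/N)\rho R$, the minimizer moves by $\mathcal{O}((\gamma_k^j/N)\rho R/\lambda)$ and hence $\beta_i^k\le \gamma_k^j\rho^2R^2/(\lambda N)$. Setting $\beta=\max_{i,k}\beta_i^k=\|\Gamma^j\|_\infty\rho^2R^2/(\lambda N)$ and $\Delta=\sum_{k,i}(\gamma_k^j/N)\beta_i^k=\|\Gamma^j\|_2^2\rho^2R^2/(\lambda N)$, I would then apply McDiarmid's inequality to $\Psi(S)=\mathcal{L}_{\mathcal{D}}^{\Gamma^j}(h_j^*)-\mathcal{L}_S^{\Gamma^j}(h_j^*)$. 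The expectation $\mathbb{E}_S[\Psi]$ is controlled by $\beta$ through the standard resampling (``swap'') argument, producing the standalone term $\rho^2R^2\|\Gamma^j\|_\infty/(\lambda N)$. For the concentration, replacing a single $z_i^k$ changes $\Psi$ by at most $\beta+\Delta+(\|\Gamma^j\|_\infty/N)B(\Gamma^j)$, following the same decomposition as in Theorem~\ref{maintheorem1}: the population part contributes the stability term, the empirical part splits into a stability contribution and a direct contribution $(\|\Gamma^j\|_\infty/N)B(\Gamma^j)$ from the single changed summand. Summing the squares of the per-coordinate constants over all $KN$ coordinates and invoking McDiarmid produces the factor $\sqrt{KN\log(1/\delta)/2}$ with exactly this coefficient. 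Combining (I) and (II) gives the stated bound.

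The main obstacle I anticipate is the bounded-difference bookkeeping in term~(II): one must track how a single-example perturbation propagates through the \emph{weighted} empirical and population losses and argue that the resulting per-coordinate constant can be written uniformly as $\beta+\Delta+(\|\Gamma^j\|_\infty/N)B(\Gamma^j)$, so that the sum of squares collapses to $KN$ times its square. The discrepancy decomposition and the stability estimate are routine once the correct per-example weighting $\gamma_k^j/N$ is fixed; the delicate point is ensuring that the $\|\Gamma^j\|_\infty$, $\|\Gamma^j\|_2^2$, and total-sample-size $KN$ factors emerge with the right constants, mirroring the algebra already carried out for Theorem~\ref{maintheorem1}.
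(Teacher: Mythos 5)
Your proposal is correct and follows essentially the same route as the paper: the same decomposition into a weighted $\mathcal{Y}$-discrepancy term (using $\sum_k\gamma_k^j=1$ so the $k=j$ summand cancels) plus a weighted generalization term, the same weight-dependent stability coefficients $\beta_i^k\le\gamma_k^j\rho^2R^2/(\lambda N)$ obtained from the Bregman/strong-convexity argument, and the same McDiarmid bookkeeping yielding the $\beta$, $\Delta$, and $(\|\Gamma^j\|_\infty/N)B(\Gamma^j)$ constants with the $\sqrt{KN\log(1/\delta)/2}$ factor. The only cosmetic difference is that you re-derive the weighted-sample concentration step inline, whereas the paper simply invokes it as Lemma~\ref{lemmaas} applied to the $KN$ examples with per-example weights $\gamma_k^j/N$.
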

\begin{remark}
By leveraging the instances from other tasks, the weighted multitask learning problem (\ref{twmobj1}) may achieve a fast convergence rate of $\mathcal{O}(\frac{1}{\sqrt{KN}})$ when $\gamma^j_k = \frac{1}{K}$. On the other hand,  Theorem~\ref{twmtheorem} suggests that $\mathcal{L}_S^{\Gamma^j}(h_j^*)$ and  $\text{dist}_\mathcal{Y}(\mathcal{D}_j, \mathcal{D}_k)$ should also be taken into consideration in order to minimize the upper bound in Theorem~\ref{twmtheorem}. Specifically,  $\Gamma^j$ is chosen by balancing the trade-off between $\mathcal{L}_S^{\Gamma^j}(h)$, $\sum_{k\neq j}\gamma_k^j \text{dist}_\mathcal{Y}(\mathcal{D}_j, \mathcal{D}_k)$,  $||{\Gamma}^j||_\infty$, and $||\Gamma^j||_2^2$. Intuitively, the task coefficients $\{\gamma_k^j\}_{k=1}^K$ should capture the relatedness between tasks. Therefore, one should assign a small value to $\gamma_k^j$ if $\mathcal{L}_{S_k}(h_j^*)$ and $\text{dist}_\mathcal{Y}(\mathcal{D}_j,\mathcal{D}_k)$ are large. \citep{shui2019principled} replace $\text{dist}_\mathcal{Y}(\mathcal{D}_j,\mathcal{D}_k)$ by $\text{dist}(\mathcal{D}_j,\mathcal{D}_k)$ for computing $\gamma_k^j$, and therefore the label information is ignored when measuring the distance between tasks. As complementary to this approach, the following Lemma suggests an additional criterion for learning $\Gamma^j$. 
\end{remark}

\begin{lemma}\label{lemmatwm}
    Let $h^*_j$ be the optimal solution of the task weighting multitask learning problem (\ref{twmobj1}). Then, we have:
    \begin{align}\label{lemmatwmeq}
           ||h^*_j||_2 \le \sqrt{\frac{\nabla_j}{\lambda(1-\eta)}+||\bar{h}_j||_2^2}\enspace.
    \end{align}
\end{lemma}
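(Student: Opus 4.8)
The plan is to mirror the argument behind Lemma~\ref{lemmait}, exploiting that $h^*_j$ globally minimizes the regularized objective while each $\bar h_k$ minimizes the per-task objective $\mathcal{V}_k$. The first step is a purely algebraic rewriting of the objective. Writing $F_j(h) = \mathcal{L}_S^{\Gamma^j}(h) + \lambda\mathcal{R}(h) = \sum_{k=1}^K \gamma_k^j \mathcal{L}_{S_k}(h) + \lambda\|h\|_2^2$ and using the simplex constraint $\sum_{k=1}^K \gamma_k^j = 1$ together with the identity $\mathcal{L}_{S_k}(h) = \mathcal{V}_k(h) - \eta\lambda\|h\|_2^2$, I would re-express the objective as $F_j(h) = \sum_{k=1}^K \gamma_k^j \mathcal{V}_k(h) + (1-\eta)\lambda\|h\|_2^2$. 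This is valid because the $\eta\lambda\|h\|_2^2$ terms aggregate with total weight $\sum_k \gamma_k^j = 1$, leaving a residual $(1-\eta)\lambda\|h\|_2^2$ penalty. This representation is the crucial reformulation that simultaneously exposes the per-task objectives $\mathcal{V}_k$ and an isolated complexity penalty in $\|h\|_2^2$.

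Next I would sandwich $F_j(h^*_j)$ between two quantities. For the lower bound, since $\bar h_k$ minimizes $\mathcal{V}_k$, we have $\mathcal{V}_k(h^*_j) \ge \mathcal{V}_k(\bar h_k)$ for every $k$, hence $F_j(h^*_j) \ge \sum_k \gamma_k^j \mathcal{V}_k(\bar h_k) + (1-\eta)\lambda\|h^*_j\|_2^2$, which rearranges to $(1-\eta)\lambda\|h^*_j\|_2^2 \le F_j(h^*_j) - \sum_k \gamma_k^j \mathcal{V}_k(\bar h_k)$. For the upper bound, I would use optimality of $h^*_j$ and evaluate the objective at the single test point $\bar h_j$, giving $F_j(h^*_j) \le F_j(\bar h_j) = \sum_k \gamma_k^j \mathcal{V}_k(\bar h_j) + (1-\eta)\lambda\|\bar h_j\|_2^2$.

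Combining these two bounds yields $(1-\eta)\lambda\|h^*_j\|_2^2 \le \sum_{k=1}^K \gamma_k^j\bigl[\mathcal{V}_k(\bar h_j) - \mathcal{V}_k(\bar h_k)\bigr] + (1-\eta)\lambda\|\bar h_j\|_2^2$. The final step is to recognize that the $k=j$ summand vanishes identically, since $\mathcal{V}_j(\bar h_j) - \mathcal{V}_j(\bar h_j) = 0$, so the sum over all $k$ collapses to the sum over $k\neq j$, which is exactly the performance gap $\nabla_j$. Since $\eta \in [0,1)$ guarantees $(1-\eta)\lambda > 0$, dividing through and taking square roots gives the claimed bound $\|h^*_j\|_2 \le \sqrt{\nabla_j/(\lambda(1-\eta)) + \|\bar h_j\|_2^2}$.

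I do not anticipate a serious obstacle here: unlike Lemma~\ref{lemmait}, no convexity or averaging of two distinct minimizers is required, because $\nabla_j$ is defined asymmetrically around the single anchor $\bar h_j$. The one point I would double-check is that evaluating $F_j$ precisely at $\bar h_j$ (rather than at a mixture of minimizers) is exactly what aligns the residual $\sum_k \gamma_k^j[\mathcal{V}_k(\bar h_j) - \mathcal{V}_k(\bar h_k)]$ with the definition of $\nabla_j$, and that the vanishing of the $k=j$ term is what legitimately converts the sum over all $k$ into the sum over $k\neq j$ appearing in the performance gap.
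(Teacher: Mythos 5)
Your proof is correct and follows essentially the same route as the paper's: both rely on the per-task optimality $\mathcal{V}_k(\bar{h}_k) \le \mathcal{V}_k(h_j^*)$ summed with weights $\gamma_k^j$, the global optimality $F_j(h_j^*) \le F_j(\bar{h}_j)$, and the vanishing of the $k=j$ summand to identify $\nabla_j$. Your upfront identity $F_j(h) = \sum_{k} \gamma_k^j \mathcal{V}_k(h) + (1-\eta)\lambda\|h\|_2^2$ is merely a cleaner packaging of the algebra the paper carries out inline.
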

\begin{remark}
Similar to Lemma~\ref{lemmait}, Lemma~\ref{lemmatwm} indicates that merely balancing the trade-off between assigning balanced weights to tasks and assigning more weight to the $j$-th task can still be insufficient to achieve a high generalization performance due to the performance gap. Intuitively, Lemma~\ref{lemmatwm} implies another principle that one should also assign more weight to the tasks over which $\bar{h}_j$ performs well (i.e., $\mathcal{V}_k(\bar{h}_j)$ is small). 
\end{remark}

{\bf \noindent Integrating with Representation Learning }
The task weighting approach can also be integrated with representation learning for multitask learning~\citep{shui2019principled}. Specifically, the objective function becomes:
  \begin{align}\label{twrlmobj1}
      \min_{h\in \mathcal{H}} \mathcal{L}_S^{\Gamma^j,\Phi}(h) + \lambda \mathcal{R}(h)\enspace,
  \end{align}
  where $\mathcal{L}^\Phi_{S_k}(h) = \frac{1}{N}\sum_{i=1}^{N}\ell(h(\Phi(x_i^k)),y_i^k)$, $\mathcal{L}_S^{\Gamma^j,\Phi}(h) = \sum_{k=1}^K \gamma_k^j \mathcal{L}^\Phi_{S_k}(h)$. Also, let $\mathcal{V}_k^\Phi(h) = \mathcal{L}^\Phi_{S_k}(h) + \eta \lambda \mathcal{R}(h)$ and $\nabla_j(\Phi,\Gamma^j) = \sum_{k \neq j} \gamma_k^j\left[\mathcal{V}_k^\Phi (\bar{h}_j) - \mathcal{V}_k^\Phi (\bar{h}_k) \right]$, where $\bar{h}_k$ is the minimizer of  $\mathcal{V}_k^\Phi(h)$. Note that given a fixed set of weights $\Gamma^j$, $\nabla_j$ is only a function of $\Phi$, not $h$. Then, we can obtain the generalization bound for task weighting multitask representation learning.

\begin{corollary}\label{twrlmcor1}
Let $h_j^*$ be the optimal solution of the task weighting multitask learning problem (\ref{twrlmobj1}). 
    Then, for any $\delta \in (0,1)$, with probability at least $1-\delta$, we have
    \begin{align*}
         \mathcal{L}_{\mathcal{D}_j} (h_j^*) & \le \mathcal{L}_S^{\Gamma^j,\Phi}(h_j^*) + \sum_{k\neq j}\gamma_k^j\text{dist}_\mathcal{Y}(\mathcal{D}_j^{\Phi}, \mathcal{D}_k^{\Phi}) + \frac{ \rho^2 R^2 ||{\Gamma}^j||_\infty}{\lambda N} \\
        & + \left(  \frac{ (||{\Gamma}^j||_\infty +||\Gamma^j||_2^2)\rho^2 R^2 }{\lambda N} + \frac{||{\Gamma}^j||_\infty}{N} B(\Phi,\Gamma^j)\right)\sqrt{\frac{KN\log\frac{1}{\delta}}{2}} \enspace,
    \end{align*}
    where  $\dist_\mathcal{Y}(\mathcal{D}^{\Phi}_j, \mathcal{D}^{\Phi}_k)$ is the $\mathcal{Y}$-discrepancy between tasks $j$ and $k$ over $\mathcal{Z} \times \mathcal{Y}$ induced by $\Phi$.  Besides, the model complexity can be bounded in the same way as in Eq.~(\ref{lemmatwmeq}).
\end{corollary}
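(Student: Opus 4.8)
The plan is to recognize that Corollary~\ref{twrlmcor1} is not a genuinely new result but a direct lift of Theorem~\ref{twmtheorem} through the fixed feature map $\Phi$. For any fixed representation function $\Phi$, the objective (\ref{twrlmobj1}) is literally the objective (\ref{twmobj1}) evaluated on the transformed training set $\{(\Phi(x_i^k), y_i^k)\}_{i,k}$, since $\mathcal{L}^\Phi_{S_k}(h) = \frac{1}{N}\sum_i \ell(h(\Phi(x_i^k)), y_i^k)$ is exactly the task-$k$ empirical loss of a linear hypothesis $h$ over $\mathcal{Z}$ applied to the pushed-forward examples, while the regularizer $\lambda\mathcal{R}(h) = \lambda\|h\|_2^2$ is unchanged. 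Conditioned on $\Phi$, the learning problem therefore has the identical form to the one analyzed in Theorem~\ref{twmtheorem}, with the input space $\mathcal{X}$ replaced by the representation space $\mathcal{Z}$ and each task distribution $\mathcal{D}_k$ replaced by its pushforward $\mathcal{D}_k^\Phi$ over $\mathcal{Z}\times\mathcal{Y}$.

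First I would verify that the two structural assumptions underlying Theorem~\ref{twmtheorem} survive the substitution. The $\rho$-Lipschitz continuity of $\ell$ (Definition~\ref{definition2}) is a property of the loss as a function of the hypothesis output $h(\cdot)$ and is unaffected by how its argument is produced, so it holds verbatim for $h\circ\Phi$. The boundedness requirement $\|x\|_2\le R$ must now be read as $\|\Phi(x)\|_2\le R$ for all $x\in\mathcal{X}$; this is precisely the convention already adopted for the feature-representation transfer results (Theorem~\ref{maintheorem2} and Lemma~\ref{lemmafr}), where the identical constant $R$ controls the pushed-forward inputs. Under these two conditions, the strong convexity of the regularized objective and the weight-dependent stability coefficients are computed exactly as in the proof of Theorem~\ref{twmtheorem}, with $x_i^k$ everywhere replaced by $\Phi(x_i^k)$.

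Invoking Theorem~\ref{twmtheorem} on the transformed problem then yields the stated bound immediately: $\mathcal{L}_S^{\Gamma^j}(h_j^*)$ becomes $\mathcal{L}_S^{\Gamma^j,\Phi}(h_j^*)$, the loss bound $B(\Gamma^j)$ becomes its representation-dependent analogue $B(\Phi,\Gamma^j)$, and each $\dist_\mathcal{Y}(\mathcal{D}_j,\mathcal{D}_k)$ is replaced by $\dist_\mathcal{Y}(\mathcal{D}_j^\Phi,\mathcal{D}_k^\Phi)$, now measured over $\mathcal{Z}\times\mathcal{Y}$ as induced by $\Phi$. The complexity-control coefficients $\|\Gamma^j\|_\infty$ and $\|\Gamma^j\|_2^2$ depend only on the weights and are untouched. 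For the claimed model-complexity statement, I would apply Lemma~\ref{lemmatwm} in the same manner, since $\mathcal{V}_k^\Phi(h)=\mathcal{L}^\Phi_{S_k}(h)+\eta\lambda\mathcal{R}(h)$ and $\nabla_j(\Phi,\Gamma^j)$ are exactly the $\Phi$-transformed counterparts of $\mathcal{V}_k$ and $\nabla_j$; the minimizer $\bar{h}_k$ of $\mathcal{V}_k^\Phi$ plays the role of the minimizer in Lemma~\ref{lemmatwm}, delivering the bound (\ref{lemmatwmeq}) with $\nabla_j$ and $\bar{h}_j$ interpreted in the representation space.

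The main—and only mild—obstacle is the boundedness of the representation: the clean reuse of Theorem~\ref{twmtheorem} requires $\|\Phi(x)\|_2\le R$ uniformly, which is automatic for normalized or bounded-activation encoders but would otherwise force $R$ to be replaced by a representation-dependent radius. Beyond this bookkeeping there is no new analysis: the corollary is a verbatim specialization of Theorem~\ref{twmtheorem} and Lemma~\ref{lemmatwm} to the pushforward distributions, exactly mirroring how Theorem~\ref{maintheorem2} and Lemma~\ref{lemmafr} specialize Theorem~\ref{maintheorem1} and Lemma~\ref{lemmait}.
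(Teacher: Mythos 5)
Your proposal is correct and matches the paper's (implicit) argument: the paper offers no separate proof for this corollary precisely because, as you observe, problem~(\ref{twrlmobj1}) with a fixed $\Phi$ is Theorem~\ref{twmtheorem}'s problem~(\ref{twmobj1}) on the pushed-forward sample $\{(\Phi(x_i^k),y_i^k)\}$, so the bound and the complexity statement via Lemma~\ref{lemmatwm} carry over verbatim with $\mathcal{D}_k$ replaced by $\mathcal{D}_k^\Phi$, $B(\Gamma^j)$ by $B(\Phi,\Gamma^j)$, and $\|x\|_2\le R$ read as $\|\Phi(x)\|_2\le R$. Your caveat about the representation radius is the same bookkeeping convention the paper already uses for Theorem~\ref{maintheorem2} and Lemma~\ref{lemmafr}.
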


\subsubsection{Parameter Sharing}

Another widely used multitask learning approach is \emph{parameter sharing}, where the objective function is: 
\begin{align*}
    \min_{\{h_k\}_{k=1}^K}  \frac{1}{K} \sum_{k=1}^K \left[ \mathcal{L}_{{S}_k} (h_k) + \mathcal{R}_k(h_k) \right]\enspace.
\end{align*}
To exploit the commonalities  and  differences  across  the tasks, the parameter sharing approach assumes that for each task, the model parameter $h_k$ can be decomposed as $h_k=w_0 +w_k$, where $w_0$ is  a global parameter that is shared across tasks, and $w_k$ is a task-specific model parameter~\citep{evgeniou2004regularized,parameswaran2010large,liu2017algorithm}. To this end, we analyze the following multitask formulation:

{
\begin{align}\label{psmobj1}
     \hspace{-8pt}\min_{\{w_k\}_{k=0}^K}  \frac{1}{K} \sum_{k=1}^K \left[ \mathcal{L}_{{S}_k} (w_0+w_k) + \lambda_0 ||w_0||_2^2 + \lambda ||w_k||_2^2 \right]\enspace,
\end{align}
}where $\lambda_0$ and $\lambda$ are the regularization parameters to control the trade-off between the empirical loss and the model complexity. Furthermore, they also balance model diversity between the tasks. If $\lambda_0 \rightarrow \infty$, (\ref{psmobj1}) reduces to \emph{single-task} approaches, which solve $T$ tasks individually, and if $\lambda \rightarrow \infty$, (\ref{psmobj1}) reduces to \emph{pooling-task} approaches, which simply treat all the tasks as a single one.

For the parameter sharing approach to multitask learning, the performance gap is defined as follows.

\begin{definition}[{\bf Performance gap for parameter sharing}]
For any task $k$, let $\mathcal{V}_{k}(h) = \mathcal{L}_{{S}_k} (h) + \bar{\lambda} ||h||_2^2$ be the single-task loss, where $\bar{\lambda}=\frac{\lambda_0\lambda}{\lambda_0+\lambda}$,  and let the pooling-task loss be $\mathcal{V}_{0}(h) =  \frac{1}{K} \sum_{k=1}^K \left( \mathcal{L}_{{S}_k} (h) + \lambda_0 ||h||_2^2 \right)$. Let $\bar{h}_k$ and $\bar{h}_0$, respectively, be the minimizers of $\mathcal{V}_k$ and $\mathcal{V}_0$. The performance gap for parameter sharing is defined as:
\begin{align*}
    \nabla = K\mathcal{V}_0(\bar{h}_0)-\sum_{k=1}^K \mathcal{V}_k(\bar{h}_k)\enspace.
\end{align*}
\end{definition}

\begin{theorem}\label{psmtheorem}
    Let $\{w_k^*\}_{k=0}^K$ be the optimal solution of the parameter sharing multitask learning problem~(\ref{psmobj1}) and let $h_j^* = w_0^* + w_j^*, \forall j = 1,\dots, K$. Then, for any task $j$ and any $\delta \in (0,1)$, with probability at least $1-\delta$, we have:
    \begin{align*}
        \mathcal{L}_{\mathcal{D}_j}(h_j^*) &\le \mathcal{L}_{S_j}(h_j^*)  + \beta  + (2N\beta + B(\lambda_0, \lambda) )\sqrt{\frac{\log \frac{1}{\delta}}{2N}}\enspace,
    \end{align*}
    and the stability coefficient $\beta$ can be upper bounded by:
    \begin{align*}
        \beta \le \frac{\rho^2 R^2}{\lambda_0 KN} + \frac{\rho^2 R^2}{\lambda N}\enspace.
    \end{align*}
\end{theorem}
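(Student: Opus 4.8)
The plan is to prove Theorem~\ref{psmtheorem} in two stages: first convert uniform stability into a high-probability generalization bound for task $j$ (the standard argument underlying Definition~\ref{definition1}), and then separately upper bound the stability coefficient $\beta$ by exploiting the strong convexity that the regularizers $\lambda_0\|w_0\|_2^2$ and $\lambda\|w_k\|_2^2$ inject into the objective~(\ref{psmobj1}). Throughout, I would freeze the samples $\{S_k\}_{k\neq j}$ of the other tasks and view the algorithm as a map $S_j \mapsto h_j^* = w_0^* + w_j^*$; since the resulting bound is uniform over the frozen tasks, it holds unconditionally. This conditioning is exactly what produces the clean $\sqrt{\log(1/\delta)/(2N)}$ rate in $N$ alone, rather than a rate degrading with the full $KN$ sample.

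For the first stage, set $\Phi(S_j) = \mathcal{L}_{\mathcal{D}_j}(h_j^*) - \mathcal{L}_{S_j}(h_j^*)$. A standard replacement (symmetrization) argument gives $\mathbb{E}_{S_j}[\Phi(S_j)] \le \beta$, using that replacing one task-$j$ example perturbs the evaluated loss by at most $\beta$. I would then verify the bounded-difference property: replacing the $i$-th example of $S_j$ shifts $\mathcal{L}_{\mathcal{D}_j}(h_j^*)$ by at most $\beta$ (uniform stability) and shifts $\mathcal{L}_{S_j}(h_j^*)$ by at most $\beta + B(\lambda_0,\lambda)/N$ (a $\beta$ contribution from the hypothesis moving, plus a direct $B/N$ contribution from the single changed summand), so $|\Phi(S_j)-\Phi(S_j^i)| \le 2\beta + B/N$. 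Applying McDiarmid's inequality over the $N$ independent examples of $S_j$ with this constant yields the deviation term $(2N\beta + B)\sqrt{\log(1/\delta)/(2N)}$, which combined with the mean bound $\beta$ gives precisely the stated inequality.

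For the second stage, I would first use $\rho$-Lipschitz continuity (Definition~\ref{definition2}) together with $\|x\|_2 \le R$ and the identity $h_j^* - h_j^{*,i} = (w_0^* - w_0^{*,i}) + (w_j^* - w_j^{*,i})$ to reduce the stability coefficient to a parameter displacement, $\beta \le \rho R\bigl(\|w_0^* - w_0^{*,i}\|_2 + \|w_j^* - w_j^{*,i}\|_2\bigr)$. Rewriting the objective~(\ref{psmobj1}) as $\tfrac1K\sum_k \mathcal{L}_{S_k}(w_0+w_k) + \lambda_0\|w_0\|_2^2 + \tfrac{\lambda}{K}\sum_k\|w_k\|_2^2$ shows that it is $2\lambda_0$-strongly convex in the $w_0$ block and $\tfrac{2\lambda}{K}$-strongly convex in each $w_k$ block. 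Comparing the optimality conditions of the objective on $S$ and on $S^i$ — which differ only through the single changed task-$j$ summand, weighted by $\tfrac{1}{KN}$ and whose gradient perturbation has norm of order $\rho R/(KN)$ in both the $w_0$ and $w_j$ directions — and dividing by the respective strong-convexity moduli should produce $\|w_0^* - w_0^{*,i}\|_2 \le \tfrac{\rho R}{\lambda_0 KN}$ and $\|w_j^* - w_j^{*,i}\|_2 \le \tfrac{\rho R}{\lambda N}$, where the factor $K$ cancels in the task-specific block. Substituting back gives $\beta \le \frac{\rho^2 R^2}{\lambda_0 KN} + \frac{\rho^2 R^2}{\lambda N}$, and the loss bound $B(\lambda_0,\lambda)$ entering the concentration term follows, as in Lemma~\ref{lemmait} and Corollary~\ref{corollaryit}, by controlling $\|h_j^*\|_2$ through the regularized objective.

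The delicate step is cleanly \emph{decoupling} the displacements of the shared parameter $w_0$ and the task-specific parameter $w_j$. Because the perturbation enters only through the combined direction $w_0 + w_j$, the two blocks are coupled: naively summing their strong-convexity inequalities yields a relation of the form $\lambda_0 a^2 + \tfrac{\lambda}{K} b^2 \le \tfrac{\rho R}{KN}(a+b)$ with $a=\|w_0^* - w_0^{*,i}\|_2$ and $b=\|w_j^* - w_j^{*,i}\|_2$, which only gives the two target terms up to an unwanted cross term of order $\sqrt{PQ}$ (with $P=\tfrac{\rho R}{\lambda_0 KN}$, $Q=\tfrac{\rho R}{\lambda N}$) or a constant factor. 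To obtain the clean additive bound, the key will be a block-wise (partial-minimization) treatment: minimizing out the $\{w_k\}$ leaves a $2\lambda_0$-strongly convex objective in $w_0$ whose perturbation gradient is diluted to scale $\rho R/(KN)$ — reflecting that $w_0$ is anchored by all $KN$ examples — while $w_j$ is governed by its own $\tfrac{2\lambda}{K}$-strongly convex subproblem over $N$ examples, so that $a$ and $b$ are controlled separately and the displacement bounds $a \le P$, $b\le Q$ hold individually.
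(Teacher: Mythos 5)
Your first stage (conditioning on the other tasks, McDiarmid plus uniform stability) is exactly the standard argument the paper invokes by citing Theorem 14.2 of \citet{mohri2018foundations}, and your reduction of $\beta$ to a parameter displacement, together with the coupled inequality
\begin{align*}
\lambda_0 a^2 + \tfrac{\lambda}{K}\, b^2 \;\le\; \tfrac{\rho R}{KN}\,(a+b)\enspace, \qquad a=\|w_0^*-w_0^{*,i}\|_2\,,\quad b=\|w_j^*-w_j^{*,i}\|_2\enspace,
\end{align*}
is precisely the paper's inequality (\ref{mteq3}), derived there via Bregman divergences and first-order optimality. The genuine flaw in your proposal is the claim that this coupled inequality only yields the target bound up to a cross term of order $\sqrt{PQ}$ or a constant factor, so that one must decouple the blocks. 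That claim is false: by the weighted Cauchy--Schwarz inequality,
\begin{align*}
a+b \;\le\; \sqrt{\left(\tfrac{1}{\lambda_0}+\tfrac{K}{\lambda}\right)\left(\lambda_0 a^2+\tfrac{\lambda}{K} b^2\right)} \;\le\; \sqrt{\left(\tfrac{1}{\lambda_0}+\tfrac{K}{\lambda}\right)\tfrac{\rho R}{KN}\,(a+b)} \;=\; \sqrt{(P+Q)(a+b)}\enspace,
\end{align*}
hence $a+b\le P+Q$ exactly, with $P=\frac{\rho R}{\lambda_0 KN}$ and $Q=\frac{\rho R}{\lambda N}$ (indeed $(P,Q)$ is the point of the ellipse $\{\lambda_0a^2+\frac{\lambda}{K}b^2\le \frac{\rho R}{KN}(a+b)\}$ maximizing $a+b$, so this is tight). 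Combined with $\beta\le\rho R\,\|u_0+u_j\|_2\le\rho R\,(a+b)$, where $u_0,u_j$ are the two displacement vectors, this gives $\beta\le \frac{\rho^2R^2}{\lambda_0 KN}+\frac{\rho^2R^2}{\lambda N}$. The cross term you feared arises only if you bound $a$ and $b$ each against the full right-hand side separately; the joint Cauchy--Schwarz step avoids it. This is in substance the paper's proof---its phrase ``applying triangle inequality'' is terse shorthand for exactly this step.

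The partial-minimization detour is therefore unnecessary, and as sketched it does not deliver the stated constants. The blocks are genuinely coupled: when the $i$-th example of task $j$ is replaced, $w_j^*$ moves both because its own subproblem changed \emph{and} because $w_0^*$ moved, so the individual bound $b\le Q$ you assert does not follow. Writing the inner argmin through the proximal map $v_j(w_0)=w_0+w_j(w_0)$ (which is nonexpansive), the natural estimate is $b\le 2a+Q\le 2P+Q$, which would only give $\beta\le\rho R(3P+Q)$, weaker than the theorem. What does survive is a bound on the \emph{combined} displacement, $\|u_0+u_j\|_2=\|v_j(w_0^*)-v_j^i(w_0^{*,i})\|_2\le \|w_0^*-w_0^{*,i}\|_2+\frac{\rho R}{\lambda N}\le P+Q$; and even your bound $a \le P$ is not free: establishing that the perturbation of the reduced objective has gradient scale $\rho R/(KN)$ requires noting that each reduced function $g_k(w_0)=\min_{w_k}\{\mathcal{L}_{S_k}(w_0+w_k)+\lambda\|w_k\|_2^2\}$ is a Moreau envelope, hence differentiable with $\nabla g_k(w_0)=-2\lambda\, w_k(w_0)$, and then applying stability of the inner problem---i.e., the same strong-convexity machinery you were trying to sidestep. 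So your route can be repaired, but only by passing to the combined displacement, at which point the two-line Cauchy--Schwarz argument above is strictly simpler and is what the paper's proof amounts to.
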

\begin{remark}
Theorem~\ref{psmtheorem} shows how the regularization parameters $\lambda_0$ and $\lambda$ control the stability coefficient $\beta_j$. In particular, $\beta_j$ achieves a fast convergence rate in $\mathcal{O}(\frac{1}{KN})$ as $\lambda \rightarrow \infty$, with the risk that the empirical loss can be high. $\lambda_0 \rightarrow \infty$ leads to the single-task solution with the convergence rate in  $\mathcal{O}(\frac{1}{N})$, and therefore there is no benefit of multitask learning. When $K=1$, we obtain the stability coefficient $\beta_j$ of single-task learning with the same regularization strength. If we further set $\lambda_0  = \lambda$, we can show that the ratio between overall convergence rates of multitask and single-task learning is $\frac{K+1}{2K} \approx \frac{1}{2}$. 
In other words, Theorem~\ref{psmtheorem} formalizes the intuition that the parameter sharing approach can be viewed as a trade-off between single-task and pooling-task learning. To obtain good generalization performances, one should the control balance between the model diversity and training loss across the tasks by tuning $\lambda_0$ and $\lambda$. Note that in order to make a fair comparison between the multitask and single-task learning methods, $\mathcal{L}_{\mathcal{D}_j}(h_j^*)$ is upper bounded in terms of $\mathcal{L}_{S_j}(h_j^*)$ rather than $\frac{1}{K} \sum_{k=1}^K \mathcal{L}_{S_k}(h_k^*)$.
\end{remark}
\begin{lemma}\label{psmlemma}
Let $\{w_k^*\}_{k=0}^K$ be the optimal solution of the parameter sharing multitask learning problem~(\ref{psmobj1}) and let $h_j^* = w_0^* + w_j^*, \forall j = 1,\dots, K$. Then, we have:
\begin{align*}
    ||h^*_j||_2 \le \sqrt{\frac{\nabla}{\bar{\lambda}}+||\bar{h}_j||_2^2}\enspace.
 \end{align*}
\end{lemma}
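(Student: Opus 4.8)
The plan is to sandwich the optimal value of the objective in~(\ref{psmobj1}) between a lower bound expressed through the single-task losses $\mathcal{V}_k$ and an upper bound coming from the pooling reference, and then to isolate task $j$ using the definition of $\nabla$. The one genuinely new device, relative to Lemmas~\ref{lemmait}--\ref{lemmatwm}, is a completion-of-squares identity that handles the \emph{shared} parameter $w_0$: for any fixed $h$, minimizing $\lambda\|h-w_0\|_2^2+\lambda_0\|w_0\|_2^2$ over $w_0$ gives the value $\bar{\lambda}\|h\|_2^2$, attained at $w_0=\frac{\lambda}{\lambda_0+\lambda}h$, with $\bar{\lambda}=\frac{\lambda_0\lambda}{\lambda_0+\lambda}$. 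This is precisely why the effective single-task regularizer in $\mathcal{V}_k$ is $\bar{\lambda}$, and it is what lets me collapse the joint objective back onto the per-task quantities.

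Concretely, I would first rewrite the objective $F(\{w_k\})$ of~(\ref{psmobj1}) by distributing the shared penalty so each summand carries one copy of $\lambda_0\|w_0\|_2^2$, then set $h_k^*=w_0^*+w_k^*$ and apply the identity above termwise (for the single optimal $w_0^*$ it holds as an inequality), obtaining the lower bound $F(\{w_k^*\})\ge\frac1K\sum_k\mathcal{V}_k(h_k^*)$. For the matching upper bound I would evaluate $F$ at the pooling reference $w_0=\bar{h}_0,\ w_k=0$, which gives $F(\{w_k^*\})\le\mathcal{V}_0(\bar{h}_0)$, so that $\sum_k\mathcal{V}_k(h_k^*)\le K\mathcal{V}_0(\bar{h}_0)$. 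Next I would isolate task $j$: for every $k\neq j$ the minimality of $\bar{h}_k$ gives $\mathcal{V}_k(h_k^*)\ge\mathcal{V}_k(\bar{h}_k)$, and substituting $K\mathcal{V}_0(\bar{h}_0)=\nabla+\sum_k\mathcal{V}_k(\bar{h}_k)$ (the definition of the performance gap) makes the sum telescope, leaving $\mathcal{V}_j(h_j^*)\le\mathcal{V}_j(\bar{h}_j)+\nabla$. Writing out $\mathcal{V}_j(h)=\mathcal{L}_{S_j}(h)+\bar{\lambda}\|h\|_2^2$ and rearranging yields $\bar{\lambda}\bigl(\|h_j^*\|_2^2-\|\bar{h}_j\|_2^2\bigr)\le\nabla+\bigl(\mathcal{L}_{S_j}(\bar{h}_j)-\mathcal{L}_{S_j}(h_j^*)\bigr)$, which rearranges to the claimed bound once the loss contribution is disposed of, in the same spirit as~(\ref{lemmatwmeq}).

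The main obstacle I anticipate is twofold. The first is the coupling through the shared $w_0$: since a single $w_0^*$ serves all $K$ tasks, the completion-of-squares step can only be invoked as a per-task inequality, and one must be careful to account for the distributed $\lambda_0\|w_0^*\|_2^2$ penalty exactly once so that the lower bound $\frac1K\sum_k\mathcal{V}_k(h_k^*)$ is valid. The second, and more delicate, is the final passage from the regularized-objective gap $\mathcal{V}_j(h_j^*)-\mathcal{V}_j(\bar{h}_j)\le\nabla$ to a clean bound on $\|h_j^*\|_2$, since this requires controlling the empirical-loss difference $\mathcal{L}_{S_j}(\bar{h}_j)-\mathcal{L}_{S_j}(h_j^*)$ at the two minimizers. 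I expect to resolve this by exploiting the $2\bar{\lambda}$-strong convexity of $\mathcal{V}_j$ about its minimizer $\bar{h}_j$ together with the nonnegativity of the loss, which is exactly the structural feature that makes the bound parallel to Lemma~\ref{lemmatwm}.
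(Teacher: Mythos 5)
Your derivation tracks the paper's own proof almost line for line: the completion-of-squares inequality $\lambda_0\|w_0\|_2^2+\lambda\|w_k\|_2^2\ge\bar{\lambda}\|w_0+w_k\|_2^2$, the comparison of the multitask optimum with the pooling reference $(w_0,w_1,\dots,w_K)=(\bar{h}_0,0,\dots,0)$, the lower bound $\mathcal{V}_k(h_k^*)\ge\mathcal{V}_k(\bar{h}_k)$ for $k\neq j$, and the substitution $K\mathcal{V}_0(\bar{h}_0)=\nabla+\sum_{k}\mathcal{V}_k(\bar{h}_k)$ are exactly the steps in the paper, and they validly yield the intermediate inequality $\mathcal{V}_j(h_j^*)\le\mathcal{V}_j(\bar{h}_j)+\nabla$.

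The problem is the last step, which you correctly single out as the delicate one but do not actually close. To pass from $\mathcal{V}_j(h_j^*)-\mathcal{V}_j(\bar{h}_j)\le\nabla$ to $\bar{\lambda}\bigl(\|h_j^*\|_2^2-\|\bar{h}_j\|_2^2\bigr)\le\nabla$ one needs precisely $\mathcal{L}_{S_j}(\bar{h}_j)\le\mathcal{L}_{S_j}(h_j^*)$, and neither of your proposed tools delivers this. Strong convexity of $\mathcal{V}_j$ gives $\bar{\lambda}\|h_j^*-\bar{h}_j\|_2^2\le\nabla$ and hence, by the triangle inequality, only $\|h_j^*\|_2\le\|\bar{h}_j\|_2+\sqrt{\nabla/\bar{\lambda}}$, which is strictly weaker than the claimed $\sqrt{\nabla/\bar{\lambda}+\|\bar{h}_j\|_2^2}$ (since $\sqrt{a^2+b^2}\le a+b$ for $a,b\ge 0$); nonnegativity of the loss instead gives $\bar{\lambda}\|h_j^*\|_2^2\le\nabla+\mathcal{V}_j(\bar{h}_j)$, which leaves the uncancelled additive term $\mathcal{L}_{S_j}(\bar{h}_j)/\bar{\lambda}$ inside the square root. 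Moreover, $\mathcal{L}_{S_j}(\bar{h}_j)\le\mathcal{L}_{S_j}(h_j^*)$ is not a consequence of $\bar{h}_j$ minimizing $\mathcal{V}_j$: a regularized minimizer can have larger empirical loss than a competitor of larger norm (in one dimension with $\mathcal{L}_{S_j}(h)=(h-1)^2$ and $\bar{\lambda}=1$ one gets $\bar{h}_j=1/2$ with loss $1/4$, while $h=0.9$ has loss $0.01$). For what it is worth, the paper's own proof is no more careful here: its final displayed inequality silently drops the difference $\mathcal{L}_{S_j}(\bar{h}_j)-\mathcal{L}_{S_j}(h_j^*)$, i.e., it assumes exactly the inequality above. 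So you have reproduced the published argument and located its genuine weak point, but your plan for repairing that point proves only the weaker additive bound, not the lemma as stated.
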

\begin{remark}
Lemma~\ref{psmlemma} connects the model complexity of parameter sharing multitask learning (\ref{psmobj1}) to the diversity between tasks, which is measured by the performance gap between  pooling-task learning and single-task learning. It formalizes the intuition that the parameter sharing  approach can work when tasks are similar to each other in a way such that pooling-task learning and single-task learning have similar learning performances. 
\end{remark}

\subsubsection{Task Covariance}\label{sec:tcmtl}
In addition to task weighting, another approach to capturing the task relationships is learning with a task covariance matrix. Specifically, the objective function is formulated as:
\begin{align}\label{tcmobj1}
    \min_{H}\frac{1}{K} \sum_{k=1}^K \mathcal{L}_{S_k}(h_k) + \mathcal{R}(H)\enspace,
\end{align}
where $H=[h_1,\dots, h_K]^\top$ is the ensemble of $K$ hypotheses,  $\mathcal{R}(H) = \text{tr}(H^\top \Omega^{-1} H )$. $\text{tr}(\cdot)$  denotes the trace of a square matrix, and $\Omega \in \mathbb{S}^K_+$ is a positive definite matrix that captures the pairwise relationships between tasks. From a probabilistic perspective, the regularizer $\mathcal{R}(H)$ corresponds to a matrix normal prior over $H$, with $\Omega$ capturing the covariance between individual hypotheses~\citep{zhang2010convex,zhang2015multi}.
\begin{remark}
Let $M = I-\frac{J}{K}$, where $J$ is the all-one matrix.
Then, it can be shown that (\ref{psmobj1}) is a special case of (\ref{tcmobj1}) with $\Omega^{-1} = \pi_1 I + \pi_2 M$, where $\pi_1 = \frac{\lambda_0 \lambda}{K(\lambda_0 + \lambda)}$ and  $\pi_2 = \frac{ \lambda^2}{K(\lambda_0 + \lambda)}$~\citep{evgeniou2004regularized}. 
\end{remark}

For the task covariance approach to multitask learning, the performance gap is defined as follows.

\begin{definition}[{\bf Performance gap for task covariance}]
For any task $k$, let $\mathcal{V}_k(h) = \mathcal{L}_{S_k}(h)+ {\frac{K}{\sigma_\text{max}}}||h||_2^2$ be the single-task loss, where $\sigma_\text{max}$ is the largest eigenvalue of $\Omega$, and $\mathcal{V}_0(h)= \frac{1}{K} \sum_{k=1}^K \mathcal{L}_{S_k}(h) + \omega ||h||_2^2$ is the pooling-task loss, where $\omega$ be the sum of the elements of $\Omega^{-1}$. Let $\bar{h}_k$ and $\bar{h}_0$, respectively, be the minimizers of $\mathcal{V}_k$ and $\mathcal{V}_0$. The performance gap for task covariance is defined as:
\begin{align*}
    \nabla = K\mathcal{V}_0(\bar{h}_0)-\sum_{k=1}^K \mathcal{V}_k(\bar{h}_k)\enspace.
\end{align*}
\end{definition}
\begin{theorem}\label{tcmtheorem}
    Let $H^*=[h^*_1,\dots, h^*_K]^\top$ be the optimal solution of the task covariance multitask learning problem~(\ref{tcmobj1}). Then, for any task $j$ and any $\delta \in (0,1)$, with probability at least $1-\delta$, we have:
    \begin{align*}
        \mathcal{L}_{\mathcal{D}_j}(h_j^*) \le \mathcal{L}_{S_j}(h_j^*) + \beta + (2N\beta + B(\Omega) )\sqrt{\frac{\log \frac{1}{\delta}}{2N}}\enspace,
    \end{align*}
    and the stability coefficient $\beta$ can be upper bounded by:
    \begin{align*}
        \beta \le \frac{\sigma_{\text{max}} \rho^2 R^2}{KN}\enspace.
    \end{align*}
\end{theorem}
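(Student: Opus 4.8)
The plan is to follow the same two-step template that underlies the parameter-sharing result of Theorem~\ref{psmtheorem}, of which this is a generalization (recall the Remark that (\ref{psmobj1}) is the special case $\Omega^{-1}=\pi_1 I + \pi_2 M$). First I would establish the generalization bound from a uniform-stability argument, and then separately bound the stability coefficient $\beta$ by exploiting the strong convexity induced by the regularizer $\text{tr}(H^\top\Omega^{-1}H)$.

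For the first step, fix the task index $j$ and regard the samples of all other tasks $S_{-j}=\{S_k\}_{k\neq j}$ as fixed side-information, so that the map $\mathcal{A}_j : S_j \mapsto h_j^*$ becomes an algorithm taking the $N$ examples of task $j$ as input. Conditioned on $S_{-j}$, I would invoke the standard stability-based generalization bound of \citet{bousquet2002stability} (see also \citealp{mohri2018foundations}): if $\mathcal{A}_j$ has uniform stability $\beta$ and the loss is bounded by $B(\Omega)$, then with probability at least $1-\delta$ over $S_j\sim\mathcal{D}_j^N$ one has $\mathcal{L}_{\mathcal{D}_j}(h_j^*)\le \mathcal{L}_{S_j}(h_j^*)+\beta+(2N\beta+B(\Omega))\sqrt{\frac{\log(1/\delta)}{2N}}$. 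Since this holds conditionally for every fixed $S_{-j}$, it holds unconditionally, yielding the claimed inequality; all that remains is to verify that $\mathcal{A}_j$ is $\beta$-stable with the stated coefficient.

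For the second step --- the crux of the argument --- I would bound $\beta$ through the strong convexity of the objective (\ref{tcmobj1}). Writing $F(H)$ for that objective, the Hessian of its regularizer is $2\,\Omega^{-1}\otimes I$, whose smallest eigenvalue is $2/\sigma_{\max}$ because $\lambda_{\min}(\Omega^{-1})=1/\lambda_{\max}(\Omega)=1/\sigma_{\max}$; hence $F$ is $\frac{2}{\sigma_{\max}}$-strongly convex in $H$ under the Frobenius norm. Let $H$ and $H'$ be the minimizers on $S$ and on $S^i$ (one example, say in task $m$, replaced), and set $\Delta H = H'-H$. Adding the two first-order optimality/strong-convexity inequalities gives $\frac{2}{\sigma_{\max}}\|\Delta H\|_F^2 \le (F-F')(H')-(F-F')(H)$. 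Since the replaced example enters the weighted objective with weight $\frac{1}{KN}$ and appears only through $h_m$, I would bound the right-hand side by $\frac{2\rho R}{KN}\|\Delta H\|_F$ using the $\rho$-Lipschitz property of $\ell$ together with $\|x\|_2\le R$ and $|h_m'(x)-h_m(x)|=|\langle\Delta h_m,x\rangle|\le R\|\Delta H\|_F$. Cancelling one factor of $\|\Delta H\|_F$ yields $\|\Delta H\|_F\le \frac{\sigma_{\max}\rho R}{KN}$, and applying the Lipschitz property once more to the loss on task $j$ (noting $\|\Delta h_j\|_2\le\|\Delta H\|_F$) gives $|\ell(h_j(x),y)-\ell(h_j'(x),y)|\le \rho R\|\Delta H\|_F\le \frac{\sigma_{\max}\rho^2R^2}{KN}$, i.e.\ $\beta\le\frac{\sigma_{\max}\rho^2R^2}{KN}$.

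The main obstacle I anticipate is the bookkeeping in the strong-convexity step: although replacing a single example perturbs only one task's empirical loss (with the small weight $\frac{1}{KN}$), the minimizer of the coupled objective changes across all tasks through the off-diagonal entries of $\Omega^{-1}$, so the perturbation must be controlled in the joint Frobenius geometry rather than per task. Correctly identifying the modulus $\frac{2}{\sigma_{\max}}$ --- governed by the largest eigenvalue of $\Omega$ --- and confirming that the per-task change $\|\Delta h_j\|_2$ is dominated by $\|\Delta H\|_F$ are the two places where the constants must be tracked carefully to recover exactly $\frac{\sigma_{\max}\rho^2R^2}{KN}$.
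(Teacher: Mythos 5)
Your proposal is correct and takes essentially the same route as the paper: the generalization inequality is obtained from the standard uniform-stability bound (Theorem 14.2 of Mohri et al.), and your strong-convexity perturbation argument is the paper's Bregman-divergence argument in different notation---for the quadratic regularizer $\text{tr}(H^\top\Omega^{-1}H)$, summing the two first-order-optimality/strong-convexity inequalities at the minimizers is exactly the paper's summed Bregman divergences, and your modulus $2/\sigma_{\max}$ is exactly the paper's eigenvalue bound $\|\Delta H\|_F^2/\sigma_{\max}\le\text{tr}\left(\Delta H\,\Omega^{-1}\Delta H^\top\right)$. Both arguments give $\|\Delta H\|_F\le\sigma_{\max}\rho R/(KN)$ and hence $\beta\le\sigma_{\max}\rho^2R^2/(KN)$ with identical constants.
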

\begin{remark}
When $\Omega^{-1} = \frac{\lambda}{K} I$, where $I$ is the identity matrix, we have $\sigma_{\text{max}} = \frac{K}{\lambda}$, and therefore the stability coefficient $\beta$ is upper bounded by $\frac{\rho^2 R^2}{\lambda N}$, which is consistent with the fact that (\ref{tcmobj1}) reduces to $K$ individual single-task problems: $\mathcal{L}_{S_k}(h_k) + \lambda ||h_k||_2^2$ with $\Omega^{-1} = \frac{\lambda}{K} I$. On the other hand,  
choosing $\Omega$ such that $\sigma_{\text{max}} < \frac{K}{\lambda}$ leads to a faster convergence rate of $\beta_j$, which indicates benefits of task covariance approach (\ref{tcmobj1}).   
\end{remark}

\begin{lemma}\label{tcmlemma}
 Let $H^*=[h_1^*,\dots, h_K^*]$ be the optimal solution of the task covariance multitask learning problem (\ref{tcmobj1}). Then, for any task  $j$, we have:
    \begin{align*}
           ||h^*_j||_2 \le \sqrt{\frac{\sigma_\text{max} \nabla}{K}+||\bar{h}_j||_2^2}\enspace.
    \end{align*}
\end{lemma}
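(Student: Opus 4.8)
The plan is to follow the same template as the parameter-sharing lemma (Lemma~\ref{psmlemma}), of which the task-covariance problem~(\ref{tcmobj1}) is a special instance (via the remark identifying $\Omega^{-1}=\pi_1 I+\pi_2 M$). The target I aim for is the single-task objective-gap inequality $\mathcal{V}_j(h_j^*)-\mathcal{V}_j(\bar{h}_j)\le \nabla$; once this is established, isolating the quadratic term $\frac{K}{\sigma_\text{max}}\|h_j^*\|_2^2$ inside $\mathcal{V}_j$ and rearranging produces the stated bound exactly as in the proofs of Lemmas~\ref{lemmatwm} and~\ref{psmlemma}.

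First I would reduce the coupled trace regularizer to a separable one through the spectrum of $\Omega$. Since $\sigma_\text{max}$ is the largest eigenvalue of $\Omega$, we have $\Omega^{-1}\succeq \sigma_\text{max}^{-1} I$, so that $\mathcal{R}(H)=\text{tr}(H^\top\Omega^{-1}H)=\text{tr}(\Omega^{-1}HH^\top)\ge \sigma_\text{max}^{-1}\sum_{k=1}^K\|h_k\|_2^2$, turning the coupling matrix into precisely the scalar regularizer $\frac{K}{\sigma_\text{max}}\|\cdot\|_2^2$ used in $\mathcal{V}_k$. In parallel, evaluating the objective at the pooling point $\bar{H}_0$ (all rows equal to $\bar{h}_0$) gives $\text{tr}(\bar{H}_0^\top\Omega^{-1}\bar{H}_0)=\big(\sum_{k,l}(\Omega^{-1})_{kl}\big)\|\bar{h}_0\|_2^2=\omega\|\bar{h}_0\|_2^2$, hence $J(\bar{H}_0)=\mathcal{V}_0(\bar{h}_0)$, where $J$ denotes the objective of~(\ref{tcmobj1}). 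These two computations are what align the effective regularization strengths with the $\frac{K}{\sigma_\text{max}}$ coefficient in $\mathcal{V}_k$ and the $\omega$ in $\mathcal{V}_0$.

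Combining the spectral lower bound with optimality of $H^*$ gives $\frac1K\sum_k \mathcal{L}_{S_k}(h_k^*)+\sigma_\text{max}^{-1}\sum_k\|h_k^*\|_2^2\le J(H^*)\le J(\bar{H}_0)=\mathcal{V}_0(\bar{h}_0)$; multiplying by $K$ identifies the left-hand side as $\sum_k \mathcal{V}_k(h_k^*)$, so $\sum_k\mathcal{V}_k(h_k^*)\le K\mathcal{V}_0(\bar{h}_0)$. Subtracting $\sum_k\mathcal{V}_k(\bar{h}_k)$ and invoking $\nabla=K\mathcal{V}_0(\bar{h}_0)-\sum_k\mathcal{V}_k(\bar{h}_k)$ yields $\sum_k[\mathcal{V}_k(h_k^*)-\mathcal{V}_k(\bar{h}_k)]\le\nabla$; since each summand is nonnegative ($\bar{h}_k$ minimizes $\mathcal{V}_k$), discarding every term but $k=j$ leaves the desired $\mathcal{V}_j(h_j^*)-\mathcal{V}_j(\bar{h}_j)\le\nabla$.

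I expect the main obstacle to be the final extraction rather than the spectral bookkeeping. Writing $\mathcal{V}_j(h_j^*)-\mathcal{V}_j(\bar{h}_j)=[\mathcal{L}_{S_j}(h_j^*)-\mathcal{L}_{S_j}(\bar{h}_j)]+\frac{K}{\sigma_\text{max}}(\|h_j^*\|_2^2-\|\bar{h}_j\|_2^2)\le\nabla$, one sees that the stated bound $\frac{K}{\sigma_\text{max}}(\|h_j^*\|_2^2-\|\bar{h}_j\|_2^2)\le\nabla$ follows only if the empirical-loss difference $\mathcal{L}_{S_j}(h_j^*)-\mathcal{L}_{S_j}(\bar{h}_j)$ is controlled, and this is the delicate point: the coupling through $\Omega$ can in principle let $h_j^*$ fit task $j$ better than the single-task minimizer $\bar{h}_j$. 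I would address this through the optimality characterization of $\bar{h}_j$ as the minimizer of the $\frac{K}{\sigma_\text{max}}$-regularized single-task objective, exactly as in the proof of Lemma~\ref{psmlemma}. As a cross-check I would keep in reserve the companion estimate obtained from strong convexity of $\mathcal{V}_j$, namely $\|h_j^*-\bar{h}_j\|_2^2\le\frac{\sigma_\text{max}}{K}\nabla$, which follows immediately from the same objective-gap inequality and confirms that $h_j^*$ stays within the prescribed neighborhood of $\bar{h}_j$.
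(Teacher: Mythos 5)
Your route is the paper's route essentially step for step: the spectral bound $\Omega^{-1}\succeq\sigma_\text{max}^{-1}I$ that separates the trace regularizer, the pooling-point identity $\text{tr}(\bar H_0^\top\Omega^{-1}\bar H_0)=\omega\|\bar h_0\|_2^2$, optimality of $H^*$, and the definition of $\nabla$, yielding $\sum_{k}\bigl[\mathcal{V}_k(h_k^*)-\mathcal{V}_k(\bar h_k)\bigr]\le\nabla$ and hence $\mathcal{V}_j(h_j^*)-\mathcal{V}_j(\bar h_j)\le\nabla$; this matches the paper's displayed chain exactly, and everything you wrote up to that point is correct. However, the ``delicate point'' you flag at the end is a genuine gap, and the repair you propose cannot close it. Write $A=\mathcal{L}_{S_j}(h_j^*)-\mathcal{L}_{S_j}(\bar h_j)$ and $B=\frac{K}{\sigma_\text{max}}\bigl(\|h_j^*\|_2^2-\|\bar h_j\|_2^2\bigr)$. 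You have $A+B\le\nabla$ and you need $B\le\nabla$, i.e.\ you need $A\ge 0$. The ``optimality characterization of $\bar h_j$'' gives control in the opposite direction only: minimality of $\bar h_j$ for $\mathcal{V}_j$ says $A+B\ge 0$, i.e.\ $A\ge -B$, and combining the two available facts gives $B\le\nabla-A\le\nabla+B$, the tautology $0\le\nabla$. Nothing forces $A\ge 0$: a minimizer of the \emph{regularized} objective $\mathcal{V}_j$ need not minimize the empirical loss alone, and the coupling through $\Omega$ can indeed pull $h_j^*$ to a point with strictly smaller task-$j$ loss (and larger norm) than $\bar h_j$ --- for instance with two quadratic-loss tasks whose unregularized optima lie on the same ray at different distances from the origin.

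You should also know that the paper's own proof makes exactly this jump: its last inequality, $\nabla+\mathcal{V}_j(\bar h_j)-\mathcal{L}_{S_j}(h_j^*)\le\nabla+\frac{K}{\sigma_\text{max}}\|\bar h_j\|_2^2$, holds only if $\mathcal{L}_{S_j}(\bar h_j)\le\mathcal{L}_{S_j}(h_j^*)$, which is asserted implicitly and never proved; the proof of Lemma~\ref{psmlemma} has the same silent step, so deferring to that proof, as you do, does not supply the missing ingredient. In that sense your proposal is a faithful reconstruction of the paper's argument, including its one unjustified step, and you deserve credit for being the only party to flag it. If you want statements that do follow rigorously from what you established: since the loss is non-negative, $\frac{K}{\sigma_\text{max}}\|h_j^*\|_2^2\le\mathcal{V}_j(h_j^*)\le\nabla+\mathcal{V}_j(\bar h_j)$ gives $\|h_j^*\|_2\le\sqrt{\frac{\sigma_\text{max}}{K}\bigl(\nabla+\mathcal{L}_{S_j}(\bar h_j)\bigr)+\|\bar h_j\|_2^2}$; and your strong-convexity companion estimate $\|h_j^*-\bar h_j\|_2^2\le\frac{\sigma_\text{max}}{K}\nabla$ is correct and yields $\|h_j^*\|_2\le\|\bar h_j\|_2+\sqrt{\sigma_\text{max}\nabla/K}$ by the triangle inequality. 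Both are mildly weaker than the stated lemma but fully justified.
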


\begin{remark}
Lemma~\ref{tcmlemma} reveals that the model complexity of the task covariance approach (\ref{tcmobj1}) can be bounded in a similar fashion as in the parameter sharing approach (\ref{psmobj1}). In addition,  Theorem~\ref{tcmtheorem} and Lemma~\ref{tcmlemma} also indicate that one should also regularize $\sigma_\text{max}$, which can be achieved by penalizing $\text{tr}(\Omega)$, or equivalently penalizing the trace norm of $H$~\citep{argyriou2007multi,pong2010trace,zhang2010convex,zhang2015multi}.
\end{remark}

\section{Algorithmic Instantiations}\label{algorithms}

We have now analyzed a variety of transfer and multitask learning approaches, and the theoretical results have consistently shown that in order to achieve good generalization performance, transfer and multitask learning algorithms should trade off empirical risk minimization and \emph{performance gap minimization}. To this end, we frame the general framework of performance gap minimization for transfer and multitask learning as:
\begin{align}\label{gapobj1}
    \min_{h, \Psi} \mathcal{L}_S(h, \Psi) + \lambda_1 \nabla(\Psi) + \lambda_2 r(h, \Psi)\enspace,
\end{align}
where $\Psi$ are the model parameters involved in gap minimization (e.g., instance weights, representation layers of a deep net, or a task covariance matrix between tasks). In Eq.~(\ref{gapobj1}), we have treated $\nabla$ as a data- and algorithm-dependent regularizer to guide the learning of $\Psi$, $r(h, \Psi)$ accommodates other regularization functions over $h$ and $\Psi$ -- e.g., $\ell_2$-norm regularization or marginal discrepancy between tasks~\citep{ben2010theory,ganin2016domain,zhou2021domain,zhou2021discriminative,shui2022benefits} -- and $\lambda_1, \lambda_2$ are the parameters that balance the trade-off between the loss function and the regularization functions. In practice, minimizing~(\ref{gapobj1}) leads to \emph{bilevel optimization} problems, which are challenging to optimize and require high computational cost for large-scale problems~\citep{bard2013practical,franceschi2018bilevel}. To see this, consider the instance weighting problem (\ref{itobj1}), which essentially boils down to two nested optimization problems: the inner objective is to find $h_\mathcal{S}$ and $h_\mathcal{T}$ by minimizing the source and target domain losses, and the outer objective is to minimize~(\ref{gapobj1}). To realize the principle of gap minimization sidestepping these optimization challenges, we present two algorithms, one for transfer learning and one for multitask learning. 

\subsection{$\pmb{\afunc{gapBoost}}$ and $\pmb{\afunc{gapBoostR}}$ for Transfer Learning}\label{sec:gapboost}

Our primary algorithmic goal in this paper is to derive a computationally efficient method that is flexible enough to accommodate arbitrary learning algorithms for knowledge sharing and transfer. To this end, we consider the instance weighting problem (\ref{itobj1}) as an instantiation of our framework (\ref{gapobj1}), which aims to assign appropriate values to $\Gamma$ so that the solution leads to effective transfer.  Instead of solving (\ref{itobj1}) by bilevel optimization, we follow four intuitively reasonable and principled rules motivated by Theorem~\ref{maintheorem1}:
\begin{itemize}
\item[1.] Minimize the weighted empirical loss over the source and target domains, as suggested by $\mathcal{L}^\Gamma_S$.
\item[2.] Assign balanced weights to data points (i.e., treat the source and target domains equally), as suggested by $\|\Gamma\|_2$ and $\|\Gamma\|_\infty$. 
\item[3.] Assign more weight to the target domain sample than to the source domain sample, as suggested by $\|\Gamma^\mathcal{S}\|_1$.
\item[4.] Assign the weights to the examples such that the performance gap $\nabla = \nabla_\mathcal{S}+\nabla_\mathcal{T}$ are small.  
\end{itemize}

\begin{algorithm}[t]\small
\caption{{ $\pmb{\afunc{gapBoost}}$}}
\label{alg:gapboost}
  \textbf{Input:} $S_\mathcal{S}, S_{\mathcal{T}}, K,   \rho_\mathcal{S} \le \rho_\mathcal{T}  \le 0, \gamma_{\text{max}}$, a learning algorithm $\mathcal{A}$
  \begin{algorithmic}[1]
  \STATE Initialize $D^\mathcal{S}_1(i) = D^\mathcal{T}_1(i)=\frac {1}{N_\mathcal{S}+N_\mathcal{T}}$
    \FOR{\(k=1,\dots,K\)}
        \STATE Call $\mathcal{A}$ to train a base learner $h_k$ using $S_\mathcal{S}\cup S_\mathcal{T}$ with distribution $D^\mathcal{S}_k \cup D^\mathcal{T}_k$
        \STATE Call $\mathcal{A}$ to train an auxiliary learner $h_k^\mathcal{S}$ over source domain using $S_\mathcal{S}$ with distribution $D^\mathcal{S}_k $
        \STATE Call $\mathcal{A}$ to train an auxiliary learner $h_k^\mathcal{T}$ over target domain using $S_\mathcal{T}$ with distribution $D^\mathcal{T}_k $
        \STATE ${\epsilon_{k} = \sum_{i=1}^{N_\mathcal{S}} D_k^\mathcal{S}(i) \mathbbm{1}_{h_k(x_i^\mathcal{S})\neq y_i^\mathcal{S}}+\sum_{i=1}^{N_\mathcal{T}} D_k^\mathcal{T}(i)\mathbbm{1}_{h_k(x_i^\mathcal{T}) \neq y_i^\mathcal{T} }}$
        \STATE $\alpha_k=\log\frac{1-\epsilon_{k}}{\epsilon_{k}}$
        \FOR{$i = 1,\dots, N_\mathcal{S}$}
            \STATE{$\beta_i^\mathcal{S} = \rho_\mathcal{S} \mathbbm{1}_{h_k^\mathcal{S}(x_i^\mathcal{S})  \neq h_k^\mathcal{T}(x_i^\mathcal{S})} + \alpha_k \mathbbm{1}_{ h_k(x_i^\mathcal{S}) \neq y_i^\mathcal{S}}$}
            \STATE {$D^\mathcal{S}_{k+1}(i)=D^\mathcal{S}_{k}(i)\exp\left( \beta^\mathcal{S}_i \right)$}
        \ENDFOR
        \FOR{$i = 1,\dots, N_\mathcal{T}$}
            \STATE{$\beta_i^\mathcal{T} = \rho_\mathcal{T} \mathbbm{1}_{ h_k^\mathcal{S}(x_i^\mathcal{T}) \neq h_k^\mathcal{T}(x_i^\mathcal{T})} + \alpha_k \mathbbm{1}_{ h_k(x_i^\mathcal{T}) \neq y_i^\mathcal{T}}$}
            \STATE {$D^\mathcal{T}_{k+1}(i)=D^\mathcal{T}_{k}(i)\exp\left( \beta^\mathcal{T}_i \right)$}
        \ENDFOR
        \STATE $Z_{k+1} = \sum_{i=1}^{N_\mathcal{S}} D_{k+1}^\mathcal{S}(i)+\sum_{i=1}^{N_\mathcal{T}} D_{k+1}^\mathcal{T}(i)$
        \IF{any $D^\mathcal{S}_{k+1}(i), D^\mathcal{T}_{k+1}(i) > \gamma_{\text{max}}Z_{k+1}$}
            \STATE $D^\mathcal{S}_{k+1}(i), D^\mathcal{T}_{k+1}(i) = \gamma_{\text{max}}Z_{k+1}$
        \ENDIF
        \STATE Normalize $D^\mathcal{S}_{k+1}$ and $D^\mathcal{T}_{k+1}$ such that $\sum_{i=1}^{N_\mathcal{S}} D_{k+1}^\mathcal{S}(i)+\sum_{i=1}^{N_\mathcal{T}} D_{k+1}^\mathcal{T}(i) =1$
    \ENDFOR
  \end{algorithmic}
  \textbf{Output:}  $f(x) = \text{sign} \left(\sum_{k=1}^K \alpha_k h_k(x) \right)$
\end{algorithm}

Note that these rules are somewhat contradictory. Thus, when designing an instance weighting algorithm for transfer learning, one should properly balance the rules to obtain a good generalization performance in the target domain.

To this end, we propose $\afunc{gapBoost}$ in Algorithm~\ref{alg:gapboost}, which exploits the rules explicitly. The algorithm trains a joint learner for source and target domains, as well as auxiliary source and target learners (lines 3--5). Then, it up-weights incorrectly labeled instances as per traditional boosting methods and down-weights instances for which the source and target learners disagree; the trade-off for the two schemes is controlled separately for source and target instances via hyper-parameters $\rho_{\mathcal{S}}$ and $\rho_{\mathcal{T}}$ (lines 6--15). Finally, the weights are clipped to a maximum value of $\gamma_\text{max}$ and normalized (lines 16--20). {\bf 1.} $\afunc{gapBoost}$ follows Rule 1 by training the base learner $h_k$ at each iteration, which aims to minimize the weighted empirical loss over the source and target domains. {\bf 2.} By tuning $\gamma_\text{max}$, it explicitly controls $\|\Gamma\|_\infty$ and implicitly controls $\|\Gamma\|_2$, as required by Rule 2. Additionally, as each base learner $h_k$ is trained with a different set of weights, the final classifier $f$ returned by $\afunc{gapBoost}$ is potentially trained over a balanced distribution. {\bf 3.} Moreover, by setting $\rho_\mathcal{T} \ge \rho_\mathcal{S}$, $\afunc{gapBoost}$ penalizes instances from the source domain more than from the target domain, implicitly assigning more weight to the target domain sample than to the source domain sample, as suggested by Rule~3. {\bf 4.} Finally, as $\rho_\mathcal{S}, \rho_\mathcal{T} \le 0$, the weight of any instance $x$ will decrease if the learners disagree (i.e., $h_k^{\mathcal{S}}(x) \neq h_k^{\mathcal{T}}(x)$). By doing so, $\afunc{gapBoost}$ follows Rule~4 by minimizing the gap $\nabla$. {\bf 5.} The trade-off between the rules is balanced by the choice of the hyper-parameters $\rho_\mathcal{T}$,  $\rho_\mathcal{S}$ and $\gamma_\text{max}$.

Table~\ref{tab:boost} compares $\afunc{gapBoost}$ various traditional boosting algorithms for transfer learning in terms of the instance weighting rules. Conventional AdaBoost~\citep{freund1997decision} treats source and target samples equally, and therefore does not reduce $\|\Gamma^\mathcal{S}\|_1$ or minimize the performance gap. On the other hand, TrAdaBoost~\citep{dai2007boosting} and  TransferBoost~\citep{eaton2011selective}, as described in Appendix~\ref{boosts}, explicitly exploit Rule 3 by assigning less weight to the source domain sample at each iteration. However, they do not control $\|\Gamma\|_\infty$ or $\|\Gamma\|_2$, so the weight of the target domain sample can be large after a few iterations. Most critically, none of the previous algorithms  minimize the performance gap explicitly as we do, which can be crucial for transfer learning to succeed.

\begin{table}[t]
\caption{Boosting algorithms for transfer learning.}
\vspace{-6pt}
\label{tab:boost}
\begin{center}
\begin{small}
\begin{tabular}{lcccc}
\toprule
                      & Rule 1    & Rule 2    & Rule 3      &  Rule 4 \\
\midrule
AdaBoost              & \checkmark   & \checkmark       &  \xmark       & \xmark \\
TrAdaBoost            & \checkmark   & \xmark       &  \checkmark   & \xmark \\
TransferBoost         & \checkmark   & \xmark       &  \checkmark   & \xmark \\
$\afunc{gapBoost}$    & \checkmark   & \checkmark   &  \checkmark   & \checkmark \\
\bottomrule
\end{tabular}
\end{small}
\end{center}
\vskip -0.1in
\end{table}

The generalization performance of $\afunc{gapBoost}$ is upper-bounded by the following proposition.

\begin{proposition}\label{boostpropositionA}
Let $f(x) = \sum_{k=1}^K \alpha_k h_k(x)$ be the ensemble of classifiers returned by $\afunc{gapBoost}$, with each base learner trained by solving (\ref{itobj1}).
For simplicity, we assume that $\sum_{k=1}^K\alpha_k =1$. Then, for any $\delta \in (0,1)$, with probability at least $1-\delta$, we have:
\begin{align*}
         {\textstyle \mathcal{L}_{\mathcal{D}_\mathcal{T}}(f)  \le  \mathcal{L}_{S_\mathcal{T}} (f)  + \frac{2\rho^2 R^2\gamma^\mathcal{T}_\infty}{\lambda}  \sqrt{2  \log \frac{4}{\delta}}   + B(\Gamma) \sqrt{\frac{\log \frac{2}{\delta}}{2N_\mathcal{T}}}\enspace. }
\end{align*}
where $\gamma_\infty^\mathcal{T}$ is the largest weight of the target sample over all boosting iterations.
\end{proposition}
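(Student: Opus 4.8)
The plan is to regard $\afunc{gapBoost}$ as a single learning algorithm that returns the ensemble $f=\sum_{k=1}^{K}\alpha_k h_k$, to establish the uniform stability (Definition~\ref{definition1}) of this algorithm with respect to replacing one target example, and then to feed the resulting stability coefficient into the same stability-to-generalization machinery that underlies Theorems~\ref{maintheorem1}--\ref{maintheorem3}. Because the statement compares $\mathcal{L}_{\mathcal{D}_\mathcal{T}}(f)$ against the purely target empirical loss $\mathcal{L}_{S_\mathcal{T}}(f)$, the perturbation of interest swaps a single target point $z_j^\mathcal{T}$ for an i.i.d.\ copy, so the effective sample size entering the concentration is $N_\mathcal{T}$ and the two surviving constants are the ensemble stability coefficient and the per-example loss bound $B(\Gamma)$.

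First I would bound the stability of $f$. Since $\sum_k\alpha_k=1$, the ensemble is a convex combination of the base learners, and each $h_k$ is the solution of the instance-weighted problem~(\ref{itobj1}) for its realized weights. Reading off the hypothesis-level stability $\|h_{k,S}-h_{k,S^j}\|_2\le \gamma_{k,j}^\mathcal{T}\rho R/\lambda$ that underlies the coefficient of Theorem~\ref{maintheorem1} (here $\gamma_{k,j}^\mathcal{T}=D_k^\mathcal{T}(j)$ is the weight on target point $j$ at iteration $k$), and combining it with $h(x)=\langle h,x\rangle$, $\|x\|_2\le R$, and the $\rho$-Lipschitz property of $\ell$ (Definition~\ref{definition2}), gives $|\ell(f_S(x),y)-\ell(f_{S^j}(x),y)|\le \rho R\sum_k\alpha_k\|h_{k,S}-h_{k,S^j}\|_2\le \frac{\rho^2R^2}{\lambda}\sum_k\alpha_k\gamma_{k,j}^\mathcal{T}$. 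Bounding every per-round target weight uniformly by $\gamma_\infty^\mathcal{T}=\max_{k,j}D_k^\mathcal{T}(j)$ and using $\sum_k\alpha_k=1$ collapses this to the ensemble stability coefficient $\beta^f\le \gamma_\infty^\mathcal{T}\rho^2R^2/\lambda$, precisely the quantity (up to the factor $2$ from the bias of the resubstitution estimate) appearing in the first term of the bound.

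With $\beta^f$ fixed, I would decompose $\mathcal{L}_{\mathcal{D}_\mathcal{T}}(f)-\mathcal{L}_{S_\mathcal{T}}(f)$ into a stability-driven fluctuation and a bounded-loss fluctuation and allocate half the failure probability to each. The stability-driven part concentrates with a variance proxy of order $\beta^f$; a two-sided tail at level $\delta/2$ then contributes the factor $\sqrt{2\log\frac{4}{\delta}}$ multiplying $2\beta^f=\frac{2\rho^2R^2\gamma_\infty^\mathcal{T}}{\lambda}$. The bounded-loss part is an average of $N_\mathcal{T}$ terms lying in $[0,B(\Gamma)]$; a one-sided Hoeffding bound at level $\delta/2$ contributes $B(\Gamma)\sqrt{\frac{\log(2/\delta)}{2N_\mathcal{T}}}$. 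A union bound over the two events reassembles the claimed inequality, the split $\delta=\delta/2+\delta/2$ being exactly what yields the $\log\frac{4}{\delta}$ and $\log\frac{2}{\delta}$ factors.

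The main obstacle is twofold. First, the boosting weights $D_k^\mathcal{S},D_k^\mathcal{T}$ are themselves functions of the whole sample, so the base learners are not trained at fixed weights; the remedy is to condition on the realized weight sequence, apply Theorem~\ref{maintheorem1} per base learner, and absorb all per-round, per-example target weights into the single uniform quantity $\gamma_\infty^\mathcal{T}$, with the clipping at $\gamma_\text{max}$ in Algorithm~\ref{alg:gapboost} ensuring $\gamma_\infty^\mathcal{T}$ stays controlled. Second, and more delicate, is showing that the stability-driven fluctuation carries a variance proxy of order $\beta^f$ rather than the $\sqrt{N_\mathcal{T}}\,\beta^f$ that a naive McDiarmid over the $N_\mathcal{T}$ target points would produce; this sharper concentration is what keeps the first term independent of $N_\mathcal{T}$ and is the crux of matching the stated constants.
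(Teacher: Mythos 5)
There is a genuine gap in your proposal---in fact two, both of which you flag yourself but neither of which you resolve, and the paper's own proof takes a different route that sidesteps them entirely. The first gap is the adaptivity of the boosting weights. The per-base-learner hypothesis stability $\|h_{k,S}-h_{k,S^j}\|_2\le \gamma_{k,j}^\mathcal{T}\rho R/\lambda$ that you import from the analysis behind Theorem~\ref{maintheorem1} is proved (Lemma~\ref{lemma1}) only for a \emph{fixed} weight vector: both optimization problems share the same weights and differ in one data point. In $\afunc{gapBoost}$ the weights $D_k^\mathcal{S},D_k^\mathcal{T}$ at round $k$ depend on the errors of all earlier learners, hence on the whole sample; replacing $z_j^\mathcal{T}$ changes every subsequent weight vector, so $h_{k,S}$ and $h_{k,S^j}$ solve problems with \emph{different objectives}, and the cited stability bound does not apply. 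Your remedy of ``conditioning on the realized weight sequence'' is circular: once you condition on a sample-dependent quantity, the swapped point is no longer an i.i.d.\ replacement and the concentration step you need afterwards is unavailable. The paper avoids this by not using stability of the boosted ensemble at all: its Lemma~\ref{boostlemma} bounds the \emph{Rademacher complexity of the convex hull} of all hypotheses obtainable from (\ref{itobj1}) with target weights capped by $\gamma_\infty^\mathcal{T}$ (following the ensemble argument of Lemma~7.4 in Mohri et al.), a capacity bound valid uniformly over whatever adaptive weight sequence the algorithm realizes; Proposition~\ref{boostpropositionA} then follows from the standard Rademacher generalization bound, with the $\delta/2+\delta/2$ split producing exactly the $\log\frac{4}{\delta}$ and $\log\frac{2}{\delta}$ factors.

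The second gap is the $\sqrt{N_\mathcal{T}}$ amplification, which you correctly identify as ``the crux'' but leave unproved. The only stability-to-generalization mechanism established in the paper (Lemma~\ref{lemmaas}, the Bousquet--Elisseeff/McDiarmid argument) turns a stability coefficient $\beta^f$ into a deviation term of the form $(2N_\mathcal{T}\beta^f+B)\sqrt{\log(1/\delta)/(2N_\mathcal{T})}$, i.e.\ the stability contribution is multiplied by roughly $\sqrt{N_\mathcal{T}}$. With your $\beta^f\approx \gamma_\infty^\mathcal{T}\rho^2R^2/\lambda$ and $\gamma_\infty^\mathcal{T}$ of order $1/\sqrt{N_\mathcal{T}}$ at best (the clipping in Algorithm~\ref{alg:gapboost}), this yields a first term larger than the claimed one by a factor of $\sqrt{N_\mathcal{T}}$. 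Obtaining a stability term free of that factor would require the sharper moment-inequality bounds for uniformly stable algorithms (Feldman--Vondr\'ak, Bousquet--Klochkov--Zhivotovskiy), which are not part of this paper's toolkit, carry additional logarithmic factors, and hence would not reproduce the stated constants. So the inequality as stated does not follow from the route you describe; the missing idea is precisely the switch from algorithmic stability of the adaptive ensemble to a uniform capacity bound over the convex hull of the base class.
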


\begin{remark}\label{remark7}
We observe that if $\gamma_\infty^\mathcal{T}\gg \sqrt{\frac{1}{N_\mathcal{T}}}$, the bound will be dominated by the second term. Then, Proposition~\ref{boostpropositionA} suggests to set ${\gamma_\text{max} = \mathcal{O}(\frac{1}{\sqrt{N_\mathcal{T}}})}$ to achieve a fast convergence rate. On the other hand, as the loss function is convex, $B(\Gamma)$ can be upper bounded by $B(\Gamma) \le \sum_{k=1}^K \alpha_k B(\Gamma_k)$, where $\Gamma_k$ is the set of weights at the $k$-th boosting iteration. In other words, one should aim to minimize the performance gap for every boosting iteration to achieve a tighter bound.
\end{remark}

\noindent {\bf Extending to Regression Problems }  Our principles can also be applied to boosting algorithms for regression~\citep{pardoe2010boosting,wu2019multiple}. In this section, we explore AdaBoost.R2~\citep{drucker1997improving} that has been shown generally effective and propose $\afunc{gapBoostR}$ for regression problems in Algorithm~\ref{alg:gapboostR}. The high-level idea of $\afunc{gapBoostR}$ is the same as that of $\afunc{gapBoost}$. The main difference is that the error in regression problems can be arbitrarily large since the output is real-valued. Therefore, we adjust it in the range $[0,1]$ by dividing it by the largest error at each iteration, and the weight of an instance is updated according to the value of the corresponding error.\footnote{We simply use the absolute error as adopted in~\citep{pardoe2010boosting}, but other error (e.g., squared error) is possible~\citep{drucker1997improving}.}

\begin{algorithm}[t]\small
\caption{{ $\pmb{\afunc{gapBoostR}}$}}
\label{alg:gapboostR}
  \textbf{Input:} $S_\mathcal{S}, S_{\mathcal{T}}, K,   \rho_\mathcal{S} \le \rho_\mathcal{T}  \le 0, \gamma_{\text{max}}$, a learning algorithm $\mathcal{A}$
  \begin{algorithmic}[1]
  \STATE Initialize $D^\mathcal{S}_1(i) = D^\mathcal{T}_1(i)=\frac {1}{N_\mathcal{S}+N_\mathcal{T}}$ for all $i$
    \FOR{\(k=1,\dots,K\)}
        \STATE Call $\mathcal{A}$ to train a base learner $h_k$ using $S_\mathcal{S}\cup S_\mathcal{T}$ with distribution $D^\mathcal{S}_k \cup D^\mathcal{T}_k$
        \STATE Call $\mathcal{A}$ to train an auxiliary learner $h_k^\mathcal{S}$ over source domain using $S_\mathcal{S}$ with distribution $D^\mathcal{S}_k $
        \STATE Call $\mathcal{A}$ to train an auxiliary learner $h_k^\mathcal{T}$ over target domain using $S_\mathcal{T}$ with distribution $D^\mathcal{T}_k $
        \STATE $E_k=\max\left\{\left\{\left| h_k(x_i^\mathcal{S}) - y_i^\mathcal{S} \right|\right\}_{i=1}^{N_\mathcal{S}}, \left\{\left| h_k(x_i^\mathcal{T}) - y_i^\mathcal{T} \right|\right\}_{i=1}^{N_\mathcal{T}} \right\}$, \\
         $\epsilon_{k,i}^\mathcal{S} = \frac{ \left| h_k(x_i^\mathcal{S}) - y_i^\mathcal{S} \right|}{E_k}$ and $\epsilon_{k,i}^\mathcal{T} = \frac{\left| h_k(x_i^\mathcal{T}) - y_i^\mathcal{T} \right|}{E_k}$
        \STATE ${\epsilon_{k} = \sum_{i=1}^{N_\mathcal{S}} D_k^\mathcal{S}(i) \epsilon_{k,i}^\mathcal{S} +\sum_{i=1}^{N_\mathcal{T}} D_k^\mathcal{T}(i) \epsilon_{k,i}^\mathcal{T}}$, $\alpha_k=\log\frac{1-\epsilon_{k}}{\epsilon_{k}}$
        \STATE $E_k^\mathcal{S}=\max\left\{\left| h_k^\mathcal{S}(x_i^\mathcal{S})  - h_k^\mathcal{T}(x_i^\mathcal{S}) \right|\right\}_{i=1}^{N_\mathcal{S}}$, $\kappa_{k,i}^\mathcal{S} = \frac{\left| h_k^\mathcal{S}(x_i^\mathcal{S})  - h_k^\mathcal{T}(x_i^\mathcal{S}) \right|}{E_k^\mathcal{S}}$ 
        \STATE \mbox{$E_k^\mathcal{T}=\max\left\{\left| h_k^\mathcal{S}(x_i^\mathcal{T})  - h_k^\mathcal{T}(x_i^\mathcal{T}) \right|\right\}_{i=1}^{N_\mathcal{T}}$, $\kappa_{k,i}^\mathcal{T} = \frac{\left| h_k^\mathcal{S}(x_i^\mathcal{T})  - h_k^\mathcal{T}(x_i^\mathcal{T}) \right|}{E_k^\mathcal{T}}$} 
        \FOR{$i = 1,\dots, N_\mathcal{S}$}
            \STATE{$\beta_i^\mathcal{S} = \rho_\mathcal{S} \kappa_{k,i}^\mathcal{S} + \alpha_k \epsilon_{k,i}^\mathcal{S}$}, {$D^\mathcal{S}_{k+1}(i)=D^\mathcal{S}_{k}(i)\exp\left( \beta^\mathcal{S}_i \right)$}
        \ENDFOR
        \FOR{$i = 1,\dots, N_\mathcal{T}$}
            \STATE{$\beta_i^\mathcal{T} = \rho_\mathcal{T} \kappa_{k,i}^\mathcal{T} + \alpha_k \epsilon_{k,i}^\mathcal{T}$}, {$D^\mathcal{T}_{k+1}(i)=D^\mathcal{T}_{k}(i)\exp\left( \beta^\mathcal{T}_i \right)$}
        \ENDFOR
        \STATE $Z_{k+1} = \sum_{i=1}^{N_\mathcal{S}} D_{k+1}^\mathcal{S}(i)+\sum_{i=1}^{N_\mathcal{T}} D_{k+1}^\mathcal{T}(i)$
        \IF {$D^\mathcal{S}_{k+1}(i), D^\mathcal{T}_{k+1}(i) > \gamma_{\text{max}}Z_{k+1}$}
            \STATE $D^\mathcal{S}_{k+1}(i), D^\mathcal{T}_{k+1}(i) = \gamma_{\text{max}}Z_{k+1}$
        \ENDIF
        \STATE Normalize $D^\mathcal{S}_{k+1}$ and $D^\mathcal{T}_{k+1}$ such that $\sum_{i=1}^{N_\mathcal{S}} D_{k+1}^\mathcal{S}(i)+\sum_{i=1}^{N_\mathcal{T}} D_{k+1}^\mathcal{T}(i) =1$
    \ENDFOR
    \STATE Normalize $\alpha_k$ such that $\sum_{k=1}^K \alpha_k = 1$.
  \end{algorithmic}
  \textbf{Output:}  $f(x) =\sum_{k=1}^K \alpha_k h_k(x)$
\end{algorithm}

\subsection{$\pmb{\afunc{gapMTNN}}$ for Multitask Learning}\label{sec:gapMTNN}

 In this section, we instantiate the principle of performance gap minimization with \emph{gap multitask neural network} ($\afunc{gapMTNN}$), which jointly learns task relation coefficients and a feature representation for multitask learning. Specifically, we consider a widely used deep multitask 
 learning architecture~\citep{kumar2012learning,maurer2014aninequality,maurer2016benefit,shui2019principled,zhou2021multi} that consists of a feature extractor $\Phi$ shared across tasks and task-specific classifiers $\{h_1, \dots, h_K\}$. Then, the weighted empirical loss for the $j$-th task can be defined as:
   \begin{align*}
      \mathcal{L}_S^j(\Phi,h_j,\Gamma^j) = \sum_{k=1}^K\gamma_k^j  \frac{1}{N}\sum_{i=1}^{N}\ell(h_j(\Phi(x_i^k)),y_i^k)\enspace.
  \end{align*}
 In the view of Corollary~\ref{twrlmcor1} and Eq.~(\ref{gapobj1}), $\afunc{gapMTNN}$ aims to minimize the following regularized weighted empirical loss function:
    \begin{align}\label{gapmobj1}
      \min_{\{h_j\}_{j=1}^K,\Phi,\{\Gamma^j\}_{j=1}^K}   \sum_{j=1}^K  \mathcal{E}_j \enspace, \qquad {\text{s.t. }} \Gamma^j \in \Upsilon^K, \forall j \enspace,
  \end{align}
  where:
   \begin{align*} 
         \mathcal{E}_j = \mathcal{L}_S^j (\Phi,h_j,\Gamma^j) + \lambda_1  \nabla_j(\Phi,\Gamma^j) + \lambda_2  r_j(\Phi,\Gamma^j) + \lambda_3 ||\Gamma^j||_2^2\enspace,
  \end{align*}
and $r_j(\Phi,\Gamma^j) = \sum_{k\neq j} \gamma_k^j \dist (\mathcal{D}_j^{\Phi}, \mathcal{D}_k^{\Phi})$ measures the marginal discrepancy between task $j$ and the other tasks, which can be minimized by centroid alignment~\citep{xie2018learning,dou2019domain,zhou2021multi}. 

In order to minimize the gap term $\nabla_j$, we note that for the task weighting approach, if we ignore the regularization term in $\mathcal{V}_j^\Phi(h)$, performance gap minimization is equivalent to conditional distribution alignment. To see why, note that by definition, $\nabla_j = 0$ (regardless of the loss function) when $\bar{h}_j = \bar{h}_k$ over $S_k, \forall k \neq j$. On the other hand, since we can treat $\bar{h}_j(\Phi(x))$ as an empirical model of the conditional probability $\mathcal{D}_j^\Phi(y|x)$ over the representation space induced by $\Phi$ (e.g., using a softmax function), then $\nabla_j$ vanishes when $\mathcal{D}^\Phi_j(y|x) = \mathcal{D}^\Phi_k(y|x), \forall j \neq k$. In other words, $\nabla_j$ can be viewed as the overall disagreement between conditional probabilities of the $j$-th task and the other tasks. In consequence, we adopt centroid alignment to approach gap minimization due to its  simplicity and effectiveness~\citep{luo2017label,snell2017prototypical}. Specifically, let $S_k^c$ be the set of instances of the $k$-th task labeled with class $c$. Its centroid in the feature space is defined as $C_k^c = \frac{1}{|S_k^c|}\sum_{(x_i^k, y_i^k)\in S_k} \Phi(x_i^k)$.
Then, if all the classes are equally likely, the conditional probability can be formulated as~\citep{snell2017prototypical}:
\begin{align*}
    p(y = c|\Phi(x), (x,y) \sim \mathcal{D}_k) =\frac{\exp (-d(\Phi(x),C_k^c))}{\sum_{c'} \exp (-d(\Phi(x),C_k^{c'}))}\enspace,
\end{align*}
where $d(\cdot,\cdot)$ is a distance measure function (e.g., the squared Euclidean distance). It can be observed that the conditional probability can be aligned if $C_k^c = C_j^c, \forall j \neq k$. Therefore, gap minimization can be achieved by minimizing the distance of the centroids across the domains: 
\begin{align*}
    \nabla_j \approx \sum_{k \neq j} \gamma_k^j\sum_c \left|\left|C_j^c -C_k^c\right|\right|_2^2\enspace,
\end{align*}
which can be solved by aligning \emph{moving average centroids}~\citep{xie2018learning}. 

On the other hand, the centroid can also be viewed as an approximation of semantic conditional distribution if the instances on the latent space follow a mixture of  Gaussian distributions with an identical covariance matrix: $p(\Phi(x)|y=c, (x,y) \sim \mathcal{D}_k) = \mathcal{N}(\mu_k^c,\Sigma)$~\citep{shui2021aggregating}, where $\mathcal{N}(\mu,\Sigma)$ is the Gaussian distribution with mean $\mu$ and covariance matrix $\Sigma$. Then, we develop the following proposition, which provides theoretical insights into representation learning for multitask learning. 

\begin{proposition}\label{gapMTNNproposition}
We assume the predictive loss is positive and $\rho$-Lipschitz. We also define a stochastic representation function $\Phi$ and we denote its conditional distribution as $\mathcal{D}(z|y) = \int_{x} \Phi(z|x)\mathcal{D}(x|y)$. We additionally assume the predictor $h$ is $\nu$-Lipschitz. Then, for any task $j$ the expected error in the multitask setting is upper-bounded by:
\begin{align*}
    \mathcal{L}_{\mathcal{D}_j}(h_j^*) \le \mathcal{L}^{\Gamma^j}_{\mathcal{D}}(h_j^*)+  \frac{\nu\rho}{|\mathcal{Y}|} \sum_{k \neq j} \gamma^j_k \sum_{y} W_1(\mathcal{D}_t(z|y)\|\mathcal{D}_k(z|y))\enspace,
\end{align*}
where $|\mathcal{Y}|$ is the number of classes, and $W_1(\cdot||\cdot)$ is the Wasserstein-1 distance with $\ell_2$ distance as the cost function.
\end{proposition}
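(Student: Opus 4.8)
The plan is to relate the single-task target error to the weighted cross-task error through a telescoping identity enabled by the simplex constraint, and then to control each pairwise cross-task discrepancy by a conditional Wasserstein distance via Kantorovich--Rubinstein duality. First I would exploit $\Gamma^j \in \Upsilon^K$, i.e.\ $\sum_{k=1}^K \gamma_k^j = 1$, to write
\begin{align*}
    \mathcal{L}_{\mathcal{D}_j}(h_j^*) - \mathcal{L}^{\Gamma^j}_{\mathcal{D}}(h_j^*)
    = \sum_{k=1}^K \gamma_k^j\left[\mathcal{L}_{\mathcal{D}_j}(h_j^*) - \mathcal{L}_{\mathcal{D}_k}(h_j^*)\right]
    = \sum_{k\neq j}\gamma_k^j\left[\mathcal{L}_{\mathcal{D}_j}(h_j^*) - \mathcal{L}_{\mathcal{D}_k}(h_j^*)\right],
\end{align*}
where $\mathcal{L}^{\Gamma^j}_{\mathcal{D}}(h_j^*) = \sum_{k}\gamma_k^j \mathcal{L}_{\mathcal{D}_k}(h_j^*)$ is the population weighted loss and the $k=j$ summand vanishes. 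It therefore suffices to bound each pairwise gap $\mathcal{L}_{\mathcal{D}_j}(h_j^*) - \mathcal{L}_{\mathcal{D}_k}(h_j^*)$ and re-weight by the nonnegative $\gamma_k^j$.

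Second, I would rewrite each task loss as a double expectation over labels and representations. Because the stochastic encoder $\Phi(z|x)$ depends only on $x$ (conditionally independent of $y$ given $x$), marginalizing yields $\mathcal{L}_{\mathcal{D}_k}(h_j^*) = \mathbb{E}_{y}\,\mathbb{E}_{z\sim\mathcal{D}_k(z|y)}\left[\ell(h_j^*(z),y)\right]$, with $\mathcal{D}_k(z|y) = \int_x \Phi(z|x)\mathcal{D}_k(x|y)$ as defined in the statement. Invoking the assumption that all classes are equally likely (so that the label marginal equals $1/|\mathcal{Y}|$ and is shared across tasks), each pairwise gap reduces to
\begin{align*}
    \mathcal{L}_{\mathcal{D}_j}(h_j^*) - \mathcal{L}_{\mathcal{D}_k}(h_j^*)
    = \frac{1}{|\mathcal{Y}|}\sum_{y}\left[\mathbb{E}_{z\sim\mathcal{D}_j(z|y)}\ell(h_j^*(z),y) - \mathbb{E}_{z\sim\mathcal{D}_k(z|y)}\ell(h_j^*(z),y)\right].
\end{align*}

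Third, for each fixed label $y$, I would observe that the integrand $z\mapsto\ell(h_j^*(z),y)$ is $\nu\rho$-Lipschitz in $z$, being the composition of the $\nu$-Lipschitz predictor $h_j^*$ with the $\rho$-Lipschitz loss (Definition~\ref{definition2}). Applying the Kantorovich--Rubinstein dual characterization of $W_1$---namely that for any $L$-Lipschitz $g$ one has $\mathbb{E}_P g - \mathbb{E}_Q g \le L\, W_1(P\|Q)$---with $L=\nu\rho$, $P=\mathcal{D}_j(z|y)$, and $Q=\mathcal{D}_k(z|y)$, bounds each bracketed term by $\nu\rho\, W_1(\mathcal{D}_j(z|y)\|\mathcal{D}_k(z|y))$. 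Summing over $y$, then over $k\neq j$ with weights $\gamma_k^j$, and adding $\mathcal{L}^{\Gamma^j}_{\mathcal{D}}(h_j^*)$ back to both sides delivers the claimed inequality.

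The main obstacle I anticipate is not any single analytic step but the bookkeeping of the modeling assumptions that make the factor $1/|\mathcal{Y}|$ emerge cleanly: one must verify that the label marginals are uniform and identical across tasks, and that the stochastic representation marginalizes so the loss genuinely becomes an expectation under $\mathcal{D}_k(z|y)$. Once the loss is expressed in this conditional form, the Lipschitz-composition and duality steps are routine; the only additional care needed is confirming the composed Lipschitz constant $\nu\rho$ and noting that the symmetry of $W_1$ renders the argument order in $W_1(\mathcal{D}_j(z|y)\|\mathcal{D}_k(z|y))$ immaterial.
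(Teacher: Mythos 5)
Your proof is correct and follows essentially the same route as the paper's: the same simplex-based decomposition relating $\mathcal{L}_{\mathcal{D}_j}(h_j^*)$ to the weighted loss, the same uniform-class-prior conditional decomposition producing the $\frac{1}{|\mathcal{Y}|}$ factor, and the same per-class bound on each pairwise task gap by $\nu\rho\, W_1(\mathcal{D}_j(z|y)\|\mathcal{D}_k(z|y))$. The only (cosmetic) difference is that you derive the key pairwise inequality explicitly via Kantorovich--Rubinstein duality and the $\nu\rho$-Lipschitz composition, whereas the paper delegates exactly that step to a citation of \citet{shui2021aggregating}.
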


\begin{remark}
 It is worth mentioning that under proper assumptions, similar theoretical results could be derived through other divergences such as the Jensen-Shannon (JS) divergence or the Total Variation (TV) distance \citep{tachet2020domain}.  We present this result in terms of the Wasserstein distance because it captures distribution shift better than the JS or TV distances in cases where the distribution supports are not identical \citep{arjovsky2017wasserstein}.
\end{remark}

Proposition~\ref{gapMTNNproposition} introduces a (probabilistic) representation function $\Phi(z|x)$. The resulting upper bound on the risk is related to two functions: the representation function and the prediction function. In contrast, conventional divergence-based theories operate directly on the input space $\mathcal{X}$ \emph{without} considering the effect of the representation function $\Phi(z|x)$. Proposition~\ref{gapMTNNproposition}  provides a principled result for understanding the role of the representation function in multitask learning.

The proposed $\afunc{gapMTNN}$ algorithm is described in Algorithm~\ref{alg:gapMTNN}. The high-level protocol is to alternately optimize the neural network parameters $\Phi, \{h_k\}_{k=1}^K$ and the task relation coefficients $\{\Gamma^k\}_{k=1}^K$. Concretely, within one training epoch over the mini-batches, we fix $\{\Gamma^k\}_{k=1}^K$ and  optimize the network parameters. Then, at each training epoch, we
re-estimate $\{\Gamma^k\}_{k=1}^K$ via standard convex optimization\footnote{In our implementation, we use the  CVXPY package for re-estimating the $\Gamma$'s, as  it is a standard convex optimization problem.}. Note that the regularization term $||\Gamma^j||_2^2$ in $\mathcal{E}_j$ leads to a quadratic programming problem, which encourages a uniform task weighting scheme and prevents the relation coefficients from focusing  only on $\gamma_j^j$. The regularization parameter $\lambda_3$ balances the trade-off between learning the single task and leveraging information from the other tasks.
 
\begin{algorithm}[t]\small
\caption{{ $\pmb{\afunc{gapMTNN}}$} (one epoch)}
\label{alg:gapMTNN}
  \textbf{Input:} \mbox{$\{S_k\}_{k=1}^K$, initial task weights $\{\Gamma^k\}_{k=1}^K$, and a learning rate $\alpha$.}
  \begin{algorithmic}[1]
  \STATE $\triangleright$ DNN Parameter Training Stage (given $\{\Gamma^k\}_{k=1}^K$ ) $\triangleleft$
        \FOR{min-batch $\{(x^k_i,y^k_i)\}$ from task $\{S_k\}_{k=1}^K$}
        \STATE Compute the  feature centroids for each task on the current batch; use moving average to update the class centroids $\{C_k^c\}_{c,k=1}^{|\mathcal{Y}|,K}$.
        \STATE Update the network parameters $\Phi, \{h_k\}_{k=1}^K$ by
        \begin{align*}
            \Phi \leftarrow \Phi -\alpha \frac{\partial \sum_{k=1}^K \mathcal{L}_S^k + \lambda_1 \nabla_k + \lambda_2 r_k}{\partial \Phi} \text{ and } h_k \leftarrow  h_k - \alpha \frac{\partial \mathcal{L}_S^k}{\partial h_k}
        \end{align*}
        \ENDFOR
		\STATE  $\triangleright$ Update $\{\Gamma^k\}_{k=1}^K$ by optimizing over Eq.~(\ref{gapmobj1}) (given $\Phi$ and $\{h_k\}_{k=1}^K$) $\triangleleft$
  \end{algorithmic}
  \textbf{Output:}  neural network parameters $\Phi, \{h_k\}_{k=1}^K$ and task relation coefficients $\{\Gamma^k\}_{k=1}^K$
\end{algorithm}

\section{Experiments} \label{experiments}

In this section, we evaluate the proposed algorithms on both transfer and multitask learning scenarios. 

\begin{table*}[t]
\caption{Comparison of different methods on the 20 Newsgroups (top) and Office-Caltech (bottom) data sets in term of error rate (\%). 
$\afunc{gapBoost}$ outperforms all baselines in the majority of transfer problems, and is competitive with the top performance in the remaining ones.  Notably, the performance of $\afunc{gapBoost}$ is considerably better in all cases than the no-transfer baseline, $\text{AdaBoost}_{\mathcal{T}}$.
Standard error is reported after the $\pm$.}
\vspace{-6pt}
\label{tab:results}
\begin{center}
\begin{small}
\begin{tabular}{p{60pt}p{60pt}p{60pt}p{60pt}p{60pt}p{60pt}}
\toprule
                      & $\,\,\,\text{AdaBoost}_\mathcal{T}$    & $\text{AdaBoost}_{\mathcal{T}\&\mathcal{S}}$    & \,\,\,\,TrAdaBoost     &  \,\,\,TransferBoost & \,\,\,\,\,\,\,\,$\afunc{gapBoost}$\\
\midrule
comp vs sci             & \,\, $12.45_{\pm 0.47}$   & \,\, $13.45_{\pm 0.48}$      & \,\, $\,\,12.03_{\pm 0.41}$    & \,\, $\,\,\,\,\,8.83_{\pm 0.37}$ & \,\,  $\,\,\,\,\bf 7.68_{\pm 0.25}$\\
rec vs sci              & \,\, $10.99_{\pm 0.37}$   & \,\, $11.79_{\pm 0.35}$      & \,\,  $\,\,10.03_{\pm 0.36}$    & \,\, $\,\,\,\,\,7.93_{\pm 0.30}$ & \,\, $\,\,\,\,\bf 7.39_{\pm 0.21}$\\
comp vs talk            & \,\, $11.83_{\pm 0.42}$   & \,\, $14.57_{\pm 0.47}$      & \,\,  $\,\,10.67_{\pm 0.37}$    & \,\, $\,\,\,\,\bf 6.45_{\pm 0.25}$ & \,\, $\,\,\,\,\,7.10_{\pm 0.27}$\\
comp vs rec             & \,\, $15.80_{\pm 0.53}$   & \,\, $17.50_{\pm 0.64}$      & \,\,  $\,\,14.86_{\pm 0.67}$    & \,\, $\,\,12.11_{\pm 0.43}$ & \,\, $\,\,\,\,\bf 9.81_{\pm 0.29}$\\
rec vs talk             & \,\, $12.08_{\pm 0.36}$   & \,\, $\,\,\,9.40_{\pm 0.31}$       & \,\,  $\,\,12.21_{\pm 0.40}$    & \,\, $\,\,\,\,\,6.26_{\pm 0.30}$ & \,\, $\,\,\,\,\bf 5.66_{\pm 0.21}$\\
sci vs talk             & \,\, $11.74_{\pm 0.49}$   & \,\, $10.52_{\pm 0.37}$      & \,\,  $\,\,10.13_{\pm 0.46}$    & \,\, $\,\,\,\,\,6.45_{\pm 0.26}$ & \,\, $\,\,\,\,\bf 5.92_{\pm 0.24}$\\
avg. & \, \hspace{9pt} $12.48$ & \hspace{14pt} $12.87$ & \hspace{18pt} $11.66$ &  \hspace{22pt} $8.00$ &  \hspace{22pt}  $\bf 7.26$\\ 
\hline
A $\rightarrow$ C       & \,\, $43.87 _{\pm 0.52}$   & \,\, $27.76 _{\pm 0.88}$   & \,\,  $\,\,37.57 _{\pm 0.68}$   & \,\, $\,\,27.86 _{\pm 0.82}$      & \,\, $\bf 27.06 _{\pm 0.87}$ \\
A $\rightarrow$ D       & \,\, $32.65 _{\pm 1.35}$   & \,\, $28.33 _{\pm 1.33}$   & \,\,  $\,\,34.93 _{\pm 1.43}$   & \,\, $\,\,28.96 _{\pm 1.38}$      & \,\, $\bf25.08 _{\pm 1.37}$ \\
A $\rightarrow$ W       & \,\, $37.23 _{\pm 0.98}$   & \,\, $26.94 _{\pm 1.17}$   & \,\,  $\,\,31.03 _{\pm 0.95}$   & \,\, $\,\,26.95 _{\pm 1.15}$      & \,\, $\bf 24.34 _{\pm 1.10}$ \\
C $\rightarrow$ A       & \,\, $39.92 _{\pm 0.74}$   & \,\, $20.32 _{\pm 0.80}$   & \,\,  $\,\,29.13 _{\pm 0.80}$   & \,\, $\,\,19.68 _{\pm 0.80}$      & \,\, $\bf 19.13 _{\pm 0.83}$ \\
C $\rightarrow$ D       & \,\, $27.88 _{\pm 1.14}$   & \,\, $25.69 _{\pm 1.19}$   & \,\,  $\bf 19.84 _{\pm 1.09}$   & \,\, $\,\,23.44 _{\pm 1.33}$      & \,\, $\,\,21.03 _{\pm 1.20}$ \\
C $\rightarrow$ W       & \,\, $30.25 _{\pm 1.05}$   & \,\, $24.50 _{\pm 1.30}$   & \,\,  $\,\,22.86 _{\pm 0.95}$   & \,\, $\,\,23.41 _{\pm 1.30}$      & \,\, $\bf 21.55 _{\pm 1.20}$ \\
D $\rightarrow$ A       & \,\, $44.30 _{\pm 0.45}$   & \,\, $40.86 _{\pm 0.39}$   & \,\,  $\,\,45.33 _{\pm 0.48}$   & \,\, $\bf 40.50 _{\pm 0.44}$      & \,\, $\,\,40.66 _{\pm 0.39}$ \\
D $\rightarrow$ C       & \,\, $44.00 _{\pm 0.56}$   & \,\, $40.09 _{\pm 0.46}$   & \,\,  $\,\,43.72 _{\pm 0.62}$   & \,\, $\,\,40.35 _{\pm 0.46}$      & \,\, $\bf 40.00 _{\pm 0.46}$ \\
D $\rightarrow$ W       & \,\, $50.63 _{\pm 0.58}$   & \,\, $49.64 _{\pm 0.66}$   & \,\,  $\,\,49.95 _{\pm 0.65}$   & \,\, $\bf 49.63 _{\pm 0.65}$      & \,\, $\,\,50.24 _{\pm 0.62}$ \\
W $\rightarrow$ A       & \,\, $42.91 _{\pm 0.46}$   & \,\, $37.22 _{\pm 0.56}$   & \,\,  $\,\,44.24 _{\pm 0.52}$   & \,\, $\bf 37.02 _{\pm 0.53}$      & \,\, $\,\,37.04 _{\pm 0.52}$ \\
W $\rightarrow$ C       & \,\, $44.12 _{\pm 0.50}$   & \,\, $37.93 _{\pm 0.58}$   & \,\,  $\,\,44.78 _{\pm 0.65}$   & \,\, $\,\,37.79 _{\pm 0.56}$      & \,\, $\bf 37.48 _{\pm 0.50}$ \\
W $\rightarrow$ D       & \,\, $40.63 _{\pm 1.45}$   & \,\, $45.52 _{\pm 1.58}$   & \,\,  $\bf 40.00 _{\pm 1.51}$   & \,\, $\,\,44.88 _{\pm 1.58}$      & \,\, $\,\,41.74 _{\pm 1.40}$ \\
avg. & \, \hspace{14pt} $39.86$ &   \hspace{14pt}  $33.73$ &   \hspace{16pt} $36.95$ &  \hspace{16pt}  $33.37$ &   \hspace{16pt} $\bf 32.11$ \\
\bottomrule
\end{tabular}
\end{small}
\end{center}
\vskip -0.1in
\end{table*}

\subsection{Transfer Learning}

We evaluate $\afunc{gapBoost}$ on two benchmark data sets for classification. The first data set we consider is \textbf{20 Newsgroups}\footnote{Available at: \url{http://qwone.com/~jason/20Newsgroups/}.}, which contains approximately 20,000 documents, grouped by seven top categories and 20 subcategories. Each transfer learning task involved a top-level classification problem, while the source  and target domains were chosen from different subcategories. The source and target data sets were in the same way as in~\citep{dai2007boosting}, yielding 6 transfer learning problems. 
The second data set we use is \textbf{Office-Caltech}~\citep{gong2012geodesic}, which contains approximately 2,500 images from four distinct domains: Amazon (A), DSLR  (D), Webcam (W), and Caltech (C), which enabled us to construct 12 transfer problems by alternately selecting each possible source-target pair. All four domains share the same 10 classes, so we constructed 5 binary classification tasks for each transfer problem and the averaged results are reported. 

For $\afunc{gapBoostR}$,  we  evaluate it on five benchmark data sets:  \textbf{Concrete},  \textbf{Housing},  \textbf{AutoMPG}, \textbf{Diabetes}, and \textbf{Friedman}. The first three data sets are from the UCI Machine Learning Repository\footnote{Available at: \url{http://www.ics.uci.edu/~mlearn/MLRepository.html}.},
and the Diabetes data set is from~\citep{efron2004least}. Following~\citep{pardoe2010boosting}, for each data set,
we identify a continuous feature that has a moderate
degree of correlation (around 0.4) with the label. Then, we sort the instances by this feature, divide the set in thirds (low, medium, and high), and remove this feature from the resulting sets. We use one set as the target and the other two as
sources, for a total of three experiments for each data set, and the averaged performances are reported. In the Friedman data set~\citep{friedman1991multivariate}, each instance consists of 10 features, with each component $x_i$ drawn independently from the uniform distribution $[0,1]$. The label for each instance is dependent on only the first five features:
\begin{align*}
    y &= a_1 10 \sin(\pi(b_1x_1+c_1) (b_2x_2+c_2))+a_2 20 (b_3x_3+c_3-0.5)^2 \\
    &+ a_310(b_4 x_4+c_4)+a_4 5 (b_5 x_5 + c_5) +\mathcal{N}(0,1) \enspace,
\end{align*}
where $a_i$, $b_i$, and $c_i$ are fixed parameters. To construct regression transfer problems, we follow~\citep{pardoe2010boosting} and set $a_i=b_i=1=c_i=1$ for the target domain, and draw each $a_i$ and $b_i$ from $\mathcal{N}(1,0.1)$ and each $c_i$ from $\mathcal{N}(1,0.05)$ for each source domain.

\begin{table*}[t]
\caption{Comparison of different methods on five regression benchmark data sets in term of RMS error. 
$\afunc{gapBoostR}$ consistently outperforms all the other baselines.  
Standard error is reported after the $\pm$.}
\vspace{-6pt}
\label{tab:resultsR}
\begin{center}
\begin{small}
\begin{tabular}{p{40pt}cccc}
\toprule
                      & $\text{AdaBoost.R2}_\mathcal{T}$    & $\text{AdaBoost.R2}_{\mathcal{T}\&\mathcal{S}}$    & TrAdaBoost.R2     & $\afunc{gapBoostR}$\\
\midrule
Concrete        & $12.06 _{\pm 0.89}$   & $ \, \, \,  9.91_{\pm 0.50}$   & $10.25 _{\pm 0.77}$      & $\bf \, \, \,   7.77 _{\pm 0.38}$ \\
Housing       & $ \, \, \,  6.63 _{\pm 0.82}$   & $ \, \, \,  5.52 _{\pm 0.35}$   & $\,\,\, 6.19 _{\pm 0.71}$      & $\bf \, \, \,  4.15 _{\pm 0.27}$ \\
AutoMPG       & $ \, \, \,  5.24 _{\pm 0.35}$   & $\, \, \,  3.69 _{\pm 0.17}$   & $\, \, \, 3.69 _{\pm 0.14}$      & $ \bf \, \, \, 3.25 _{\pm 0.16}$ \\
Diabetes       & $76.32 _{\pm 4.38}$   & $74.08 _{\pm 2.77}$   & $67.89 _{\pm 3.34}$      & $\bf 59.01 _{\pm 1.32}$ \\
Friedman        & $\, \, \, 3.87 _{\pm 0.29}$   & $\, \, \,  4.92 _{\pm 0.16}$   & $\, \, \,  4.57 _{\pm 0.11}$      & $\bf \, \, \,   2.91 _{\pm 0.17}$ \\
avg. & \hspace{3pt} $20.82$ & \hspace{3pt}  $19.62$ &  \hspace{3pt} $18.52$ & \,\,  \hspace{3pt} $\bf 15.42$ \\
\bottomrule
\end{tabular}
\end{small}
\end{center}
\vskip -0.1in
\end{table*}

\subsubsection{Performance Comparison}\label{sec:performance}
We evaluated $\afunc{gapBoost}$ against four baseline algorithms: $\text{AdaBoost}_\mathcal{T}$ trained only on target data, $\text{AdaBoost}_{\mathcal{T}\&\mathcal{S}}$ trained on both source and target data, TrAdaBoost, and TransferBoost. Logistic regression is used as the base learner for all methods, and the number of boosting iterations is set to 20. The hyper-parameters of $\afunc{gapBoost}$ were set as $\gamma_\text{max} = \frac{1}{\sqrt{N_\mathcal{T}}}$ as per Remark~\ref{remark7}, $\rho_\mathcal{T} = 0$, which corresponds to no punishment for the target data, and $\rho_\mathcal{S} = \log \frac{1}{2}$. 

In both data sets we pre-processed the data using principal component analysis (PCA) to reduce the feature dimension to 100. For each data set, we used all source data and a small amount of target data (10\% on 20 Newsgroups and 10 points on Office-Caltech) as training data, and used the rest of the target data for testing. We repeated all experiments over 20 different random train/test splits and the average results are presented in Table~\ref{tab:results}, showing that our method is capable of outperforming all the baselines in the majority of cases. In particular, $\afunc{gapBoost}$ consistently outperforms $\text{AdaBoost}_\mathcal{T}$, empirically indicating that it avoids \emph{negative transfer}.

For regression tasks, we evaluated $\afunc{gapBoostR}$ against $\text{AdaBoost.R2}_\mathcal{T}$, $\text{AdaBoost.R2}_{\mathcal{T}\&\mathcal{S}}$, and TrAdaBoost.R2,  an extension of TrAdaBoost to the regression scenario~\citep{pardoe2010boosting}, as described in Appendix~\ref{boosts}. The results, reported in Table~\ref{tab:resultsR}, demonstrate that $\afunc{gapBoostR}$ consistently outperforms all the other baselines for all the regression tasks.

\subsubsection{Learning with different sizes of target sample}
To further investigate the effectiveness of the gap minimization principle for transfer learning, we varied  the fraction of target instances of the 20 Newsgroups data set used for training, from 0.01 to 0.8. Figure~\ref{fig:ratio} shows full learning curves on three example tasks, as well as the average performance over all six tasks. The results reveal that $\afunc{gapBoost}$'s improvement over the baselines increases as the number of target instances grows, indicating that it is able to leverage target data more effectively than previous methods.

\begin{figure*}[t]\small
\centering 
\hfill\includegraphics[width=.3\textwidth]{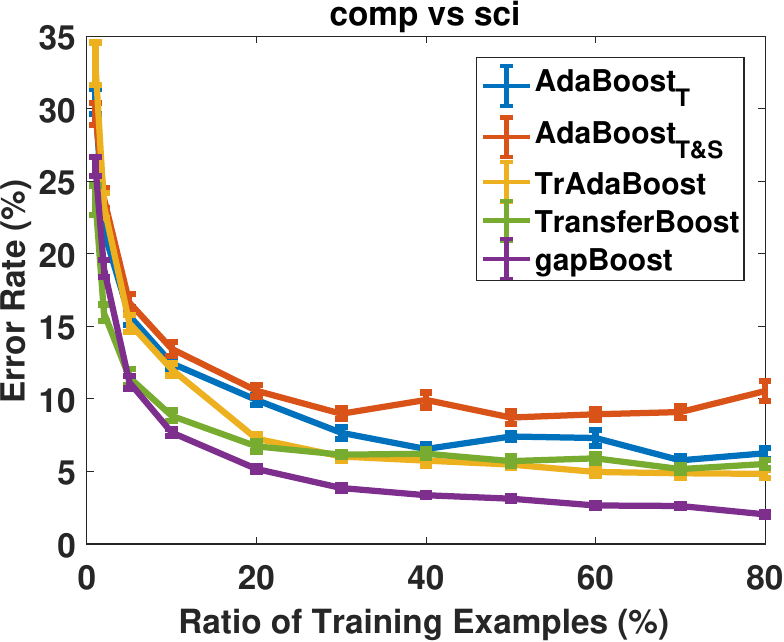}
\hspace{10pt}\includegraphics[width=.3\textwidth]{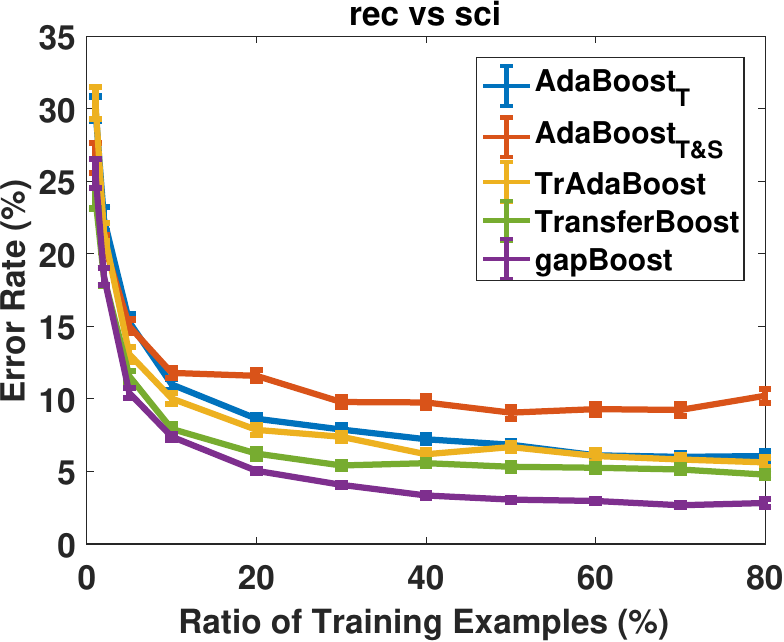}
\hfill~\\[1em]
\hfill\includegraphics[width=.3\textwidth]{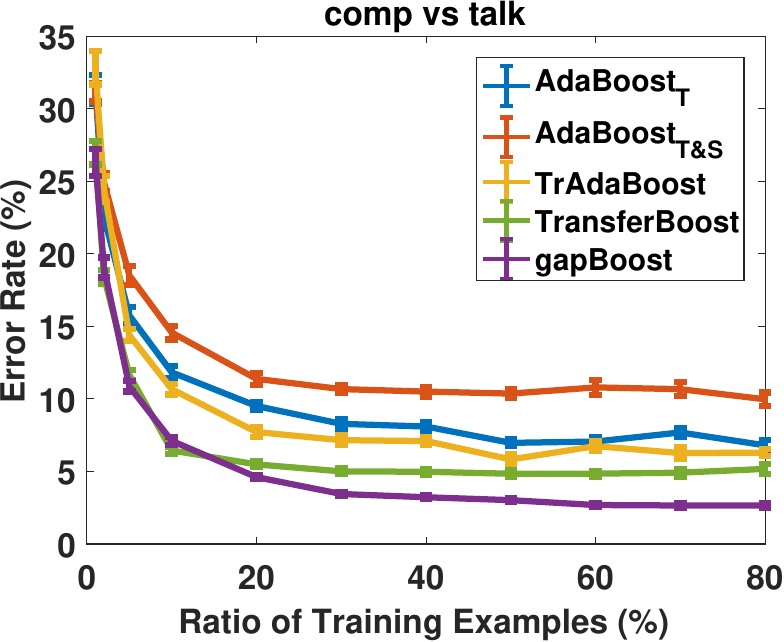}
\hspace{10pt}\includegraphics[width=.3\textwidth]{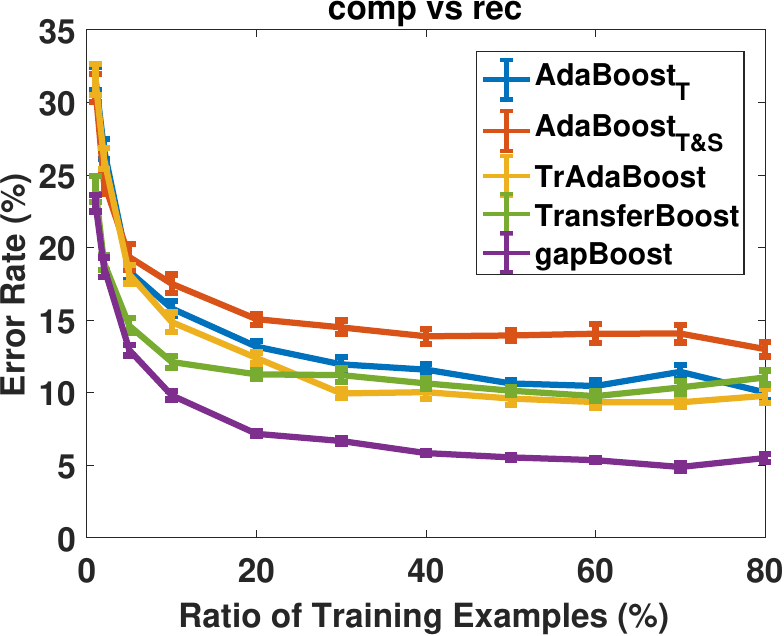}
\hspace{10pt}\includegraphics[width=.3\textwidth]{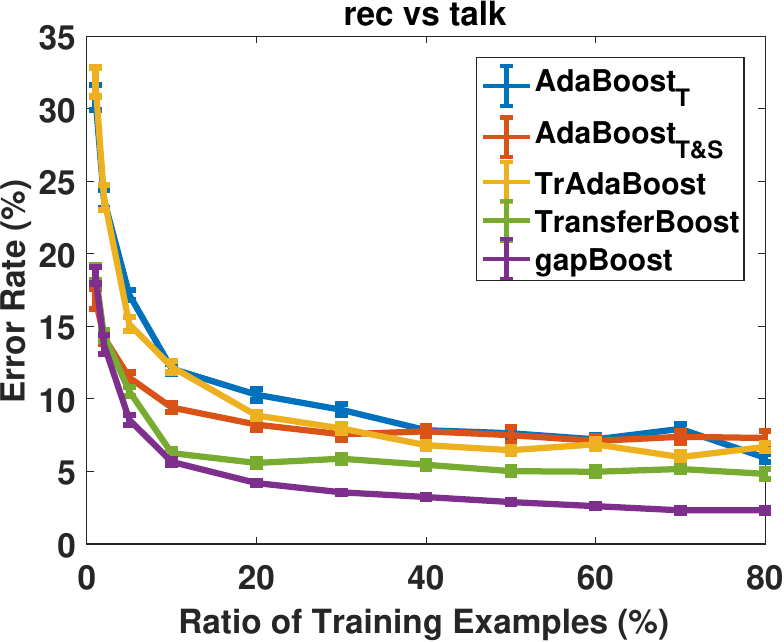}
\hfill~\\[1em]
\hfill\includegraphics[width=.3\textwidth]{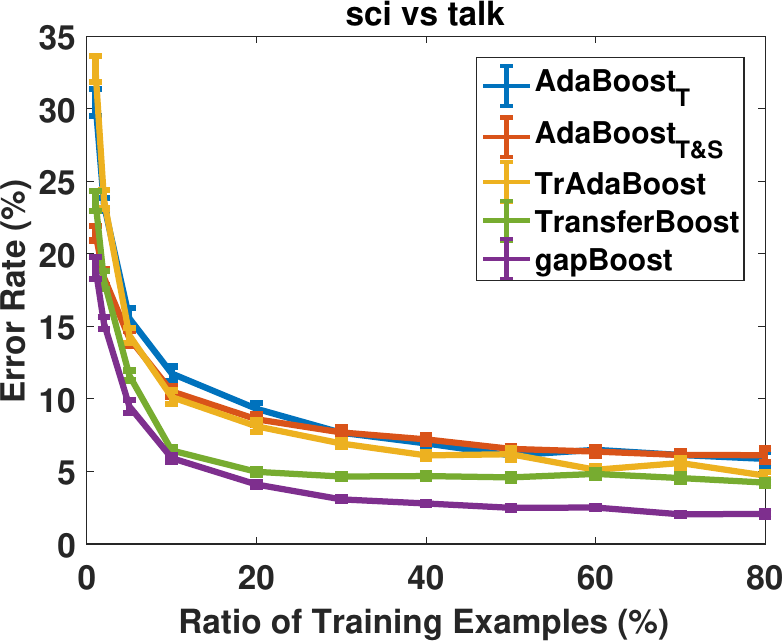}
\hspace{10pt}\includegraphics[width=.3\textwidth]{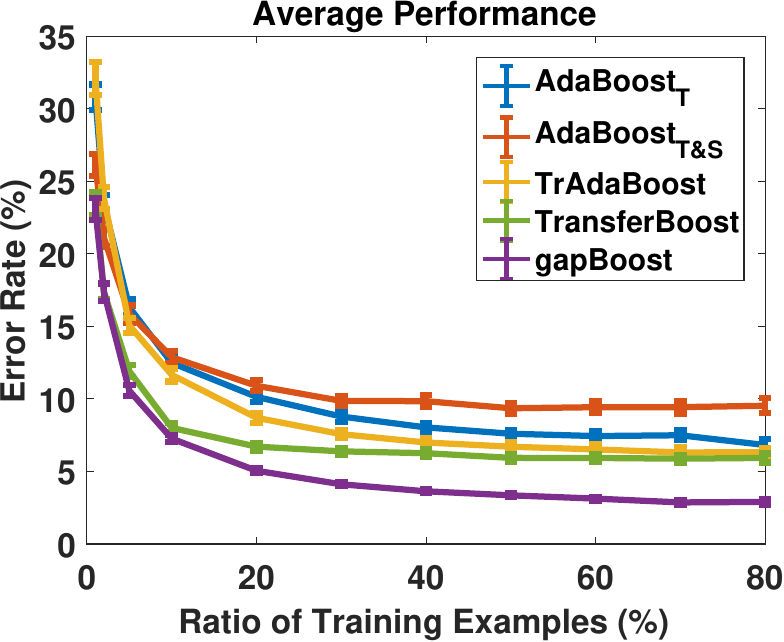}
\hfill~\\
\caption{Test error rates (\%) with different sizes of target sample on different tasks and on average across all tasks. $\afunc{gapBoost}$ consistently outperforms the baselines on all regimes of target sample size. Since $\afunc{gapBoost}$ more effectively leverages the target instances, its improvement over the baselines is more noticeable as the target sample size increases. Error bars represent standard error.}
\label{fig:ratio}
\end{figure*}

\subsubsection{Parameter sensitivity}
Next, we empirically evaluated our algorithms' sensitivity to the choice of hyper-parameters. We first fixed $\rho_\mathcal{T}=0$ and varied $\exp(\rho_\mathcal{S})$ in the range of $[0.1,\dots, 0.9]$. Figure~\ref{fig:rho} shows the results averaged over all transfer problems on the 20 Newsgroups data set, showing that as the size of the target sample increases, the influence of the hyper-parameter on performance decreases. In particular, we see that we are able to obtain a range of hyper-parameters for which our method outperforms all baselines in all sample size regimes.

\subsubsection{Increase the weight of a target instance when auxiliary learners coincide} 
To further minimize the gap, we can modify the weight update rule for target data: $\beta^\mathcal{T} = \rho_\mathcal{T} \mathbbm{1}_{ h_k^\mathcal{S}(x^\mathcal{T}) = h_k^\mathcal{T}(x^\mathcal{T})} + \alpha_k \mathbbm{1}_{ h_k(x^\mathcal{T}) \neq y^\mathcal{T}}$ with $\rho_\mathcal{T} \ge 0$. We vary $\rho_\mathcal{S}$ and $\rho_\mathcal{T}$ together, and the results are shown in Figure~\ref{fig:rhoTrhoS}. It can be observed that $\afunc{gapBoost}$ can achieve even better performance by focusing more on performance gap minimization (i.e., choosing large $\rho_\mathcal{S}$ and $\rho_\mathcal{T}$). As the target data increase, the results are less sensitive to the hyper-parameters.

\begin{figure*}[t]\small
\centering
\includegraphics[width=.4\textwidth]{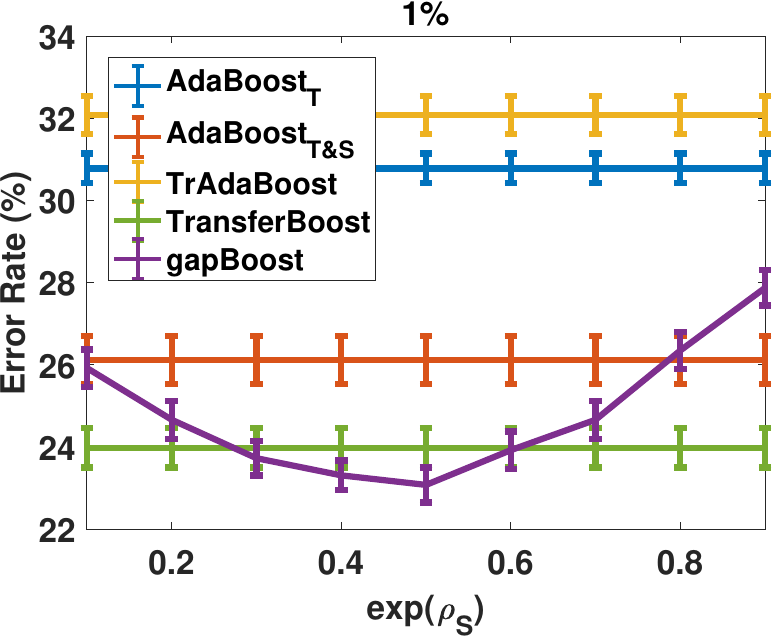}\hspace{10pt}
\includegraphics[width=.4\textwidth]{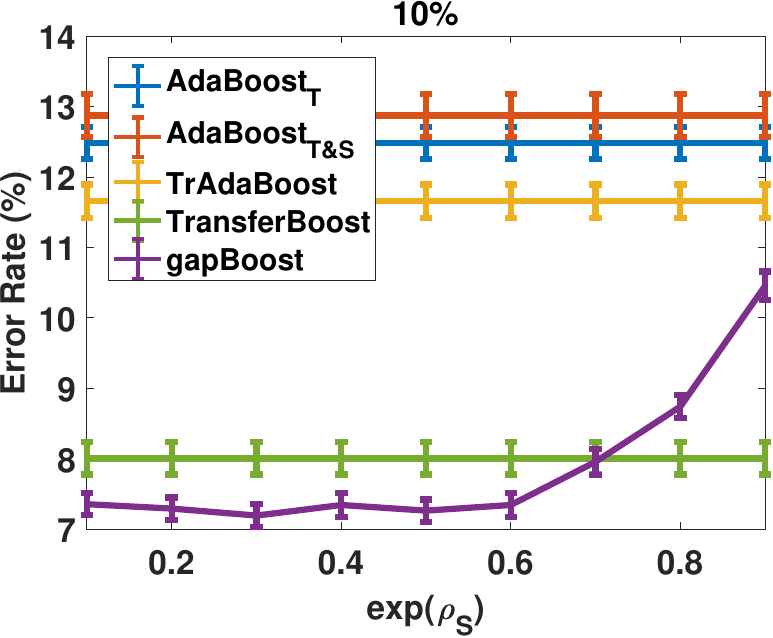}~\\[1em]
\includegraphics[width=.4\textwidth]{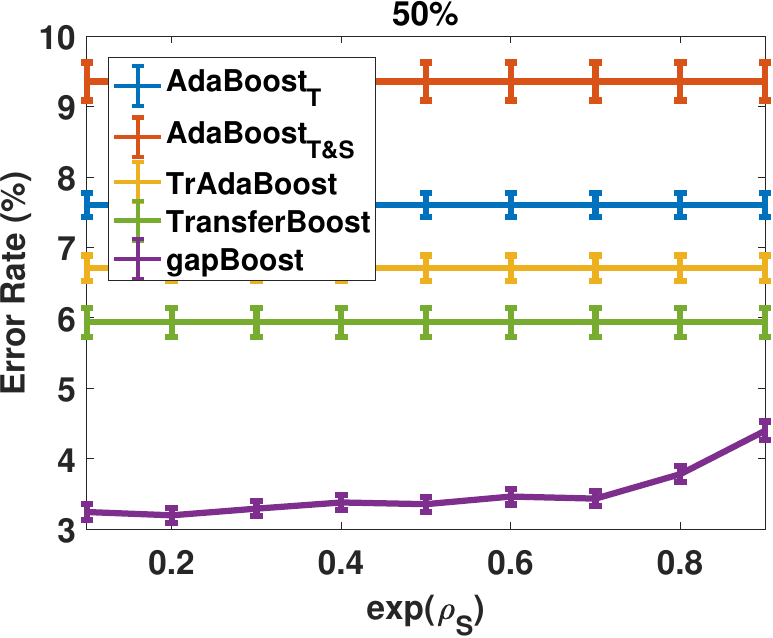}\hspace{10pt}
\includegraphics[width=.4\textwidth]{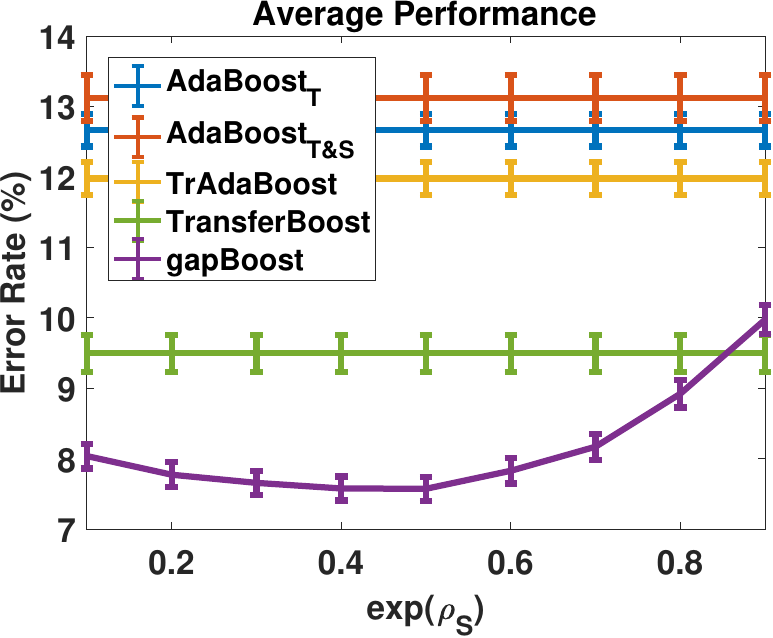}
\caption{Test error rates (\%) averaged across all tasks with respect to the values of the hyper-parameter $\rho_\mathcal{S}$ for varying sample sizes. The rightmost graphic shows results averaged over all sample sizes. $\afunc{gapBoost}$ becomes less sensitive to the choice of $\rho_\mathcal{S}$ as the target sample grows larger. In all cases, there is a range of $\rho_\mathcal{S}$ that outperforms all baselines. Error bars represent standard error.}
\label{fig:rho}
\end{figure*}

\begin{figure*}[t]\small
\centering
\includegraphics[width=.44\textwidth]{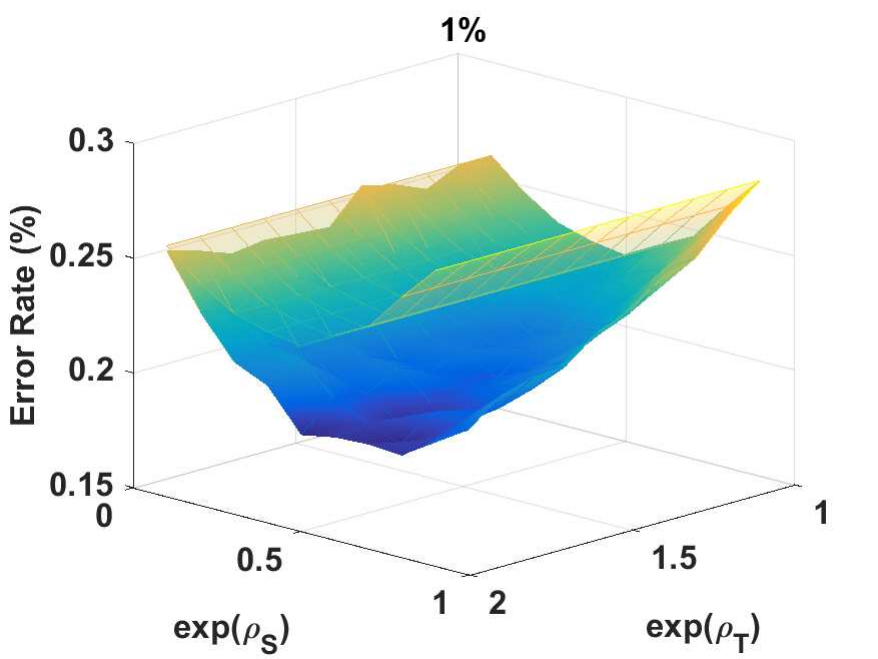}\hspace{10pt}
\includegraphics[width=.44\textwidth]{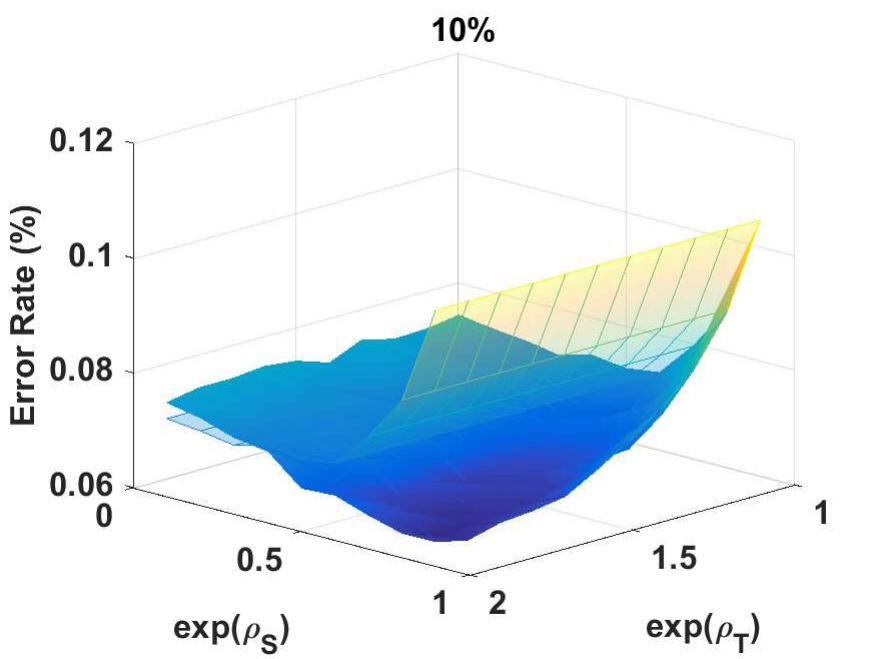}\\[1em]
\includegraphics[width=.44\textwidth]{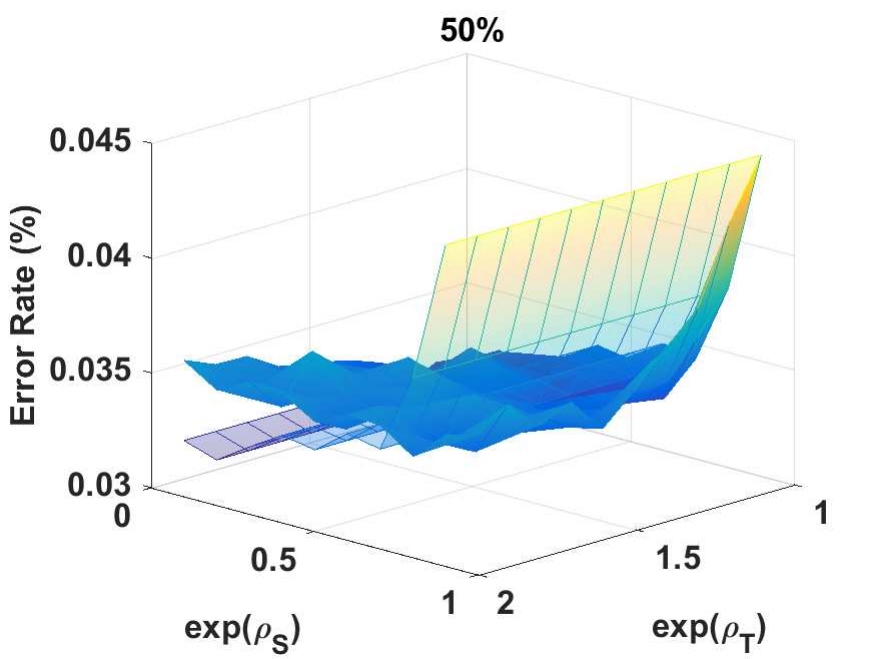}\hspace{10pt}
\includegraphics[width=.44\textwidth]{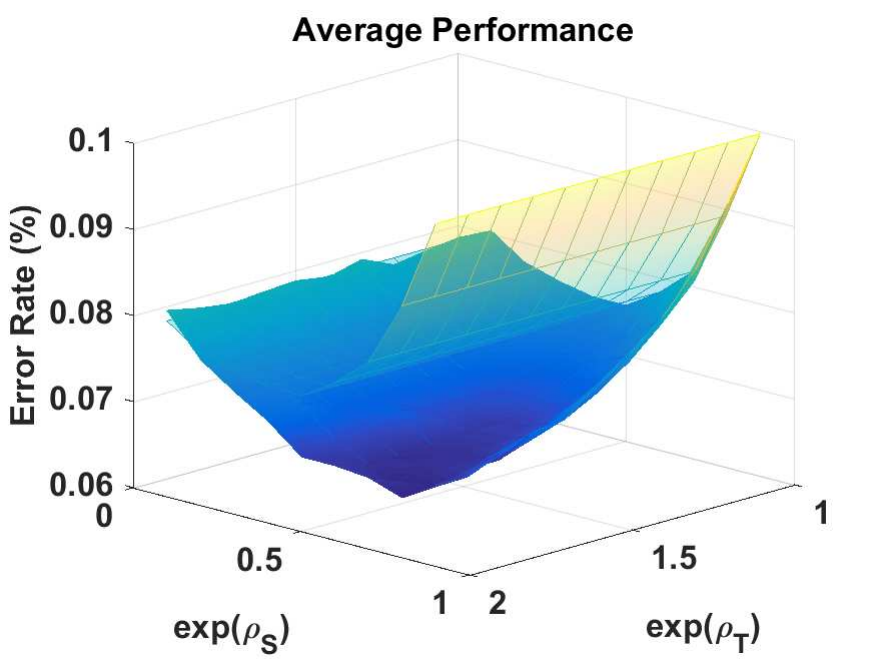}
\caption{Test error rates ($\%$) with varying $\rho_\mathcal{S}$ and $\rho_\mathcal{T}$. The valley curves correspond to $\rho_\mathcal{T} = 0$ (i.e., the purple curves in Figure~\ref{fig:rho}). Hence, regions below the curve indicate better hyper-parameters. } 
\label{fig:rhoTrhoS}
\end{figure*}

\subsection{Multitask Learning}\label{sec:multiexp}

Next, we examined $\afunc{gapMTNN}$  on four benchmark data sets: \textbf{Digits}~\citep{shui2019principled}, \textbf{PACS}~\citep{li2017deeper}, \textbf{Office-31}~\citep{saenko2010adapting}, and \textbf{Office-Home}~\citep{venkateswara2017Deep}. 
The Digits data set consists of tasks: MNIST, MNIST\_M (M-M), and SVHN, with 10 digit classes in each task. The PACS data set consists of images from four tasks: \emph{Photo} (P), \emph{Art painting} (A), \emph{Cartoon} (C), Sketch (S), with 7 different categories in each task. The Office-31 data set is a vision benchmark consisting of three different tasks: \emph{Amazon}, \emph{Dslr} and \emph{Webcam}, with 31 classes in each task. 
 Office-Home is a more challenging benchmark with four different tasks: \emph{Art} (Ar.), \emph{Clipart} (Cl.), \emph{Product} (Pr.) and \emph{Real World} (Rw.), each of which has 65 categories.

\begin{table*}[t]
\caption{Comparison of different methods on the Digits data set in terms of error rate (\%).  $\afunc{gapMTNN}$ outperforms all baselines in terms of average performance.}
\label{tab:results0}
\vspace{-6pt}
\begin{center}
\begin{footnotesize}
\begin{tabular}{p{7pt}p{36pt}p{41pt}p{41pt}p{41pt}p{41pt}p{41pt}p{41pt}p{41pt}}
\toprule
                      & & $\,\, \text{Uniform}$    &   $\text{Weighted}$    & \,\, Adv.H     &  \,\, Adv.W &  \mbox{Multi-Obj}  &  AMTNN  &   $\afunc{gapMTNN}$\\
\midrule
\multirow{5}{*}{$3K$} 
& MNIST    & $6.07_{\pm3.24}$            & $10.69_{\pm3.31}$            & $9.90_{\pm1.21}$
            & $3.21_{\pm0.62}$            & $\bf 2.48_{\pm0.27}$   & $3.13_{\pm0.22}$
            & $3.23_{\pm0.28}$  \\
& SVHN      & $42.73_{\pm0.42}$            & $29.78_{\pm1.78}$        & $29.18_{\pm0.51}$   
            & $30.50_{\pm1.06}$            & $45.22_{\pm0.32}$            & $22.86_{\pm0.92}$
            & $\bf {19.62}_{\pm0.42}$ \\
& MNIST\_M    & $22.88_{\pm2.58}$            & $23.62_{\pm3.05}$            & $18.80_{\pm1.33}$   
            & $\bf 18.67_{\pm0.71}$            & $23.09_{\pm0.53}$            & $19.18_{\pm1.54}$          
            & $18.98_{\pm0.51}$   \\
& avg.      & \hspace{9pt} $23.89$     & \hspace{9pt} $21.36$     & \hspace{9pt} $19.29$  
            & \hspace{9pt} $17.46$     & \hspace{9pt} $23.60$     & \hspace{9pt} $15.07$  
            & \hspace{9pt} $\bf 13.94$ \\
\hline
\multirow{5}{*}{$5K$} 
& MNIST    & $3.75_{\pm1.19}$            & $8.20_{\pm2.71}$            & $8.13_{\pm2.57}$   
            & $2.52_{\pm0.15}$            & $\bf 1.78_{\pm0.21}$            & $2.28_{\pm0.14}$          
            & $2.54_{\pm0.40}$            \\
& SVHN      & $31.97_{\pm2.93}$            & $26.40_{\pm3.11}$             & $26.40_{\pm1.61}$   
            & $27.37_{\pm1.22}$            & $38.84_{\pm0.80}$            & $21.60_{\pm0.78}$ 
            & $\bf {17.47}_{\pm0.47}$       \\
& MNIST\_M    & $20.86_{\pm3.12}$            & $25.76_{\pm0.88}$            & $\bf 16.27_{\pm1.37}$   
            & $16.58_{\pm0.41}$            & $19.78_{\pm0.74}$            & $16.43_{\pm1.13}$          
            & $17.54_{\pm1.91}$     \\
& avg.      & \hspace{9pt} $18.86$     & \hspace{9pt} $20.12$     & \hspace{9pt} $16.93$ 
            & \hspace{9pt} $15.49$     & \hspace{9pt} $20.13$     & \hspace{9pt} $13.44$ 
            & \hspace{9pt} $\bf 12.52$        \\
\hline
\multirow{5}{*}{$8K$} 
& MNIST    & $2.26_{\pm0.50}$            & $7.66_{\pm2.59}$            & $5.07_{\pm1.58}$   
            & $1.89_{\pm0.32}$            & $\bf 1.50_{\pm0.31}$            & $1.90_{\pm0.16}$          
            & $3.04_{\pm0.05}$           \\
& SVHN      & $28.63_{\pm0.93}$            & $25.88_{\pm1.55}$            & $20.86_{\pm0.31}$   
            & $24.56_{\pm1.06}$            & $30.06_{\pm0.88}$            & $19.79_{\pm1.34}$          
            & $\bf 15.90_{\pm0.28}$           \\
& MNIST\_M   & $16.32_{\pm2.19}$            & $23.12_{\pm3.07}$            & $\bf 14.84_{\pm0.34}$   
            & $15.70_{\pm0.43}$            & $17.21_{\pm0.49}$            & $16.92_{\pm2.12}$          
            & $15.46_{\pm0.35}$           \\
& avg.      & \hspace{9pt} $15.74$     & \hspace{9pt} $18.89$     & \hspace{9pt} $13.59$ 
            & \hspace{9pt} $14.05$     & \hspace{9pt} $16.27$     & \hspace{9pt} $12.87$ 
            & \hspace{9pt} $\bf 11.47$   \\
\bottomrule
\end{tabular}
\end{footnotesize}
\end{center}
\vskip -0.1in
\end{table*}

\begin{table*}[t]
\caption{Comparison of different methods on the PACS data set in terms of error rate (\%).  $\afunc{gapMTNN}$ outperforms all baselines in most tasks. }
\label{tab:results1}
\vspace{-6pt}
\begin{center}
\begin{footnotesize}
\begin{tabular}{p{11pt}p{27pt}p{41pt}p{41pt}p{41pt}p{41pt}p{41pt}p{41pt}p{41pt}}
\toprule
                      & & $\,\, \text{Uniform}$    &   $\text{Weighted}$    & \,\, Adv.H     &  \,\, Adv.W &  \mbox{Multi-Obj}  &  AMTNN  &   $\afunc{gapMTNN}$\\
\midrule
\multirow{5}{*}{$10\%$} 
& A     & $21.24_{\pm1.14}$  & $17.32_{\pm1.08}$    & $23.23_{\pm1.64}$    & $21.22_{\pm1.09}$  & $20.58_{\pm1.81}$   & $\mathbf{17.24_{\pm0.38}}$   & $19.37_{\pm 0.42}$  \\
& C     & $18.17_{\pm1.42}$  & $13.78_{\pm0.57}$    & $15.68_{\pm0.42}$    & $16.11_{\pm1.22}$  & $16.62_{\pm1.31}$   & $13.33_{\pm 0.38}$           & $\mathbf{12.07_{\pm0.41}}$ \\
& P     & $12.91_{\pm0.37}$  & $10.32_{\pm1.12}$    & $11.73_{\pm0.62}$    & $12.43_{\pm1.38}$  & $12.97_{\pm0.48}$   & $8.74_{\pm0.91}$             & $\mathbf{5.57_{\pm 0.51}}$        \\
& S     & $16.62_{\pm0.62}$  & $15.10_{\pm0.71}$    & $16.00_{\pm0.61}$    & $15.98_{\pm0.88}$  & $17.08_{\pm1.05}$   & $18.78_{\pm0.81}$            & $\mathbf{8.11_{\pm 0.52}}$          \\
& avg.  & \hspace{9pt} $17.24$  & \hspace{9pt} $14.13$  & \hspace{9pt} $16.66$ & \hspace{9pt} $16.44$ & \hspace{9pt} $16.81$  & \hspace{9pt} $14.52$ & \hspace{9pt} $\bf 11.28$  \\
\hline
\multirow{5}{*}{$15\%$} 
& A     & $17.21_{\pm0.82}$            & $\mathbf{14.90_{\pm0.89}}$   & $17.36_{\pm0.83}$
        & $16.42_{\pm1.42}$            & $17.25_{\pm0.52}$            & $14.90_{\pm0.32}$
        & $16.55_{\pm0.36}$             \\
& C     & $13.33_{\pm0.87}$            & $12.14_{\pm0.72}$            & $12.20_{\pm0.67}$                        
        & $15.21_{\pm0.52}$            & $12.45_{\pm0.44}$            & $11.23_{\pm0.33}$
        & $\mathbf{10.18_{\pm0.56}}$          \\
& P     & $10.73_{\pm0.72}$            & $8.82_{\pm1.04}$             & $10.08_{\pm0.67}$                        
        & $15.47_{\pm0.72}$            & $10.91_{\pm0.64}$            & $7.13_{\pm0.44}$
        & $\mathbf{5.45_{\pm0.43}}$       \\
& S     & $15.09_{\pm0.67}$            & $12.73_{\pm0.72}$             & $13.91_{\pm0.43}$                       
        & $16.03_{\pm0.56}$             & $13.50_{\pm0.52}$            & $14.22_{\pm0.51}$                        
        & $\mathbf{6.87_{\pm 0.42}}$        \\
& avg.  & \hspace{9pt} $14.09$     & \hspace{9pt}  $12.15$    & \hspace{9pt}  $13.39$    
        & \hspace{9pt} $15.78$     & \hspace{9pt}  $13.53$    & \hspace{9pt}  $11.87$& 
        \hspace{9pt}     $\bf 9.76$   \\
\hline
\multirow{5}{*}{$20\%$} 
& A     & $15.91_{\pm0.89}$            & $13.88_{\pm0.56}$            & $15.64_{\pm0.70}$
        & $16.45_{\pm0.47}$            & $15.73_{\pm0.88}$            & $\mathbf{12.64_{\pm0.22}}$
        & $13.67_{\pm0.30}$           \\
& C     & $12.11_{\pm0.82}$            & $10.27_{\pm0.42}$            & $12.36_{\pm0.33}$
        & $10.45_{\pm0.73}$            & $11.27_{\pm0.40}$            & $10.09_{\pm0.56}$
        & $\mathbf{8.40_{\pm0.43}}$          \\
& P     & $9.52_{\pm0.61}$             & $7.91_{\pm1.23}$             & $8.54_{\pm0.42}$
        & $8.62_{\pm0.45}$             & $9.02_{\pm0.33}$             & $6.27_{\pm0.39}$         
        & $\mathbf{4.47_{\pm0.44}}$         \\
& S     & $14.21_{\pm1.21}$            & $11.64_{\pm0.89}$            & $11.68_{\pm0.41}$                        
        & $12.73_{\pm0.56}$            & $11.2_{\pm0.73}$             & $12.27_{\pm0.12}$                        
        & $\mathbf{6.16_{\pm 0.44}}$  \\
& avg.  & \hspace{9pt} $12.94$     & \hspace{9pt} $10.93$     & \hspace{9pt} $12.06$ 
        & \hspace{9pt} $12.06$     & \hspace{9pt} $11.81$     & \hspace{9pt} $10.32$ 
        & \hspace{9pt}   $\bf 8.18$       \\
\bottomrule
\end{tabular}
\end{footnotesize}
\end{center}
\vskip -0.1in
\end{table*}

\begin{table*}[t]
\caption{Comparison of different methods on the Office-31 data set in term of error rate (\%).  $\afunc{gapMTNN}$ outperforms all baselines in the majority of tasks. }
\label{tab:results2}
\vspace{-6pt}
\begin{center}
\begin{footnotesize}
\begin{tabular}{p{11pt}p{27pt}p{41pt}p{41pt}p{41pt}p{41pt}p{41pt}p{41pt}p{41pt}}
\toprule
                      & & $\,\, \text{Uniform}$    &   $\text{Weighted}$    & \,\, Adv.H     &  \,\, Adv.W &  \mbox{Multi-Obj}  &  AMTNN  &   $\afunc{gapMTNN}$\\
\midrule
\multirow{5}{*}{$5\%$} 
& Amazon    & $38.73_{\pm1.32}$            & $36.69_{\pm0.23}$            & $34.21_{\pm1.12}$
            & $33.50_{\pm1.89}$            & $\mathbf{31.13_{\pm1.18}}$   & $36.67_{\pm0.60}$
            & $31.77_{\pm1.67}$  \\
& Dslr      & $28.18_{\pm2.12}$            & $\bf 12.57_{\pm2.33}$        & $26.45_{\pm0.78}$   
            & $28.18_{\pm1.09}$            & $27.51_{\pm1.44}$            & $19.88_{\pm1.56}$
            & ${13.25}_{\pm1.40}$ \\
& Webcam    & $27.88_{\pm1.12}$            & $15.09_{\pm0.56}$            & $28.64_{\pm0.72}$   
            & $30.08_{\pm0.92}$            & $27.73_{\pm0.41}$            & $14.64_{\pm0.30}$          
            & $\mathbf{14.18_{\pm0.89}}$   \\
& avg.      & \hspace{9pt} $31.60$     & \hspace{9pt} $21.45$     & \hspace{9pt} $29.77$  
            & \hspace{9pt} $30.59$     & \hspace{9pt} $28.79$     & \hspace{9pt} $23.73$  
            & \hspace{9pt} $\bf 19.73$ \\
\hline
\multirow{5}{*}{$10\%$} 
& Amazon    & $26.82_{\pm0.51}$            & $29.36_{\pm1.21}$            & $28.98_{\pm0.89}$   
            & $25.31_{\pm1.13}$            & $25.36_{\pm0.89}$            & $28.73_{\pm1.20}$          
            & $\mathbf{23.56_{\pm0.63}}$            \\
& Dslr      & $19.36_{\pm1.44}$            & $7.91_{\pm0.91}$             & $15.91_{\pm0.91}$   
            & $14.09_{\pm0.80}$            & $13.18_{\pm1.12}$            & $\mathbf{7.21_{\pm0.89}}$ 
            & ${7.87}_{\pm1.81}$       \\
& Webcam    & $17.89_{\pm0.91}$            & $11.60_{\pm1.33}$            & $10.56_{\pm0.14}$   
            & $14.31_{\pm0.78}$            & $13.08_{\pm0.78}$            & $10.41_{\pm1.21}$          
            & $\mathbf{8.16_{\pm1.54}}$     \\
& avg.      & \hspace{9pt} $21.36$     & \hspace{9pt} $16.29$     & \hspace{9pt} $18.48$ 
            & \hspace{9pt} $17.90$     & \hspace{9pt} $17.21$     & \hspace{9pt} $15.45$ 
            & \hspace{9pt} $\bf 13.20$        \\
\hline
\multirow{5}{*}{$20\%$} 
& Amazon    & $20.56_{\pm0.84}$            & $23.21_{\pm0.93}$            & $20.31_{\pm0.53}$   
            & $20.73_{\pm0.56}$            & $20.77_{\pm0.78}$            & $19.80_{\pm0.91}$          
            & $\mathbf{17.38_{\pm0.89}}$           \\
& Dslr      & $8.82_{\pm1.01}$             & $\mathbf{3.40}_{\pm0.71}$    & $6.31_{\pm0.67}$   
            & $6.17_{\pm0.44}$             & $7.91_{\pm0.63}$             & $5.82_{\pm1.22}$          
            & $3.65_{\pm0.81}$    \\
& Webcam    & $6.91_{\pm0.82}$             & $4.40_{\pm0.52}$             & $6.32_{\pm0.64}$   
            & $7.77_{\pm0.90}$             & $5.26_{\pm0.56}$             & $5.58_{\pm0.89}$          
            & $\bf {4.08_{\pm0.72}}$    \\
& avg.      & \hspace{9pt} $12.10$     & \hspace{9pt} $10.34$     & \hspace{9pt} $10.98$ 
            & \hspace{9pt} $11.56$     & \hspace{9pt} $11.31$     & \hspace{9pt} $10.40$ 
            & \hspace{9pt} $\bf 8.37$   \\
\bottomrule
\end{tabular}
\end{footnotesize}
\end{center}
\vskip -0.1in
\end{table*}

\begin{table*}[t]
\caption{Comparison of different methods on the Office-Home data set in terms of error rate (\%).  $\afunc{gapMTNN}$ outperforms all baselines in most tasks. }
\label{tab:results3}
\vspace{-6pt}
\begin{center}
\begin{footnotesize}
\begin{tabular}{p{11pt}p{27pt}p{41pt}p{41pt}p{41pt}p{41pt}p{41pt}p{41pt}p{41pt}}
\toprule
                      & & $\,\, \text{Uniform}$    &   $\text{Weighted}$    & \,\, Adv.H     &  \,\, Adv.W &  \mbox{Multi-Obj}  &  AMTNN  &   $\afunc{gapMTNN}$\\
\midrule
\multirow{5}{*}{5\%} 
& Ar.       & $73.82_{\pm0.31}$            & $73.18_{\pm1.56}$            & $72.27_{\pm1.44}$ 
            & $73.21_{\pm0.82}$            & $74.36_{\pm1.50}$            & $67.50_{\pm1.33}$
            & $\bf 61.91_{\pm0.35}$  \\
& Cl.       & $69.91_{\pm0.22}$            & $68.78_{\pm1.83}$            & $67.94_{\pm1.45}$ 
            & $67.31_{\pm0.45}$            & $68.29_{\pm1.71}$            & $65.45_{\pm0.90}$          
            & $\bf 60.01_{\pm 1.02}$ \\
& Pr.       & $42.36_{\pm0.13}$            & $40.81_{\pm0.43}$            & $40.44_{\pm0.73}$ 
            & $41.70_{\pm0.89}$            & $41.32_{\pm1.32}$            & $43.72_{\pm0.78}$    
            & $\bf 37.85 _{\pm 0.82}$  \\
& Rw.       & $52.60_{\pm1.11}$            & $49.45_{\pm1.22}$            & $48.91_{\pm0.93}$ 
            & $52.92_{\pm0.37}$            & $48.47_{\pm0.89}$            & $50.14_{\pm1.75}$
            & $\bf 43.96_{\pm0.27}$  \\
& avg.      & \hspace{9pt} $59.67$     & \hspace{9pt}   $58.06$   & \hspace{9pt}  $57.39$ 
            & \hspace{9pt} $58.79$     & \hspace{9pt}  $58.11$    & \hspace{9pt}  $56.70$ 
            & \hspace{9pt} $\bf 50.93$  \\
\hline
\multirow{5}{*}{10\%} 
& Ar.       & $64.18_{\pm0.67}$            & $61.81_{\pm1.03}$            & $61.03_{\pm0.89}$                
            & $61.50_{\pm0.82}$            & $65.38_{\pm0.92}$            & $58.89_{\pm1.02}$ 
            & $\bf 55.67_{\pm0.70}$           \\
& Cl.       & $56.73_{\pm0.63}$            & $54.68_{\pm1.61}$            & $54.21_{\pm1.78}$ 
            & $55.64_{\pm0.67}$            & $56.66_{\pm1.43}$            & $52.45_{\pm0.83}$
            & $\bf 48.50_{\pm0.87}$        \\
& Pr.       & $32.86_{\pm0.42}$            & $30.88_{\pm0.23}$            & $30.61_{\pm0.41}$ 
            & $32.44_{\pm0.67}$            & $33.85_{\pm1.50}$            & $31.61_{\pm0.72}$
            & $\bf 28.76_{\pm0.45}$      \\
& Rw.       & $43.18_{\pm1.31}$            & $41.67_{\pm0.78}$            & $41.20_{\pm0.56}$ 
            & $40.46_{\pm0.89}$            & $43.19_{\pm0.73}$            & $41.10_{\pm0.90}$          
            & $\bf 37.82_{\pm0.27}$        \\
& avg.      & \hspace{9pt} $49.24$     & \hspace{9pt} $47.26$     & \hspace{9pt} $46.76$  
            & \hspace{9pt} $47.51$     & \hspace{9pt} $49.77$     & \hspace{9pt} $46.01$ 
            & \hspace{9pt} $\bf 42.69$   \\
\hline
\multirow{5}{*}{20\%} 
& Ar.       & $54.54_{\pm0.82}$            & $52.07_{\pm0.12}$            & $53.34_{\pm0.51}$ 
            & $52.12_{\pm0.49}$            & $53.80_{\pm0.82}$            & $\bf 51.11_{\pm0.50}$          
            & $51.20_{\pm1.03}$           \\
& Cl.       & $43.88_{\pm0.61}$            & $43.29_{\pm0.89}$            & $43.45_{\pm1.13}$ 
            & $43.33_{\pm0.56}$            & $43.44_{\pm0.50}$            & ${39.34}_{\pm0.41}$ 
            & $\bf 39.15_{\pm0.93}$            \\
& Pr.       & $25.62_{\pm0.67}$            & $24.36_{\pm0.56}$            & $24.38_{\pm0.43}$ 
            & $24.59_{\pm1.12}$            & $25.74_{\pm0.67}$            & $24.64_{\pm0.40}$          
            & $\bf 23.45_{\pm0.78}$         \\
& Rw.       & $37.41_{\pm0.62}$            & $35.20_{\pm0.89}$            & $34.88_{\pm0.74}$ 
            & $34.30_{\pm0.78}$            & $37.24_{\pm0.56}$            & $35.31_{\pm0.44}$          
            & $\bf 33.38_{\pm0.73}$          \\
& avg.      & \hspace{9pt} $40.36$     & \hspace{9pt} $38.73$     & \hspace{9pt} $39.01$ 
            & \hspace{9pt} $38.59$     & \hspace{9pt} $40.06$     & \hspace{9pt} $37.60$ 
            & \hspace{9pt} $\bf 36.80$    \\
\bottomrule
\end{tabular}
\end{footnotesize}
\end{center}
\vskip -0.1in
\end{table*}

Following the evaluation protocol in prior work~\citep{long2015learning,shui2019principled,zhou2021multi}, we evaluated $\afunc{gapMTNN}$ against the baselines when only part of the data is available. For the Digits data set, we randomly select $3K$, $5K$ and $8K$ instances for training. For the SVHN data set, we resize the images to $28\times 28 \times 1$; we do not apply any data augmentation on the Digits data set. A LeNet-5~\citep{lecun1998gradient} model is implemented as the feature extractor and three 3-layer MLPs are deployed as task-specific classifiers, with the features from the classifier being of size $128$. For the experiments on PACS data set, we randomly selected $10\%$, $15\%$ and $20\%$ of the total training data for training. We adopted the pre-trained AlexNet model (\emph{PyTorch} implementation;~\citep{paszke2019pytorch}) as a feature extractor, removing the last fully-connected layers, yielding a feature dimension of size 4096. For the  Office-31 and Office-Home data sets, we adopted the ResNet-18 model as a feature extractor by also removing the last fully-connected layers of the original implementations, yielding a feature dimension of size 512. We followed the evaluation protocol of~\citep{zhou2021multi} by choosing $5\%$, $10\%$ and $20\%$ of the data for training.
We trained the networks using the Adam optimizer, with an initial learning rate of $2e-4$, decaying by $5\%$ every $5$ epochs, for a total of $120$ epochs. Additional details on the experimental implementation can be found in Appendix~\ref{implementation}. 

\subsubsection{Performance Comparison}

We adopted the following algorithms as baselines:
\begin{itemize} 
    \item \textit{Uniform}: Treating all the tasks equally without any task alignments for training the deep networks.
    \item \textit{Weighted}: Adapted from~\citep{murugesan2016adaptive}, apply a weighted risk over all the tasks, where the weights are determined based on a probabilistic interpretation.
    \item \textit{Adv.H}: Implementing the method of \citep{liu2017adversarial} with the same loss function while training with $\mathcal{H}$-divergence as the adversarial objective.
    \item \textit{Adv.W}: Implementing the method of \citep{liu2017adversarial} with the same loss function while training with the Wasserstein distance based adversarial training method.
    \item \textit{Multi-Obj}: Treating the multitask learning problem as a multi-objective problem~\citep{sener2018multi}.
    \item \textit{AMTNN}: A Wasserstein adversarial training method for estimating the task relations~\citep{shui2019principled}.
\end{itemize}

The results on Digits, PACS, Office-31, and Office-Home data sets are, respectively, reported in Tables~\ref{tab:results0}~--~\ref{tab:results3}. It can be observed that  $\afunc{gapMTNN}$ outperforms the baselines in most cases. In particular, compared with AMTNN, which only aligns marginal distributions, $\afunc{gapMTNN}$ has a large margin of improvement especially when there are few labeled instances (e.g., 5\% of the total instances), which confirms the effectiveness of our methods when dealing with limited data.

\subsubsection{Visualizations of Task Relation Coefficients}
In order to further verify the task weighting principles of $\afunc{gapMTNN}$, we visualize the task relation coefficients $\Gamma$ learned from the Digits data set, as shown in Figure~\ref{fig:taskcoef}. From the figure, we observe the following: (1) the task coefficients are non-uniform and asymmetric, which highlights the importance of properly choosing the coefficients for combining the tasks rather than simply treating them equally. (2) The coefficients between MNIST and MNIST\_M are higher than the coefficients between SVHN and MNIST or MNIST\_M, which is reasonable as SVHN is less related to the other tasks~\citep{shui2019principled}. (3) MNIST is assigned a higher weight to itself than the other tasks (see the diagonal entries of the matrices). This can be due to the fact that MNIST is a relatively easy task, and therefore can be learned well without leveraging the knowledge from other tasks too much. (4) From left to right, the values of diagonal entries of the matrices indicate that each task relies on the other tasks less as  the number of training instances increases. This is also reasonable since once the sample size of each individual task becomes larger, the benefit of knowledge transfer and sharing is less significant for multitask learning.  

\begin{figure*}[t]\small
\centering
\includegraphics[width=.95\textwidth]{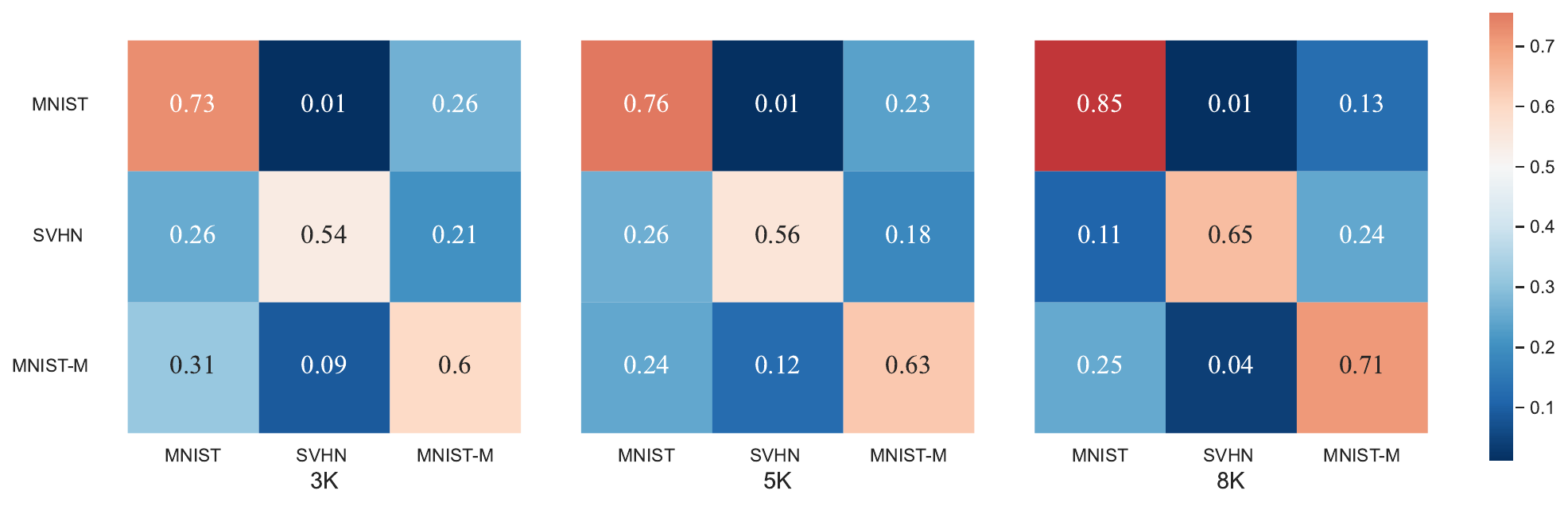}
\caption{Task relation coefficient matrices learned by $\afunc{gapMTNN}$ on the Digits data set with training set of $3K$, $5K$, and $8K$ instances, respectively}
\label{fig:taskcoef}
\end{figure*}

\subsubsection{Running Time Comparison}
To show the efficiency of the $\afunc{gapMTNN}$ method, we compared its training time against the baselines. Specifically, we evaluated multitask learning algorithms on the Digits ($8K$), PACS ($20\%$), Office-31 ($20\%$) and Office-Home ($20\%$) data sets, and report the relative time comparison of one training epoch in a relative percentage bar chart in Fig.~\ref{fig:time}. It can be observed that as $\afunc{gapMTNN}$ adopts the centroid alignment strategy, it achieves comparable efficiency with Multi-Obj, and is more efficient than the adversarial training based methods (e.g., Adv.H, Adv.W and AMTNN), especially on the Office-Home data set. Taking the results reported in Tables~\ref{tab:results0}~--~\ref{tab:results3} into consideration, $\afunc{gapMTNN}$ can improve the performances on the benchmark data sets while reducing the time needed for training. 
\begin{figure}[t]
\centering
\includegraphics[width=.8\textwidth]{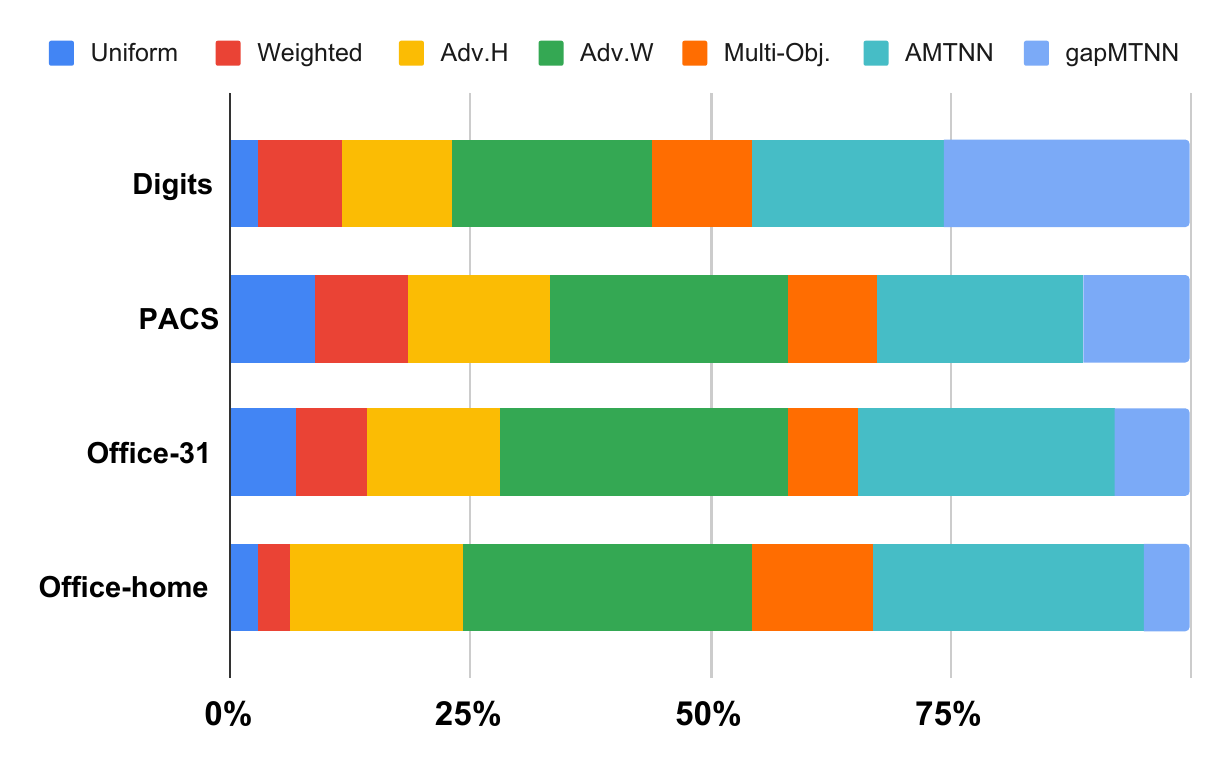}
\caption{Time comparison on different data sets.} 
\label{fig:time}
\end{figure}

\subsubsection{Ablation Studies}

We conducted ablation studies on the Office-31 data set with 20\% of the data to verify each component of $\afunc{gapMTNN}$. We compared the full version of $\afunc{gapMTNN}$ with the following ablated versions: (1) \emph{cls.~only}: train the model uniformly with only the classification objective. (2) \emph{w/o marginal alignment}: we omit the marginal alignment objective and train the model with semantic matching and task relation optimization. \mbox{(3)~\emph{w/o semantic matching}:} we omit the semantic conditional distribution matching objective and train the model with marginal alignment objective with task relation optimization. (4) \emph{w/o cvx opt.}: we omit task relation estimations and train the model  with marginal alignment and semantic matching objectives. (5) \emph{w/o marginal $\&$ semantic}: we remove both adversarial and semantic learning objectives. The results of the ablation studies are presented in Table~\ref{tab:ablation}, showing that distribution matching is crucial for the algorithm. Both marginal and conditional distribution matching improve learning performance.

\begin{table}[]
\centering
\caption{Ablation studies on Office-31 data set.}
\label{tab:ablation}
{\begin{tabular}{@{}l|lll|l@{}}
\toprule
\textbf{Method}         & \textbf{Amazon}               & \textbf{Dslr}          & \textbf{WebCam}         & \textbf{Avg.}        \\ \midrule
Cls. only  & $20.6_{\pm0.8}$           & $8.8_{\pm1.0}$         & $6.9_{\pm0.8}$           & $12.1$ \\
w/o marginal  &   $18.9_{\pm0.2}$          & $3.5_{\pm0.1}$         & $3.9_{\pm0.2}$           & $8.8$  \\
w/o sem. matching & $18.8_{\pm1.1}$ & $3.4_{\pm0.7}$ & $4.8_{\pm0.4}$ & $9.0$    \\
w/o cvx opt. & $18.2_{\pm0.3}$ & $4.7_{\pm0.9}$ & $3.8_{\pm 0.5}$ & $8.9$              \\
\begin{tabular}{@{}l}w/o marginal $\&$\\ sem. matching \end{tabular}  & $20.2_{\pm0.5}$          & $3.9_{\pm0.3}$          & $4.6_{\pm0.3}$          & $9.6$          \\
Full method    & $17.4_{\pm0.9}$          & $3.7_{\pm0.8}$         & $4.1\pm0.7$           & $8.4$                  \\ \bottomrule
\end{tabular}}
\end{table}

\section{Conclusion}\label{conclusion}

In this paper, we propose the notion of performance gap to measure the discrepancy between the tasks with labeled instances. We relate this notion with the model complexity and show that it can be viewed as a data- and algorithm-dependent regularizer, which eventually leads to gap minimization, a general principle that is applicable to both transfer learning and multitask learning. We propose $\afunc{gapBoost}$, $\afunc{gapBoostR}$, and $\afunc{gapMTNN}$ as three algorithmic instantiations that exploit the gap minimization principle. The empirical evaluation justifies the effectiveness of our algorithms.

The principle of performance gap minimization opens up several avenues for knowledge sharing and transfer. For example, it could be used to analyze strategies for other knowledge sharing and transfer scenarios such as domain generalization, multi-label learning, lifelong learning, or even knowledge transfer across different learning paradigms (e.g., between classification and regression). It could also be adopted for fair learning~\citep{shui2022fair,shuilearning}.
On the theoretical side, future directions could include extending the notion of performance gap to unlabeled data for domain adaptation, and to non-stationary environments. We plan to explore these questions in future work.

\section*{Acknowledgments}

We appreciate constructive feedback from the action editor, John Shawe-Taylor, and the anonymous reviewers. B.~Wang and G.~Xu are supported by the Natural Sciences and Engineering Research Council of Canada (NSERC), Discovery Grants program. C.~Shui and C.~Gagn\'e acknowledge the support from NSERC-Canada and the Canada Institute for Advanced Research (CIFAR) Artificial Intelligence Chairs program. J.A.~Mendez and E.~Eaton are partially supported by the DARPA Lifelong Learning Machines program under grant FA8750-18-2-0117, the DARPA SAIL-ON program under contract HR001120C0040, the DARPA ShELL program under agreement HR00112190133, and the Army Research Office under MURI grant W911NF20-1-0080.









\newpage

\appendix
\section{Proof of Theoretical Results}
\label{app:theorem}

\subsection{Instance Weighting}

In order to prove Theorem~\ref{maintheorem1}, we need some auxiliary results.
\begin{lemma}\label{lemma0}
Let $\mathcal{L}^{\Gamma}_\mathcal{D} = \mathcal{L}_{\mathcal{D}_1}^{\Gamma^1} + \mathcal{L}_{\mathcal{D}_2}^{\Gamma^2}$.
Then, for any $h \in \mathcal{H}$, we have
\begin{align*}
         \mathcal{L}_{\mathcal{D}_1}(h) \le  \mathcal{L}_{\mathcal{D}}^\Gamma + \|\Gamma^2\|_1 \dist_\mathcal{Y}(\mathcal{D}_1, \mathcal{D}_2) \enspace.
\end{align*}
\end{lemma}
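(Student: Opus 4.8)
The plan is to prove this as a purely deterministic (probability-free) identity followed by a one-line application of the $\mathcal{Y}$-discrepancy definition. First I would pin down the population-level meaning of the weighted loss: since the weights are nonnegative and the per-instance weights sum to one, the weighted population loss factors as $\mathcal{L}_{\mathcal{D}_i}^{\Gamma^i}(h) = \|\Gamma^i\|_1\,\mathcal{L}_{\mathcal{D}_i}(h)$, which is exactly the expectation of the weighted empirical loss $\sum_i \gamma_i\,\ell(h(x_i),y_i)$ when each $z_i$ is drawn i.i.d.\ from $\mathcal{D}_i$ (the weights pull out of the expectation and aggregate to $\|\Gamma^i\|_1$). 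Unpacking this factorization is really the only definitional step, and it is where I would be most careful, since the whole argument hinges on reading $\mathcal{L}_\mathcal{D}^\Gamma$ correctly.

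With that in hand, I would expand
\begin{align*}
\mathcal{L}_{\mathcal{D}}^\Gamma(h) = \|\Gamma^1\|_1\,\mathcal{L}_{\mathcal{D}_1}(h) + \|\Gamma^2\|_1\,\mathcal{L}_{\mathcal{D}_2}(h),
\end{align*}
and invoke the normalization constraint $\|\Gamma^1\|_1 + \|\Gamma^2\|_1 = 1$, i.e.\ $\|\Gamma^1\|_1 = 1 - \|\Gamma^2\|_1$. Substituting and rearranging gives the exact identity
\begin{align*}
\mathcal{L}_{\mathcal{D}_1}(h) - \mathcal{L}_{\mathcal{D}}^\Gamma(h) = \|\Gamma^2\|_1\bigl(\mathcal{L}_{\mathcal{D}_1}(h) - \mathcal{L}_{\mathcal{D}_2}(h)\bigr),
\end{align*}
so that the target loss is recovered up to a $\|\Gamma^2\|_1$-weighted gap between the two population risks of the \emph{same} hypothesis $h$.

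Finally I would bound that gap by the $\mathcal{Y}$-discrepancy: since $\|\Gamma^2\|_1 \ge 0$, and by the definition of $\dist_\mathcal{Y}$ we have $\mathcal{L}_{\mathcal{D}_1}(h) - \mathcal{L}_{\mathcal{D}_2}(h) \le \bigl|\mathcal{L}_{\mathcal{D}_1}(h) - \mathcal{L}_{\mathcal{D}_2}(h)\bigr| \le \sup_{h'\in\mathcal{H}} \bigl|\mathcal{L}_{\mathcal{D}_1}(h') - \mathcal{L}_{\mathcal{D}_2}(h')\bigr| = \dist_\mathcal{Y}(\mathcal{D}_1,\mathcal{D}_2)$, the identity immediately yields the claimed inequality. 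There is no genuine technical obstacle here—the result is essentially an add-and-subtract trick combined with the unit-sum weight constraint—so the only things to watch are the sign of $\|\Gamma^2\|_1$ (to preserve the inequality direction) and the correct interpretation of the weighted population loss flagged above.
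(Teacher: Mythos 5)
Your proposal is correct and follows essentially the same argument as the paper's proof: both expand the weighted population loss via linearity of expectation (so that $\mathcal{L}_{\mathcal{D}_i}^{\Gamma^i}(h) = \|\Gamma^i\|_1 \mathcal{L}_{\mathcal{D}_i}(h)$), use the unit-sum weight constraint to factor out $\|\Gamma^2\|_1(\mathcal{L}_{\mathcal{D}_1}(h)-\mathcal{L}_{\mathcal{D}_2}(h))$, and finish with the definition of the $\mathcal{Y}$-discrepancy. Your version is, if anything, marginally cleaner by noting the intermediate step is an exact identity before applying the absolute-value bound.
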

\begin{proof}
\begin{align*}
         \mathcal{L}_{\mathcal{D}_1}(h)-\mathcal{L}_{\mathcal{D}}^{\Gamma} (h)
         & \le \left|\mathcal{L}_{\mathcal{D}_1} (h)- \mathcal{L}_{\mathcal{D}_1}^{\Gamma^1}(h)- \mathcal{L}_{\mathcal{D}_2}^{\Gamma^2}(h) \right| \\
         & = \left|\mathcal{L}_{\mathcal{D}_1}(h)-  (1-\gamma_2) \mathcal{L}_{\mathcal{D}_1}(h) - \gamma_2 \mathcal{L}_{\mathcal{D}_2}(h) \right| \hspace{-40pt} &\text{linearity of expectation}\\
         & =\|\Gamma^2\|_1 \left| \mathcal{L}_{\mathcal{D}_1}(h) - \mathcal{L}_{\mathcal{D}_2}(h) \right| \\
          & \le\|\Gamma^2\|_1  \dist_\mathcal{Y}(\mathcal{D}_1, \mathcal{D}_2)  & \text{definition of $\mathcal{Y}$-discrepancy distance} 
\end{align*}
\end{proof}

\begin{definition}[{\bf Weight dependent uniform stability}]\label{definition1d}
    Let $h_{{S}}\in \mathcal{H}$ be the hypothesis returned by a learning algorithm $\mathcal{A}$ when trained on sample $S$ weighted by $\Gamma$. An algorithm $\mathcal{A}$ has weight dependent uniform stability, with $\beta_i \ge 0$, if the following holds:
   \begin{align*}
        \sup_{z \sim \mathcal{D}} \left| \ell(h_S(x),y)- \ell(h_{S^i}(x),y)\right| \le \beta_i \enspace, \qquad \forall {S}, {S}^i
    \end{align*}
    where ${S}^i$ is the training sample ${S}$ with the $i$-th example $z_i$ replaced by an i.i.d. example $z_i'$.
\end{definition}

We bound the generalization error for weight dependent stable algorithms.
\begin{lemma}\label{lemmaas}
 Assume that the loss function is upper bounded by $B\ge0$. Let $S$ be a training sample of $N$ i.i.d. points drawn from some distribution $\mathcal{D}$, weighted by $\Gamma$, and let $h_S$ be the hypothesis returned by a weight dependent stable learning algorithm $\mathcal{A}$. Then, for any $\delta \in (0,1)$, with probability at least $1-\delta$, the following holds:
    \begin{align*}
         \mathcal{L}_{\mathcal{D}}^\Gamma (h_S) \le \mathcal{L}_S^\Gamma (h_S) + \beta + (\Delta + \beta + \|\Gamma\|_\infty B)\sqrt{\frac{N\log \frac{1}{\delta}}{2}} \enspace,
     \end{align*}
where $\beta = \max\{\beta_i\}_{i=1}^N$, $\Delta = \sum_{i=1}^N \gamma_i \beta_i $ and $\|\Gamma\|_\infty = \max \{\gamma_i\}_{i=1}^N$.
\end{lemma}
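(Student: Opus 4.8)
The plan is to follow the classical \emph{algorithmic-stability-implies-generalization} argument of Bousquet and Elisseeff, adapted to the weighted empirical loss $\mathcal{L}_S^\Gamma$ and to the per-instance stability coefficients $\beta_i$ of Definition~\ref{definition1d}. I would introduce the (weighted) defect function
\[
\Phi(S) = \mathcal{L}_\mathcal{D}^\Gamma(h_S) - \mathcal{L}_S^\Gamma(h_S),
\]
whose high-probability upper bound is exactly what the lemma asserts. The proof rests on two ingredients: (i) a bound on $\mathbb{E}_S[\Phi(S)]$ obtained from stability, and (ii) a bounded-differences (McDiarmid) estimate for how much $\Phi$ can change when a single training example is resampled.

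For (i) I would control the expectation by a renaming argument. Writing $\mathcal{L}_\mathcal{D}^\Gamma(h_S) = \sum_i \gamma_i\,\mathbb{E}_{z_i'}[\ell(h_S(x_i'),y_i')]$, linearity gives $\mathbb{E}_S[\Phi(S)] = \sum_i \gamma_i\big(\mathbb{E}_{S,z_i'}[\ell(h_S,z_i')] - \mathbb{E}_S[\ell(h_S,z_i)]\big)$. Since $z_i$ and $z_i'$ are i.i.d., swapping their names turns the first expectation into $\mathbb{E}[\ell(h_{S^i},z_i)]$, so each summand equals $\gamma_i\,\mathbb{E}[\ell(h_{S^i},z_i)-\ell(h_S,z_i)]$, which is at most $\gamma_i\beta_i$ by Definition~\ref{definition1d}. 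Summing yields $\mathbb{E}_S[\Phi(S)]\le \sum_i\gamma_i\beta_i = \Delta \le \beta$ (the last step because the weights sum to one and $\beta_i\le\beta$), which supplies the leading $\beta$ term of the bound.

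For (ii) I would expand $\Phi(S)-\Phi(S^i)$ into the change of the expected loss and the change of the weighted empirical loss. The expected-loss part $\mathcal{L}_\mathcal{D}^\Gamma(h_S)-\mathcal{L}_\mathcal{D}^\Gamma(h_{S^i})$ is bounded by $\beta_i$ via stability. For the empirical part I would split $\mathcal{L}_S^\Gamma(h_S)-\mathcal{L}_{S^i}^\Gamma(h_{S^i})$ into a \emph{same-sample/different-hypothesis} piece $\sum_j\gamma_j[\ell(h_S,z_j)-\ell(h_{S^i},z_j)]$, controlled termwise by stability (again using $\sum_j\gamma_j=1$), and a single \emph{swapped-point} piece $\gamma_i[\ell(h_S,z_i)-\ell(h_{S^i},z_i')]$, which is the only place the loss bound enters and contributes at most $\gamma_i B \le \|\Gamma\|_\infty B$. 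Collecting these gives a per-coordinate bounded-difference constant of the form (stability contribution) $+\,\|\Gamma\|_\infty B$; assembling the stability contributions across the two pieces produces the coefficient $\Delta+\beta$. Feeding the resulting constant $\Delta+\beta+\|\Gamma\|_\infty B$ into McDiarmid's inequality (summing $N$ squared increments, which accounts for the $\sqrt{N}$ factor) and solving $\exp\!\big(-2\epsilon^2/\sum_i c_i^2\big)=\delta$ gives the deviation term $(\Delta+\beta+\|\Gamma\|_\infty B)\sqrt{N\log(1/\delta)/2}$.

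The main obstacle is the weighted bookkeeping in step (ii): one must apply Definition~\ref{definition1d} to the \emph{perturbed} index, use $\sum_i\gamma_i=1$ to avoid spurious factors of $N$, and ensure the loss bound $B$ is invoked only on the single resampled summand (yielding $\|\Gamma\|_\infty B$ rather than $B$). Getting these weighted constants to collapse to exactly $\Delta+\beta+\|\Gamma\|_\infty B$, rather than the cruder $2\beta+\|\Gamma\|_\infty B$, is the delicate part; everything else is standard McDiarmid mechanics.
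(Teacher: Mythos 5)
Your proposal mirrors the paper's own proof essentially step for step: the same defect function $\Phi^\Gamma(S)$, the same renaming/Jensen argument showing $\mathbb{E}[\Phi(S)]\le\beta$ (you derive the slightly finer $\mathbb{E}[\Phi(S)]\le\Delta\le\beta$, but it is used identically), the same decomposition of $\Phi(S)-\Phi(S^i)$ into an expected-loss change bounded by stability and an empirical-loss change in which only the single swapped summand invokes the loss bound $B$, yielding the same per-coordinate constant $\Delta+\beta+\|\Gamma\|_\infty B$ fed into McDiarmid's inequality. This is the paper's argument, so no further comparison is needed.
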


\begin{proof}
Let $\Phi^\Gamma(S) = \mathcal{L}_\mathcal{D}^\Gamma(h_S) - \mathcal{L}_S^\Gamma(h_S)$. Then, by the definition of $\Phi^\Gamma$, we have
\begin{align*}
        |\Phi(S)-\Phi(S^i)| \le |\mathcal{L}_{\mathcal{D}}^\Gamma (h_S)-\mathcal{L}_{\mathcal{D}}^\Gamma (h_{S^i})| + |\mathcal{L}_S^\Gamma(h_S)+ \mathcal{L}_{S^i}^\Gamma(h_{S^i})| \enspace.
\end{align*}

By the stability of the algorithm, we have\footnote{We write $\ell\left(h(x), y\right)$ as $\ell_z(h)$ for simplicity.}
\begin{align*}
        |\mathcal{L}_{\mathcal{D}}^\Gamma (h_S)-\mathcal{L}_{\mathcal{D}}^\Gamma (h_{S^i})| = |\mathbb{E}_{z\sim \mathcal{D}}[\ell_z( h_S)] - \mathbb{E}_{z\sim \mathcal{D}}[\ell_z( h_{S^i} )] | \le \beta \enspace,
\end{align*}
where $\beta = \max \{\beta_i\}_{i=1}^N$.
In addition, we also have
\begin{align*}
         |\mathcal{L}_S^\Gamma(h_S) - \mathcal{L}_{S^i}^\Gamma(h_{S^i}) | &= \left|\sum_{j \neq i} \gamma_j (\ell_{z_j}(h_S) - \ell_{z_j}(h_{S^i})) + \gamma_i(\ell_{z_i}(h_S) - \ell_{z_i'}(h_{S^i})) \right| \\
         & \le \left|\sum_{j \neq i} \gamma_j \left|\ell_{z_j}(h_S) - \ell_{z_j}(h_{S^i})\right| + \gamma_i \left|\ell_{z_i}(h_S) - \ell_{z_i'}(h_{S^i})\right| \right|  \\
         & \le \sum_{j \neq i} \gamma_j \beta_j + \gamma_i B \\
         & \le \Delta + \|\Gamma\|_\infty B
    \end{align*}
Consequently, $\Phi^\Gamma$ satisfies $|\Phi^\Gamma(S)-\Phi^\Gamma(S^i)| \le \sum_{i=1}^N \gamma_i \beta_i + \beta + \|\Gamma\|_\infty B$. By applying McDiarmid's inequality, we have
\begin{align}\label{eqmc}
    \text{Pr} \left[ \Phi (S) \ge \epsilon + \mathbb{E}[\Phi(S)]\right] \le \exp \left( \frac{-2\epsilon^2}{N\left(\Delta + \beta + \|\Gamma\|_\infty B \right)^2} \right) \enspace.
\end{align}
By setting $\delta = \exp \left( \frac{-2\epsilon^2}{N\left(\Delta + \beta + \|\Gamma\|_\infty B \right)^2} \right)$, we obtain $\epsilon = \left(\Delta + \beta + \|\Gamma\|_\infty B \right)\sqrt{\frac{N\log\frac{1}{\delta}}{2}}$. Plugging $\epsilon$ back to (\ref{eqmc}) and rearranging terms, with probability $1-\delta$, we have
\begin{align}\label{eqas1}
        \Phi(S) \le \mathbb{E}[\Phi(S)] + \left(\Delta + \beta + \|\Gamma\|_\infty B \right)\sqrt{\frac{N\log\frac{1}{\delta}}{2}} \enspace.
\end{align}
By the linearity of expectation, we have $\mathbb{E}[\Phi(S)] = \mathbb{E}_{S\sim \mathcal{D}^N}[\mathcal{L}^\Gamma_\mathcal{D}(h_S)] - \mathbb{E}_{S\sim \mathcal{D}^N}[\mathcal{L}_S^\Gamma(h_S)]$. By the definition of the generalization error, we have
\begin{align*}
        \mathbb{E}_{S\sim \mathcal{D}^N}[\mathcal{L}^\Gamma_\mathcal{D}(h_S)] = \mathbb{E}_{S,z\sim \mathcal{D}^{N+1}}[\ell_z(h_S)] \enspace.
\end{align*}
On the other hand, by the linearity of expectation, we have
\begin{align*}
        \mathbb{E}_{S\sim \mathcal{D}^N}[\mathcal{L}_S^\Gamma(h_S)] =  \mathbb{E}_{S\sim \mathcal{D}^N} \left[ \sum_{i=1}^N \gamma_i \ell_{z_i} (h_S) \right] = \mathbb{E}_{S,z\sim \mathcal{D}^{N+1}} \left[ \ell_{z} (h_{S'}) \right] \enspace,
\end{align*}
where $S'$ is a sample of $N$ data points containing $z$ drawn from the data set $\{S,z\}$. Therefore, we have
\begin{align*}
        \mathbb{E}[\Phi(S)]  & \le \left| \mathbb{E}_{S,z\sim \mathcal{D}^{N+1}}[\ell_z(h_S)] -  \mathbb{E}_{S,z\sim \mathcal{D}^{N+1}} \left[ \ell_{z} (h_{S'}) \right] \right| \\
        & \le \mathbb{E}_{S,z\sim \mathcal{D}^{N+1}} \left[\left| \ell_z(h_S) - \ell_{z} (h_{S'}) \right|\right]  & \text{Jensen's inequality} \\
        & \le \beta
\end{align*}
Replacing $\mathbb{E}[\Phi(S)]$ by $\beta$ in Eq.~(\ref{eqas1}) completes the proof.
\end{proof}

Lemma~\ref{lemma1} shows that the instance weighting algorithm has weight dependent stability.
\begin{lemma}\label{lemma1}
The instance weighting transfer learning algorithm with a $\rho$-Lipschitz continuous loss function and the regularizer $\mathcal{R}(h)=  \|h\|_2^2$ has weight dependent uniform stability, with
    \begin{align*}
         \beta_i \le    \frac{\gamma_i \rho^2 R^2}{\lambda} \enspace.
     \end{align*}
\end{lemma}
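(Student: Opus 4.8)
The plan is to follow the classical regularized-ERM stability argument of \citet{bousquet2002stability}, adapted to the weighted empirical loss. First I would define the weighted regularized objective $F_S(h) = \mathcal{L}_S^\Gamma(h) + \lambda \|h\|_2^2$ and observe that, since $\mathcal{L}_S^\Gamma$ is convex and $\lambda\|h\|_2^2$ is $2\lambda$-strongly convex, the map $F_S$ is $2\lambda$-strongly convex with unique minimizer $h_S$. The same holds for $F_{S^i}$ with minimizer $h_{S^i}$, where $S^i$ replaces the $i$-th example $z_i = (x_i, y_i)$ by an i.i.d.\ copy $z_i' = (x_i', y_i')$.

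The core of the argument is to control $\|h_S - h_{S^i}\|_2$. Using the standard consequence of $2\lambda$-strong convexity---that for a minimizer $h_S$ one has $F_S(h') - F_S(h_S) \ge \lambda \|h' - h_S\|_2^2$ for all $h'$---I would apply this to $h' = h_{S^i}$ and symmetrically with the roles of $S$ and $S^i$ interchanged, then add the two inequalities to obtain
\begin{align*}
2\lambda \|h_S - h_{S^i}\|_2^2 \le \big[F_S(h_{S^i}) - F_{S^i}(h_{S^i})\big] - \big[F_S(h_S) - F_{S^i}(h_S)\big]\enspace.
\end{align*}
Since $F_S$ and $F_{S^i}$ differ only in their $i$-th weighted loss term, each bracketed difference equals $\gamma_i$ times a difference of loss values, so the right-hand side reduces to $\gamma_i$ times a combination of four loss evaluations. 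Grouping them so that each resulting pair shares a common input point (either $x_i$ or $x_i'$) and invoking $\rho$-Lipschitz continuity (Definition~\ref{definition2}), together with $|h(x) - h'(x)| = |\langle h - h', x\rangle| \le R\|h - h'\|_2$ from the linear-hypothesis and bounded-input assumptions, bounds the right-hand side by $2\gamma_i \rho R \|h_S - h_{S^i}\|_2$. Dividing through then gives $\|h_S - h_{S^i}\|_2 \le \frac{\gamma_i \rho R}{\lambda}$.

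Finally I would translate this parameter-space bound into a loss bound: for any $z = (x,y)$, Lipschitz continuity and the bounded-input assumption give
\begin{align*}
\big|\ell(h_S(x), y) - \ell(h_{S^i}(x), y)\big| \le \rho\,|h_S(x) - h_{S^i}(x)| \le \rho R \|h_S - h_{S^i}\|_2 \le \frac{\gamma_i \rho^2 R^2}{\lambda}\enspace,
\end{align*}
which is exactly the claimed weight-dependent stability coefficient $\beta_i$. I expect the only delicate step to be the bookkeeping in expanding $F_S(h_{S^i}) - F_{S^i}(h_{S^i}) - F_S(h_S) + F_{S^i}(h_S)$ into its four loss terms and regrouping them so that each pair shares a common evaluation point, since it is only after this regrouping that the Lipschitz estimate produces the factor $\|h_S - h_{S^i}\|_2$ needed to cancel against the strong-convexity lower bound. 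Everything else is routine.
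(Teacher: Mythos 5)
Your proposal is correct and is essentially the paper's own argument: the paper phrases the key step in terms of Bregman divergences of $\mathcal{V}_S$ and $\mathcal{V}_{S^i}$ at their minimizers (using non-negativity and additivity to isolate the $2\lambda\|h_S - h_{S^i}\|_2^2$ term), which is exactly your strong-convexity inequality in different notation, and the regrouping of the four loss evaluations, the Lipschitz/Cauchy--Schwarz estimate, and the final translation to the loss bound all coincide with the paper's proof.
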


\begin{proof}
Let $\mathcal{V}_{S}(h) = \mathcal{L}^\Gamma_{S}(h)+\lambda\mathcal{R}(h)$. By the definition of Bregman divergence, we have
    \begin{align*}
        d_{\mathcal{V}_{{S}^i}}\big({h}_{{S}},{h}_{{S}^i}\big) + d_{\mathcal{V}_{{S}}}\big({h}_{{S}^i},{h}_{{S}}\big) &= \mathcal{L}^\Gamma_{{S}^i} \big( {h}_{{S}} \big) - \mathcal{L}^\Gamma_{{S}^i} \big( {h}_{{S}^i} \big) +  \mathcal{L}^\Gamma_{{S}} \big( {h}_{{S}^i} \big) -  \mathcal{L}^\Gamma_{{S}} \big( {h}_{{S}} \big) \\
        & = \gamma_i \left( \ell (\langle {h}_{{S}}, x_i' \rangle, y_i') - \ell (\langle {h}_{{S}^i}, x_i' \rangle, y_i')  + \ell (\langle {h}_{{S}^i}, x_i \rangle, y_i) - \ell (\langle {h}_{{S}}, {x_i} \rangle, {y_i})\right) \\
        & \le \gamma_i \left( \rho \left| \langle {h}_{{S}}   -  {h}_{{S}^i}, x_i' \rangle \right|  +  \rho \left|  \langle {h}_{{S}}   -  {h}_{{S}^i}, x_i \rangle  \right| \right) \\
        & \le 2\gamma_i\rho R \left|\left| {h}_{{S}}   -  {h}_{{S}^i} \right|\right|_2
    \end{align*}
    where $h_{{S}}$ and $h_{{S}^i}$ are, respectively, the optimal solutions of $\mathcal{V}_{S}$ and $\mathcal{V}_{{S}^i}$.
    The first equality holds because of the first-order optimality condition~\citep{boyd2004convex} of $\mathcal{V}_{S}$ and $\mathcal{V}_{{S}^i}$, and the last two inequalities are, respectively, due to the Lipschitz continuity of loss function $\ell$ and the Cauchy-Schwarz inequality.

    Since $d_{\lambda\mathcal{R}}({h}_{{S}},{h}_{{S}^i}) = d_{\lambda\mathcal{R}}({h}_{{S}^i},{h}_{{S}})=\lambda\|{h}_{{S}}-{h}_{{S}^i}\|_2^2$, by the non-negative and additive properties of Bregman divergence, we have
    \begin{align*}
        \lambda \|{h}_{{S}}-{h}_{{S}^i}\|_2^2 \le \gamma_i \rho R  \left|\left| {h}_{{S}}   -  {h}_{{S}^i} \right|\right|_2 \enspace,
     \end{align*}
     which gives
     \begin{align*}
         \|{h}_{{S}}-{h}_{{S}^i}\|_2 \le \frac{\gamma_i \rho R}{\lambda } \enspace.
     \end{align*}
     Consequently, by the Lipschitz continuity of $\ell$ and the Cauchy-Schwarz inequality, we have
     \begin{align*}
         \beta_i \le    \frac{\gamma_i\rho^2 R^2}{\lambda} \enspace.
     \end{align*}
\end{proof}

\begin{proof}[Proof of Theorem~\ref{maintheorem1}]
Combining Lemmas~\ref{lemma0}, \ref{lemmaas} and Lemma~\ref{lemma1}, we immediately obtain  Theorem~\ref{maintheorem1}
\end{proof}

\begin{proof}[Proof of Lemma~\ref{lemmait}]
By the definition of  $h_{\mathcal{T}}$, $h_{\mathcal{S}}$, and $h^*$, we have
\begin{align*}
         \mathcal{V}_{\mathcal{S}}(h_{\mathcal{S}}) \le \mathcal{V}_{\mathcal{S}}(h^*), \quad \text{and} \quad \mathcal{V}_{\mathcal{T}}(h_{\mathcal{T}}) \le \mathcal{V}_{\mathcal{T}}(h^*) \enspace,
\end{align*}
which gives
\begin{align} \label{tfineq1}
         \mathcal{V}_{\mathcal{S}}(h_{\mathcal{S}}) + \mathcal{V}_{\mathcal{T}}(h_{\mathcal{T}}) \le \mathcal{V}(h^*)+(2\eta -1) \lambda\mathcal{R}(h^*) \enspace.
\end{align}
On the other hand, we also have
\begin{align} \label{tfineq2}
          \mathcal{V}(h^*) \le \mathcal{V}(h_{\mathcal{S}}) \enspace,
\end{align}
and
\begin{align} \label{tfineq3}
           \mathcal{V}(h^*) \le \mathcal{V}(h_{\mathcal{T}}) \enspace.
\end{align}
From (\ref{tfineq1}) and (\ref{tfineq2}), we have
\begin{align} \label{tfineq4}
          \lambda(1-2\eta) \mathcal{R}(h^*)  & \le \mathcal{V}(h^*) - \mathcal{V}_{\mathcal{S}}(h_{\mathcal{S}}) - \mathcal{V}_{\mathcal{T}}(h_{\mathcal{T}}) \\
         & \le \mathcal{V}(h_{\mathcal{S}}) - \mathcal{V}_{\mathcal{S}}(h_{\mathcal{S}}) - \mathcal{V}_{\mathcal{T}}(h_{\mathcal{T}}) \nonumber \\
         & =  \nabla_\mathcal{T} + \lambda (1-\eta)\mathcal{R}(h_{\mathcal{S}})   - \eta \lambda \mathcal{R}(h_{\mathcal{T}}) \nonumber
\end{align}
Similarly, from (\ref{tfineq1}) and (\ref{tfineq3}), we also have
\begin{align} \label{tfineq5}
         &\lambda(1-2\eta) \mathcal{R}(h^*) \le  \nabla_\mathcal{S} + \lambda (1-\eta)\mathcal{R}(h_{\mathcal{T}})   - \eta \lambda \mathcal{R}(h_{\mathcal{S}})
\end{align}
Combining (\ref{tfineq4}) and (\ref{tfineq5}) gives
\begin{align*}
         \mathcal{R}(h^*) \le \frac{\nabla}{2\lambda(1-2\eta)}   + \frac{1}{2}\left( \mathcal{R}(h_{\mathcal{S}}) + \mathcal{R}(h_{\mathcal{T}})\right) \enspace.
\end{align*}
Substituting $\mathcal{R}(h) = \|h\|_2^2$ concludes the proof.
\end{proof}

\subsection{Feature Representation}

\begin{proof}[Proof of Theorem~\ref{maintheorem2}]
Following a similar proof for Theorem~\ref{maintheorem1}, we can first show that
\begin{align}\label{freq1}
\mathcal{L}_{\mathcal{D}_{\mathcal{T}}}(h\circ\Phi) - \mathcal{L}_{\mathcal{D}}(h\circ\Phi)  \le \frac{N_\mathcal{S}}{N} |\mathcal{L}_{\mathcal{D}_{\mathcal{T}}}(h\circ\Phi)- \mathcal{L}_{\mathcal{D}_{\mathcal{S}}}(h\circ\Phi)|  \leq \frac{N_\mathcal{S}}{N}\text{dist}_{\mathcal{Y}}(\mathcal{D}_{{\mathcal{T}}}^{\Phi}, \mathcal{D}_{{\mathcal{S}}}^{\Phi}) \enspace.
\end{align}
Similarly, we also have
\begin{align}\label{freq2}
 \mathcal{L}_{\mathcal{D}}(h_S \circ \Phi) \leq \mathcal{L}_{S}(h_S \circ \Phi) + \beta +  (2N\beta + B)\sqrt{\frac{\log(\frac{1}{\delta})}{2N}} \enspace,
\end{align}
and
\begin{align}\label{freq3}
 \beta \le \frac{\rho^2R^2}{\lambda N} \enspace.
\end{align}
Combining (\ref{freq1}), (\ref{freq2}) and (\ref{freq3}), we immediately obtain  Theorem~\ref{maintheorem2}.
\end{proof}

\subsection{Hypothesis Transfer}
\begin{proof}[Proof of Theorem~\ref{maintheorem3}]
The proof of Theorem~\ref{maintheorem3} follows  readily from Theorem 14.2 and Proposition 14.4 in~\citep{mohri2018foundations}, and therefore is omitted here.
\end{proof}

\begin{proof}[Proof of Lemma~\ref{lemmahtf}]
Let $\mathcal{V}(h) = \mathcal{L}_{S_\mathcal{T}}(h) + \lambda ||h-\langle H, \Xi \rangle||_2^2$. By the definition of $h^*$, we have
\begin{align*}
    \mathcal{V}(h^*) \le \mathcal{V}({\langle H, \Xi \rangle})  &\Rightarrow  \mathcal{L}_{S_\mathcal{T}}(h^*) + \lambda ||h^*-\langle H, \Xi \rangle||_2^2 \le   \mathcal{L}_{S_\mathcal{T}}(\langle H, \Xi \rangle) \\
    & \Rightarrow ||h^*-\langle H, \Xi \rangle||_2 \le \sqrt{\frac{\mathcal{L}_{S_\mathcal{T}}(\langle H, \Xi \rangle) - \mathcal{L}_{S_\mathcal{T}}(h^*)}{\lambda}} \\
     & \Rightarrow ||h^*||_2 \le \sqrt{\frac{\mathcal{L}_{S_\mathcal{T}}(\langle H, \Xi \rangle) - \mathcal{L}_{S_\mathcal{T}}(h^*)}{\lambda}} + ||\langle H, \Xi \rangle||_2 \\
     & \Rightarrow ||h^*||_2 \le \sqrt{\frac{\mathcal{L}_{S_\mathcal{T}}(\langle H, \Xi \rangle) - \mathcal{L}_{S_\mathcal{T}}(h_\mathcal{T})}{\lambda}} + ||\langle H, \Xi \rangle||_2
 \end{align*}
\end{proof}

\subsection{Task Weighting}

\begin{proof}[Proof of Theorem~\ref{twmtheorem}]
The  proof of Theorem~\ref{twmtheorem} follows a similar line as in Theorem~\ref{maintheorem1}.

Let $\mathcal{L}_\mathcal{D}^{\Gamma^j} (h)= \sum_{k=1}^K \gamma_k^j \mathcal{L}_{\mathcal{D}_k} (h)$. Then, we have
\begin{align}\label{twmeq1}
\mathcal{L}_{\mathcal{D}_j}(h) -  \mathcal{L}_\mathcal{D}^{\Gamma^j} (h) & = \sum_{k=1}^K \gamma_k^j \mathcal{L}_{\mathcal{D}_j}(h) - \sum_{k=1}^K \gamma_k^j \mathcal{L}_{\mathcal{D}_k} (h) \\
& \le \sum_{k\neq j} \gamma_k^j\left| \mathcal{L}_{\mathcal{D}_j} (h) - \mathcal{L}_{\mathcal{D}_k} (h) \right| \nonumber \\
&\le \sum_{k\neq j} \gamma_k^j \dist_\mathcal{Y}(\mathcal{D}_j, \mathcal{D}_k) \nonumber
\end{align}

On the other hand, let $h_S$ be the hypothesis returned by the task weighting multitask learning approach. Then, by Lemma~\ref{lemmaas}, we also have
    \begin{align}\label{twmeq2}
         \mathcal{L}_{\mathcal{D}}^{\Gamma^j} (h_S) \le \mathcal{L}_S^{\Gamma^j} (h_S) + {\beta}^j + ({\Delta} + {\beta}^j + \frac{\|{\Gamma}^j\|_\infty}{N} B)\sqrt{\frac{KN\log \frac{1}{\delta}}{2}} \enspace,
     \end{align}
     where  ${\beta^j} = \max \{{\beta}_k^j\}_{k=1}^{K}$, and  ${\Delta} = \sum_{k=1}^K \gamma^j_k \beta_k^j$.

     Similarly, by Lemma~\ref{lemma1}, we can prove that
      \begin{align}\label{twmeq3}
         \beta_k^j \le    \frac{\gamma_k^j \rho^2 R^2}{\lambda N} \enspace.
     \end{align}
     Combining (\ref{twmeq1}), (\ref{twmeq2}), and (\ref{twmeq3}), we immediately obtain Theorem~\ref{twmtheorem}.
\end{proof}

\begin{proof}[Proof of Lemma~\ref{lemmatwm}]
    Let $\mathcal{V}(h)= \sum_{k=1}^K \gamma_{k}^j \mathcal{L}_{k}(h) + \lambda\mathcal{R}(h)$, $\mathcal{V}_k(h)=\mathcal{L}_{k}(h) + \eta \lambda \mathcal{R}(h), \forall k=1,\dots, K$. Then, we have
  \begin{align*}
     \mathcal{V}_k(\bar{h}_k) \le \mathcal{V}_k(h_j^*)
     &\Rightarrow \sum_{k=1}^K \gamma_k^j \mathcal{V}_k(\bar{h}_k)  \le \sum_{k=1}^K \gamma_k^j \mathcal{V}_k(h_j^*) \\
     &\Rightarrow \sum_{k=1}^K \gamma_k^j [\mathcal{L}_k(\bar{h}_k) + \eta \lambda ||\bar{h}_k||_2^2] \le  \sum_{k=1}^K \gamma_k^j \mathcal{L}_k(h_j^*) + \eta \lambda ||h_j^*||_2^2
 \end{align*}
 On the other hand, for any $\bar{h}_j$, we also have
   \begin{align*}
     \mathcal{V}(h_j^*) \le \mathcal{V}(\bar{h}_j) \Rightarrow \sum_{k=1}^K \gamma_k^j \mathcal{L}_k(h_j^*) +  \lambda ||h_j^*||_2^2 \le  \sum_{k=1}^K \gamma_k^j \mathcal{L}_k(\bar{h}_j) +  \lambda ||\bar{h}_j||_2^2 \enspace,
 \end{align*}
 which gives
 \begin{align*}
   & \sum_{k=1}^K \gamma_k^j [\mathcal{L}_k(\bar{h}_k) + \eta \lambda ||\bar{h}_k||_2^2]  \le  \sum_{k=1}^K \gamma_k^j \mathcal{L}_k(h_j^*) + \eta \lambda ||h_j^*||_2^2 \le (\eta-1) \lambda  ||h_j^*||_2^2 +  \sum_{k=1}^K \gamma_k^j \mathcal{L}_k(\bar{h}_j) +  \lambda ||\bar{h}_j||_2^2 \\
    & \Rightarrow (1-\eta) \lambda_j  ||h_j^*||_2^2  \le  \sum_{k=1}^K \gamma_k^j \mathcal{L}_k(\bar{h}_j) +  \lambda ||\bar{h}_j||_2^2 - [\sum_{k=1}^K \gamma_k^j [\mathcal{L}_k(\bar{h}_k) + \eta \lambda ||\bar{h}_k||_2^2] ] \\
    & \Rightarrow (1-\eta) \lambda  ||h_j^*||_2^2  \le (1-\eta)\lambda||\bar{h}_j||_2^2 +  \sum_{k=1}^K \gamma_k^j [\mathcal{L}_k(\bar{h}_j) +  \eta \lambda ||\bar{h}_j||_2^2] - [\sum_{k=1}^K \gamma_k^j [\mathcal{L}_k(\bar{h}_k) + \eta \lambda ||\bar{h}_k||_2^2] ]\\
     & \Rightarrow (1-\eta) \lambda  ||h_j^*||_2^2  \le (1-\eta)\lambda||\bar{h}_j||_2^2 + \sum_{k\neq j} \gamma_k^j [\mathcal{V}_k(\bar{h}_j)-\mathcal{V}_k(\bar{h}_k)] \\
     & \Rightarrow ||h_j^*||_2 \le \sqrt{||\bar{h}_j||_2^2+ \frac{\nabla}{(1-\eta)\lambda}}
 \end{align*}
\end{proof}

\subsection{Parameter Sharing}

\begin{proof}[Proof of Theorem~\ref{psmtheorem}]
The proof of the upper bound of $\mathcal{L}_{\mathcal{D}_j}(h_j^*)$ follows quite readily from Theorem 14.2 in~\citep{mohri2018foundations}. We only prove the upper bound of $\beta$.

Let $W=[w_0,w_1,\dots,w_K]$. We define the convex function $\mathcal{V}_{{S}}(W)$ as
\begin{align*}
  \mathcal{V}_{S}(W) = \mathcal{L}_{S}(W) + \mathcal{N}(W),
\end{align*}
where $\mathcal{L}_{S}(W) = \frac{1}{K}\sum_{k=1}^K\mathcal{L}_{S_k}(w_0+w_k)$ is the empirical loss of $W$ over $S$, and $\mathcal{N}(W)=\lambda_0||w_0||_2^2+ \frac{\lambda}{K} \sum_{k=1}^K ||w_k||_2^2$. Note that $\mathcal{N}(W)$ is a strictly convex function with respect to $W$.  By the  definition of Bregman divergence and the first-order optimality condition of $\mathcal{V}$, we have
\begin{align}\label{mteq1}
        & d_{\mathcal{V}_{S^j_i}}\big({W}_{S},W_{S^i_j}\big) + d_{\mathcal{V}_{S}}\big({W}_{S^i_j},{W}_{S}\big) \nonumber \\
         &= \mathcal{L}_{S^i_j} \big( {W}_{S} \big) - \mathcal{L}_{S^i_j} \big( W_{S^i_j} \big) +  \mathcal{L}_{S} \big( {W}_{S^i_j} \big) -  \mathcal{L}_{S} \big( W_{S} \big) \nonumber \\
        & = \frac{1}{KN} \Big( \ell (\langle {w_0}_{S} + {w_j}_{S}, {x_i^j}' \rangle, {y_i^j}') - \ell (\langle {w_0}_{S^i_j} + {w_j}_{S^i_j}, {x_i^j}' \rangle, {y_i^j}') \nonumber \\
        & \hspace{143pt}+ \ell (\langle {w_0}_{S^i_j} + {w_j}_{S^i_j}, {x_i^j} \rangle, {y_i^j}) - \ell (\langle {w_0}_{S} + {w_j}_{S}, {x_i^j} \rangle, {y_i^j})\Big) \nonumber \\
        & \le \frac{1}{KN} \Big( \rho\left| \langle {w_0}_{S}   -  {w_0}_{S^i_j} + {w_j}_{S}   -  {w_j}_{S^i_j}, {x_i^j}' \rangle \right|  + \rho \left| \langle {w_0}_{S}   -  {w_0}_{S^i_j} + {w_j}_{S}   -  {w_j}_{S^i_j}, x_i^j \rangle \right|   \Big) \nonumber \\
        & \le \frac{2\rho R}{KN} \left|\left| {w_0}_{S}   -  {w_0}_{S^i_j} + {w_j}_{S}   -  {w_j}_{S^i_j} \right|\right|_2 \enspace,
    \end{align}
where $S_j^i$ is the training set $S$ with the $i$-th training example of the $j$-th task, $z_i^j$, replaced by an i.i.d. point ${z_i^j}'$. The last two inequalities are, respectively, due to the Lipschitzness of the loss function $\ell$ and the Cauchy-Schwarz inequality.
    On the other hand, by the definition of $\mathcal{N}(W)$, we also have
     \begin{align}\label{mteq2}
        &d_{\mathcal{N}}\big(W_{S^i_j},W_{S}\big) = d_{\mathcal{N}}\big(W_{S},W_{S^i_j}\big)  \nonumber \\
        &=\lambda_0||{w_0}_{S} ||_2^2+\frac{\lambda}{K}\sum_{k=1}^K  ||{w_k}_{S} ||_2^2 - \left( \lambda_0||{w_0}_{S^i_j} ||_2^2+ \frac{\lambda}{K} \sum_{k=1}^K  ||{w_k}_{S^i_j} ||_2^2 \right) \nonumber \\
        & \hspace{160pt}- \left\langle W_{S}  - W_{S^i_j} ,  \left[ 2\lambda_0 {w_0}_{S^i_j}, \frac{2\lambda}{K} {w_1}_{S^i_j},\dots, \frac{2\lambda}{K} {w_K}_{S^i_j}\right] \right\rangle \nonumber \\
         &=\lambda_0||{w_0}_{S} ||_2^2+ \frac{\lambda}{K}\sum_{k=1}^K  ||{w_k}_{S} ||_2^2 + \left( \lambda_0||{w_0}_{S^i_j} ||_2^2+\frac{\lambda}{K}\sum_{k=1}^K  ||{w_k}_{S^i_j} ||_2^2 \right) \nonumber \\
         & \hspace{160pt} - \left\langle W_{S} ,  \left[ 2\lambda_0 {w_0}_{S^i_j}, \frac{2\lambda}{K} {w_1}_{S^i_j},\dots,  \frac{2\lambda}{K} {w_K}_{S^i_j}\right] \right\rangle \nonumber \\
         & = \lambda_0 \left|\left|{w_0}_{S} - {w_0}_{S^i_j}\right|\right|_2^2 + \frac{\lambda}{K}\sum_{k=1}^K  \left|\left|{w_k}_{S} - {w_k}_{S^i_j} \right|\right|_2^2 \enspace.
    \end{align}
     Combining (\ref{mteq1}) with  (\ref{mteq2}), and applying the non-negative and additive properties of Bregman divergence,  we have, for any task $j$, the following holds:
    \begin{align} \label{mteq3}
    \lambda_0 \left|\left|{w_0}_{S} - {w_0}_{S^i_j}\right|\right|_2^2 + \frac{\lambda}{K}  \left|\left|{w_j}_{S} - {w_j}_{S^i_j} \right|\right|_2^2  & \le \lambda_0 \left|\left|{w_0}_{S} - {w_0}_{S^i_j}\right|\right|_2^2 + \frac{\lambda}{K}\sum_{k=1}^K \left|\left|{w_k}_{S} - {w_k}_{S^i_j} \right|\right|_2^2 \nonumber \\
    & \le \frac{\rho R}{KN} \left|\left| {w_0}_{S}   -  {w_0}_{S^i_j} + {w_j}_{S}   -  {w_j}_{S^i_j} \right|\right|_2 \enspace.
    \end{align}
    Applying triangle inequality to (\ref{mteq3}) yields
    \begin{align*}
          \left|\left| {w_0}_{S}   -  {w_0}_{S^i_j} + {w_j}_{S}   -  {w_j}_{S^i_j} \right|\right|_2 \le \frac{\rho R}{\lambda_0 KN} + \frac{\rho R}{\lambda N} \enspace.
    \end{align*}
    Then, by  the Lipschitzness of the loss function and the Cauchy-Schwarz inequality, we have
    \begin{align*}
         \beta \le    \frac{\rho^2 R^2}{\lambda_0 K N} + \frac{\rho^2 R^2}{\lambda N} \enspace,
     \end{align*}
     which concludes the proof.
\end{proof}

\begin{proof}[Proof of Lemma~\ref{psmlemma}]
For any task $j$, by the definition of $h^*_j$ and $\bar{h}_j$, we have $\mathcal{V}_j(\bar{h}_j) \le \mathcal{V}_j (h_j^*)$, which gives
\begin{align}\label{mttheorem3eq1}
          \mathcal{V}_j(\bar{h}_j)  \le \mathcal{L}_{{S}_j}(w_0^* + w_j^*) + \bar{\lambda} ||w_0^* + w_j^*||_2^2  \le \mathcal{L}_{{S}_j}(w_0^* + w_j^*) + \lambda_0 ||w_0^*||_2^2 + \lambda ||w_j^*||_2^2 \enspace.
\end{align}
On the other hand, by the definition of $h_0$, we also have
\begin{align*}
  \frac{1}{K}\sum_{k=1}^K \left[ \mathcal{L}_{{S}_k} (w_0^*+w_k^*) + \lambda_0 ||w_0^*||_2^2 + \lambda ||w_k^*||_2^2 \right] \le \frac{1}{K}\sum_{k=1}^K \left[ \mathcal{L}_{{S}_k} (\bar{h}_0) + \lambda_0 ||\bar{h}_0||_2^2  \right] \enspace,
\end{align*}
which gives
\begin{align}\label{mttheorem3eq2}
    \mathcal{L}_{{S}_j} (w_0^*+w_j^*) + \lambda_0 ||w_0^*||_2^2 + \lambda ||w_j^*||_2^2  \le K \mathcal{V}_0 - \sum_{k\neq j} \left[ \mathcal{L}_{{S}_k} (w_0^*+w_k^*) + \lambda_0 ||w_0^*||_2^2 + \lambda ||w_k^*||_2^2 \right] \enspace.
\end{align}
Combining (\ref{mttheorem3eq1}) and (\ref{mttheorem3eq2}) yields
\begin{align*}
\bar{\lambda} ||w_0^* + w_j^*||_2^2  & \le \lambda_0 ||w_0^*||_2^2 + \lambda ||w_j^*||_2^2 \\
& \le K \mathcal{V}_0 - \sum_{k\neq j} \left[ \mathcal{L}_{{S}_k} (w_0^*+w_k^*) + \lambda_0 ||w_0^*||_2^2 + \lambda ||w_k^*||_2^2 \right] - \mathcal{L}_{S_j} (w_0^*+w_j^*) \\
& \le K \mathcal{V}_0 - \sum_{k\neq j} \mathcal{V}_k(\bar{h}_k) - \mathcal{L}_{S_j} (w_0^*+w_j^*) \\
& \le \nabla + \mathcal{V}_j(\bar{h}_j) - \mathcal{L}_{S_j} (w_0^*+w_j^*) \\
& \le \nabla + \bar{\lambda}||\bar{h}_j||_2^2 \enspace,
\end{align*}
which gives
\begin{align*}
    ||h_j^*||_2 \le \sqrt{ \frac{\nabla}{\bar{\lambda}} + ||\bar{h}_j||_2^2} \enspace.
 \end{align*}
\end{proof}

\subsection{Task Covariance}

\begin{proof}[Proof of Theorem~\ref{tcmtheorem}]
The proof of the upper bound of $\mathcal{L}_{\mathcal{D}_j}(h_j^*)$ follows quite readily from Theorem 14.2 in~\citep{mohri2018foundations} and Lemma~\ref{lemma0}. We only prove the upper bound of $\beta$.

Let $\mathcal{V}_S(H)= \mathcal{L}_S(H) + \mathcal{N}(H)$, where $\mathcal{L}_S(H)=\frac{1}{K}\sum_{k=1}^K \mathcal{L}_{S_k}(h_k)$ is the empirical loss over $K$ tasks, and $\mathcal{N}(H) = \text{tr}(H\Omega^{-1}H^\top)$. By the definition of Bregman divergence and the first-order optimality condition of $\mathcal{V}$, for any task $j$, we have
    \begin{align}\label{cmmeq1}
        & d_{\mathcal{V}_{{S}_j^i}}\big({H}_S,{H}_{S_j^i}\big) + d_{\mathcal{V}_{{S}}}\big(H_{S_j^i},H_S\big) \nonumber \\
        & \hspace{30pt}  = \mathcal{L}_{{S}^i_j} \big( H_S \big) - \mathcal{L}_{{S}^i_j} \big( H_{S_j^i} \big) +  \mathcal{L}_{{S}} \big( H_{S_j^i} \big) -  \mathcal{L}_{{S}} \big( H_S \big) \nonumber \\
        &\hspace{30pt} \le   \frac{1}{NK}  \left( \ell \big(  \langle h_S, {x_i^j}'\rangle,  {y_i^j}' \big) -  \ell \big( \langle h_{S_j^i},  {x_i^j}' \rangle, {y_i^j}'  \big) + \ell \big( \langle h_{{S}^j_i}, x_i^j\rangle, y_i^j \big) - \ell \big( \langle h_S, x_i^j\rangle,  y_i^j \big) \right) \nonumber \\
        &\hspace{30pt}  \le \frac{2 \rho R}{NK} ||h_S - h_{S_j^i}||_2 \enspace, 
    \end{align}
    where $S_j^i$ is the traing set $S$ with the $i$-th training example of the $j$-th task, $(x_i^j, y_i^j)$, replaced by an i.i.d. point $({x_i^j}', {y_i^j}')$.     On the other hand, by the definition of $\mathcal{N}(H)$, we also have
        \begin{align}\label{cmmeq2}
        d_{\mathcal{N}}\left(H_{S_j^i},H_S\right) & = d_{\mathcal{N}}\left(H_S,H_{S_j^i}\right) \nonumber \\
        & = \text{tr}\left(H_S\Omega^{-1}H_S^\top\right) - \text{tr}\left(H_{S_j^i}\Omega^{-1}H_{S_j^i}^\top\right) - 2\text{tr}\left(H_{S_j^i}\Omega^{-1}(H_S-H_{S_j^i})^\top\right) \nonumber\\
        &=\text{tr}\left((H_S-H_{S_j^i})\Omega^{-1}(H_S-H_{S_j^i})^\top\right) \enspace.
    \end{align}
    In addition, we also have
    \begin{align}\label{cmmeq3}
         \frac{||h_S-h_{S_j^i}||_2^2}{\sigma_\text{max}} &\le\frac{||H_S-H_{S_j^i}||_\mathsf{F}^2}{\sigma_\text{max}} \nonumber \\ & =\frac{1}{\sigma_\text{max}}\text{tr}\left((H_S-H_{S_j^i})(H_S-H_{S_j^i})^\top\right)  \nonumber \\
         & \le \text{tr}\left((H_S-H_{S_j^i})\Omega^{-1}(H_S-H_{S_j^i})^\top\right) \enspace.
    \end{align}

    Combining (\ref{cmmeq1}), (\ref{cmmeq2}), (\ref{cmmeq3}), and applying the non-negative and additive properties of Bregman divergence,  for any task $j$, we have
    \begin{align*}
        \frac{||h_S-h_{S_j^i}||_2^2}{\sigma_\text{max}} \le \text{tr}\left((H_S-H_{S_j^i})\Omega^{-1}(H_S-H_{S_j^i})^\top\right) \le \frac{ \rho R}{NK} ||h_S - h_{S_j^i}||_2 \Rightarrow ||h_S-h_{S_j^i}||_2 \le \frac{\sigma_{\text{max}}\rho R}{NK},
    \end{align*}
    which gives
    \begin{align*}
        \beta \le \frac{\sigma_{\text{max}}\rho^2 R^2}{NK} \enspace.
    \end{align*}
\end{proof}

\begin{proof}[Proof of Lemma~\ref{tcmlemma}]
For any task $j$, by the definition of $h_j^*$ and $\bar{h}_j$, we have $\mathcal{L}_{S_j}(\bar{h}_j) + {\frac{K}{\sigma_\text{max}}}||\bar{h}_j||_2^2\le \mathcal{L}_{S_j} (h_j^*) + {\frac{K}{\sigma_\text{max}}}||h_j^*||_2^2 $. On the other hand, we also have
     \begin{align*}
         \frac{1}{K}\sum_{k=1}^K \mathcal{L}_{S_k}(h_k^*) + \frac{1}{\sigma_\text{max}}\text{tr}({H^*}^\top H^*) &  \le \frac{1}{K}\sum_{k=1}^K \mathcal{L}_{S_k}(h_k^*) + \text{tr}(H^*\Omega^{-1} {H^*}^\top)  \\
         &\le \frac{1}{K}\sum_{k=1}^K \mathcal{L}_{S_k}(\bar{h}_0) + \text{tr}(\bar{H}_0\Omega^{-1} \bar{H}_0^\top) \enspace,
    \end{align*}
    where $\bar{H}_0=[\bar{h}_0,\dots,\bar{h}_0]$. Since $\text{tr}(H_0\Omega^{-1} H_0^\top) = \omega ||h_0||_2^2$, where $\omega$ is the sum of the elements of $\Omega^{-1}$,  for any task $j$, we have
     \begin{align*}
            \frac{K}{\sigma_\text{max}(\Omega)}||h_j^*||_2^2 &  \le \sum_{k=1}^K \mathcal{L}_{S_k}(\bar{h}_0) + K\omega ||\bar{h}_0||_2^2 - \sum_{k \neq j} \left[ \mathcal{L}_{S_k}(h_k^*) + \frac{K}{\sigma_\text{max}(\Omega)}||h_k^*||_2^2  \right]-\mathcal{L}_{S_j}(h_j^*)\\
          & \le \sum_{k=1}^K \mathcal{L}_{S_k}(\bar{h}_0) + K\omega ||\bar{h}_0||_2^2 - \sum_{k \neq j} \left[ \mathcal{L}_{S_k}(\bar{h}_k) + \frac{K}{\sigma_\text{max}(\Omega)}||\bar{h}_k||_2^2 \right] - \mathcal{L}_{S_j}(h_j^*)\\
          & \le \nabla + \mathcal{V}_j(\bar{h}_j) - \mathcal{L}_{S_j}(h_j^*) \\
          & \le  \nabla  + \frac{K}{\sigma_\text{max}(\Omega)}||\bar{h}_j||_2^2 \enspace,
    \end{align*}
    which gives
    \begin{align*}
           ||h^*_j||_2 \le \sqrt{\frac{\sigma_\text{max} \nabla}{K}+||\bar{h}_j||_2^2}\enspace.
    \end{align*}
\end{proof}

\subsection{$\afunc{gapBoost}$}

In order to prove Proposition \ref{boostpropositionA}, we need the following auxiliary result. 
\begin{lemma}\label{boostlemma}
Let $\mathcal{H}$ be a hypothesis class of real-valued functions returned by the transfer learning algorithm~(\ref{obj1})
\begin{align}\label{obj1}
    \min_{h\in \mathcal{H}}  \mathcal{L}_{S}^{\Gamma}(h)+  \lambda \mathcal{R}(h)\enspace,
\end{align}
with a $\rho$-Lipschitz continuous loss function. The convex hull of $\mathcal{H}$ is defined as
\begin{align*}
 \mathcal{F}  = \left\{ \sum_{k=1}^K \mu_k h_k(x):  \sum_{k=1}^K \mu_k = 1, \mu_k \geq 0, h_k \in \mathcal{H}, \forall k = \{1,\dots, K\} \right\} \enspace.
\end{align*}
Define $\Gamma = [\Gamma_1,\dots, \Gamma_K] \in \mathbb{R}^{N \times K}$, where for any $k\in 1,\dots,K$, $\Gamma_k = [\Gamma_k^\mathcal{T}; \Gamma^\mathcal{S}_k] =  [\gamma_{1,k}^{\mathcal{T}}, \dots,\gamma_{N_\mathcal{S},k}^{\mathcal{S}}]^\top \in \mathbb{R}^N$ are the weights for the $k$-th base learner.  Then, for any $\delta \in (0,1)$, we probability at least $1-\delta$, we have
\begin{align*}
 \mathfrak{R}_{\mathcal{D}_\mathcal{T}}(\mathcal{F})  \le \frac{\gamma_\infty^\mathcal{T}\rho R^2}{\lambda}\sqrt{2N\log\frac{2}{\delta}} \enspace,
\end{align*}
where $\gamma_\infty^\mathcal{T} = \max_k \{\|\Gamma_k^\mathcal{T}\|_\infty\}_{k=1}^K$ is the largest weight of the target sample over all the boosting iterations.
\end{lemma}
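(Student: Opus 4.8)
The plan is to reduce the Rademacher complexity of the convex hull $\mathcal{F}$ to that of the base class $\mathcal{H}$, then exploit the closed-form stationarity condition of the regularized objective~(\ref{obj1}) to bound the contribution of each base learner, and finally control the residual sign sum over the target sample by a concentration argument that supplies the high-probability $\sqrt{2N\log\frac{2}{\delta}}$ factor.

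First I would invoke the standard fact that passing to the convex hull leaves the Rademacher complexity unchanged, so that $\mathfrak{R}_{\mathcal{D}_\mathcal{T}}(\mathcal{F}) = \mathfrak{R}_{\mathcal{D}_\mathcal{T}}(\mathcal{H})$. This disposes of the simplex constraint $\sum_k\mu_k=1,\ \mu_k\ge 0$ and lets me work with individual base learners $h_k$. Each such $h_k$ is the minimizer of~(\ref{obj1}) with weight vector $\Gamma_k$ and regularizer $\mathcal{R}(h)=\|h\|_2^2$, so the first-order optimality condition yields the representation $h_k = -\frac{1}{2\lambda}\sum_{i}\gamma_{i,k}\,\ell'(h_k(x_i),y_i)\,x_i$, exactly the stationarity relation used in the proof of Lemma~\ref{lemma1}.

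Next I would write the target Rademacher sum as $\sup_{h\in\mathcal{H}}\bigl\langle h,\sum_j\sigma_j x_j^\mathcal{T}\bigr\rangle$ and substitute the stationarity expression, which by Cauchy--Schwarz factors into a product of (i) the norm of the weighted gradient $\bigl\|\sum_i\gamma_{i,k}\ell'_i x_i\bigr\|_2$, controlled by the $\rho$-Lipschitz bound $|\ell'|\le\rho$, the bound $\|x_i\|_2\le R$, and the largest target weight $\gamma_\infty^\mathcal{T}=\max_k\|\Gamma_k^\mathcal{T}\|_\infty$; and (ii) the sign sum $\bigl\|\sum_j\sigma_j x_j^\mathcal{T}\bigr\|_2$ over the target points. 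The first factor contributes the $\tfrac{\gamma_\infty^\mathcal{T}\rho R}{\lambda}$ scale (one $R$ from $\|x_i\|$, one $\rho$ from Lipschitzness, and $1/\lambda$ from the regularizer), while the second, being a bounded-increment function of the Rademacher and data variables, concentrates: a two-sided Hoeffding/McDiarmid estimate gives $\bigl\|\sum_j\sigma_j x_j^\mathcal{T}\bigr\|_2 \le R\sqrt{2N\log\frac{2}{\delta}}$ with probability at least $1-\delta$, supplying the remaining factor $R$ together with the $\sqrt{2N\log\frac{2}{\delta}}$ term.

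The main obstacle I anticipate is forcing the weight factor to collapse cleanly to the \emph{target} quantity $\gamma_\infty^\mathcal{T}$ alone rather than the full weight vector: the naive triangle-inequality bound $\sum_i\gamma_{i,k}\le 1$ erases the dependence on $\|\Gamma_k^\mathcal{T}\|_\infty$, so the argument must instead pull out the maximum target weight before applying concentration, which is precisely what ties the bound to the clipping threshold $\gamma_\text{max}$ used in $\afunc{gapBoost}$. A secondary subtlety is that $\mathcal{F}$ is data-dependent, since its members are themselves learned hypotheses; consequently the norm bound and the concentration step must be combined so as to justify a single high-probability statement in $\delta$ rather than a union bound over two distinct sources of randomness.
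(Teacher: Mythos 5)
There is a genuine gap here, and it is exactly the one you flagged as "the main obstacle" without closing it: your route cannot produce the factor $\gamma_\infty^\mathcal{T}$. The stationarity identity gives $h_k=-\frac{1}{2\lambda}\sum_i\gamma_{i,k}\,g_{i,k}\,x_i$ with $|g_{i,k}|\le\rho$, so the only thing Cauchy--Schwarz yields is $\|h_k\|_2\le\frac{\rho R}{2\lambda}\sum_i\gamma_{i,k}=\frac{\rho R}{2\lambda}$: the \emph{total} weight enters, and it is $1$ by normalization. No rearrangement of this deterministic norm bound can replace the total weight by the maximal per-instance target weight, and the concentration you then invoke is over the Rademacher signs $\sigma_j$, which are independent of $\Gamma_k$ and hence cannot introduce it either. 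The paper's mechanism is different in kind: it centers each base learner at its mean, writing $\sum_i\sigma_i\langle h_k-\mathbb{E}h_{k,S},x_i\rangle$, and bounds the deviation $\|h_k-\mathbb{E}h_{k,S}\|_2$ by a \emph{stability} argument --- by the Bregman-divergence computation behind Lemma~\ref{lemma1}, replacing a single training point moves $h_k$ by at most $\gamma_{i,k}\rho R/\lambda$, so a bounded-differences (McDiarmid-type) inequality over the draw of the sample gives $\|h_k-\mathbb{E}h_{k,S}\|_2\le\frac{\|\Gamma_k^\mathcal{T}\|_\infty\rho R}{\lambda}\sqrt{2N_\mathcal{T}\log\frac{2}{\delta}}$ with probability $1-\delta$. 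That single step is the source of both the $\gamma_\infty^\mathcal{T}$ factor and the $\sqrt{\log\frac{2}{\delta}}$ term; your proposal contains no substitute for it.

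This also resolves your "secondary subtlety" in the opposite direction from what you propose: the $\delta$ in the lemma refers to the randomness of the training sample, through which the data-dependent class $\mathcal{F}$ is built, not to the Rademacher variables. Inside $\mathfrak{R}_{\mathcal{D}_\mathcal{T}}$ the $\sigma_j$'s are integrated out, so the sign sum is handled in expectation, $\mathbb{E}\bigl\|\sum_j\sigma_jx_j\bigr\|_2\le R\sqrt{N_\mathcal{T}}$ by Jensen's inequality; attaching a high-probability statement to it, as you do, is both unnecessary and conceptually misplaced. A further, smaller point: because each $h_k$ is returned by (\ref{obj1}) with its own weight vector $\Gamma_k$, the base learners live in \emph{different} hypothesis classes, so the identity $\mathfrak{R}(\mathcal{F})=\mathfrak{R}(\mathcal{H})$ for convex hulls does not apply verbatim; the paper instead keeps the simplex supremum, observes it is attained at a vertex, and takes a maximum over $k$ --- which is precisely how $\gamma_\infty^\mathcal{T}=\max_k\|\Gamma_k^\mathcal{T}\|_\infty$ arises. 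To be fair, your stationarity argument does yield a correct inequality, roughly $\mathfrak{R}_{\mathcal{D}_\mathcal{T}}(\mathcal{F})\le\frac{\rho R^2}{2\lambda\sqrt{N_\mathcal{T}}}$, but it is weight-independent and therefore not the stated lemma: it loses exactly the dependence on the instance weights that the lemma exists to establish, namely the justification for the weight-clipping threshold $\gamma_\text{max}$ in $\afunc{gapBoost}$.
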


\begin{proof}
We derive the generalization bound from the unweighted target training sample, treating the source domain sample as a regularizer~\citep{liu2017understanding,liu2017algorithm}. Then, following the similar proof schema as in Lemma 7.4 of~\citep{mohri2018foundations}, we have
\begin{align*}
\mathfrak{R}_{\mathcal{D}_\mathcal{T}}(\mathcal{F}) &= \frac{1}{N_\mathcal{T}}\mathbb{E} \sup_{\substack{h_1\in \mathcal{H},\dots,h_K\in \mathcal{H} \\ \mu_k \ge 0, \sum_{k=1}^K \mu_k \le 1}}  \sum_{i=1}^{N_\mathcal{T}} \sigma_i \sum_{k=1}^K \mu_k h_k(x_i) \\
&= \frac{1}{N_\mathcal{T}}\mathbb{E} \sup_{\substack{h_1\in \mathcal{H},\dots,h_K\in \mathcal{H} \\ \mu_k \ge 0, \sum_{k=1}^K \mu_k \le 1}} \sum_{k=1}^K \mu_k \left( \sum_{i=1}^{N_\mathcal{T}} \sigma_i    h_k(x_i) \right) \\
& = \frac{1}{N_\mathcal{T}}\mathbb{E} \sup_{h_1 \in \mathcal{H},\dots,h_K\in \mathcal{H}} \max_{k\in\{1,\dots,K\}}  \sum_{i=1}^{N_\mathcal{T}} \sigma_i    \langle h_k - \mathbb{E}h_{k,S}, x_i \rangle \\
& \le \frac{1}{N_\mathcal{T}}\mathbb{E} \max_{k\in\{1,\dots,K\}} \sup_{h_1 \in \mathcal{H},\dots,h_K\in \mathcal{H}}  \|h_k - \mathbb{E}h_{k,S}\| \left|\left| \sum_{i=1}^{N_\mathcal{T}} \sigma_i  x_i\right|\right|_2  \\ 
& \le \frac{1}{N_\mathcal{T}} \max_{k\in\{1,\dots,K\}} \left\{\frac{\|\Gamma_k^\mathcal{T}\|_\infty\rho R}{\lambda}\sqrt{2N_\mathcal{T}\log\frac{2}{\delta}} \right\} \mathbb{E}  \left|\left| \sum_{i=1}^{N_\mathcal{T}} \sigma_i \gamma_i^k x_i\right|\right|_2 \\
& \le  \frac{\gamma_\infty^\mathcal{T}\rho R}{N_\mathcal{T}\lambda}\sqrt{2N_\mathcal{T}\log\frac{2}{\delta}} R\sqrt{N_\mathcal{T}} =  \frac{\gamma_\infty^\mathcal{T}\rho R^2}{\lambda}\sqrt{2\log\frac{2}{\delta}}
\end{align*}
Note that compared with Lemma 7.4 of~\citep{mohri2018foundations}, the main difference in our proof is that for each base learner, its hypothesis class defined by learning algorithm~(\ref{obj1}) is different from others.
\end{proof}

\begin{proof}[Proof of Proposition~\ref{boostpropositionA}]
Given Lemma~\ref{boostlemma}, by following the standard proof schema as in~\citep{mohri2018foundations}, we immediately obtain Proposition~\ref{boostpropositionA}.
\end{proof}

\subsection{$\afunc{gapMTNN}$}

\begin{proof}[Proof of Proposition~\ref{gapMTNNproposition}]
The prediction loss for each task $\mathcal{D}_t$ can be expressed as:
\begin{align*}
    \mathcal{L}_{\mathcal{D}_t}(h\circ g) & = \frac{1}{|\mathcal{Y}|}\sum_{y} \mathcal{L}_{\mathcal{D}_t(z|Y=y)}(h) \enspace.\\
\end{align*}
According to \citep{shui2021aggregating}, as for another task $k$, we have the following inequality:
\begin{align*}
   \mathcal{L}_{\mathcal{D}_j}(h_j^*)= \frac{1}{|\mathcal{Y}|}\sum_{y} \mathcal{L}_{\mathcal{D}_j(z|Y=y)}(h_j^*) \leq \frac{1}{|\mathcal{Y}|}\sum_{y} \mathcal{L}_{\mathcal{D}_k(z|Y=y)}(h_j^*) + \frac{\nu\rho}{|\mathcal{Y}|} \sum_{y} W_1(\mathcal{D}_j(z|y)\|\mathcal{D}_k(z|y)) \enspace.
\end{align*}
For any task $k\neq j$, we assign $\gamma^j_k$ for each task, then we have:
\begin{align*}
     \mathcal{L}_{\mathcal{D}_j}(h_j^*) = \sum_{k}\Gamma_t[k] \mathcal{L}_{\mathcal{D}_j}(h_j^*) & \leq \sum_{k=1}^K\gamma^j_k \mathcal{L}_{\mathcal{D}_k}(h^*_j) +  \frac{\nu\rho}{|\mathcal{Y}|} \sum_{k} \gamma^j_k \sum_{y} W_1(\mathcal{D}_t(z|y)\|\mathcal{D}_k(z|y))\\
    & = \mathcal{L}^{\Gamma^j}_{\mathcal{D}}(h_j^*)+  \frac{\nu\rho}{|\mathcal{Y}|} \sum_{k\neq j} \gamma^j_k \sum_{y} W_1(\mathcal{D}_t(z|y)\|\mathcal{D}_k(z|y)) \enspace.
\end{align*}
\end{proof}

\section{TrAdaBoost, TransferBoost, and TrAdaBoost.R2}\label{boosts}
For the sake of completeness, we include the pseudocode of TrAdaBoost, TransferBoost, and TrAdaBoost.R2, in Algorithms~\ref{alg:tradaboost}, \ref{alg:transferboost}, and \ref{alg:tradaboostR}, respectively.

\begin{algorithm}[t]\small
\caption{TrAdaBoost}
\label{alg:tradaboost}
  \textbf{Input:} $S_\mathcal{S}, S_{\mathcal{T}}, K$, a learning algorithm $\mathcal{A}$
  \begin{algorithmic}[1]
  \STATE Initialize $D^\mathcal{S}_1(i) = D^\mathcal{T}_1(i)=\frac {1}{N_\mathcal{S}+N_\mathcal{T}}$,  $\beta = 1/(1+\sqrt{2\log N_\mathcal{S}/K})$
    \FOR{\(k=1,\dots,K\)}
        \STATE Call $\mathcal{A}$ to train a base learner $h_k$ using $S_\mathcal{S}\cup S_\mathcal{T}$ with distribution $D^\mathcal{S}_k \cup D^\mathcal{T}_k$
        \STATE ${\epsilon_{k} = \sum_{i=1}^{N_\mathcal{S}} D_k^\mathcal{S}(i) \mathbbm{1}_{h_k(x_i^\mathcal{S})\neq y_i^\mathcal{S}}+\sum_{i=1}^{N_\mathcal{T}} D_k^\mathcal{T}(i)\mathbbm{1}_{h_k(x_i^\mathcal{T}) \neq y_i^\mathcal{T} }}$
        \STATE $\alpha_k=\log\frac{1-\epsilon_{k}}{\epsilon_{k}}$
        \FOR{$i = 1,\dots, N_\mathcal{S}$}
            \STATE{$\beta_i^\mathcal{S} = \beta \mathbbm{1}_{ h_k(x_i^\mathcal{S}) \neq y_i^\mathcal{S}}$}
            \STATE {$D^\mathcal{S}_{k+1}(i)=D^\mathcal{S}_{k}(i) \beta^\mathcal{S}_i$}
        \ENDFOR
        \FOR{$i = 1,\dots, N_\mathcal{T}$}
            \STATE{$\beta_i^\mathcal{T} =  \alpha_k \mathbbm{1}_{ h_k(x_i^\mathcal{T}) \neq y_i^\mathcal{T}}$}
            \STATE {$D^\mathcal{T}_{k+1}(i)=D^\mathcal{T}_{k}(i)\exp\left( \beta^\mathcal{T}_i \right)$}
        \ENDFOR
        \STATE Normalize $D^\mathcal{S}_{k+1}$ and $D^\mathcal{T}_{k+1}$ such that $\sum_{i=1}^{N_\mathcal{S}} D_{k+1}^\mathcal{S}(i)+\sum_{i=1}^{N_\mathcal{T}} D_{k+1}^\mathcal{T}(i) =1$
    \ENDFOR
  \end{algorithmic}
  \textbf{Output:}  $f(x) = \text{sign} \left(\sum_{k=\lceil K/2 \rceil}^K \alpha_k h_k(x) \right)$
\end{algorithm}

\begin{algorithm}[t]\small
\caption{TransferBoost}
\label{alg:transferboost}
  \textbf{Input:} $S_\mathcal{S}, S_{\mathcal{T}}, K$, a learning algorithm $\mathcal{A}$
  \begin{algorithmic}[1]
  \STATE Initialize $D^\mathcal{S}_1(i) = D^\mathcal{T}_1(i)=\frac {1}{N_\mathcal{S}+N_\mathcal{T}}$
    \FOR{\(k=1,\dots,K\)}
        \STATE Call $\mathcal{A}$ to train a base learner $h_k$ using $S_\mathcal{S}\cup S_\mathcal{T}$ with distribution $D^\mathcal{S}_k \cup D^\mathcal{T}_k$
        \STATE Call $\mathcal{A}$ to train an auxiliary learner $h_k^\mathcal{T}$ over target domain using $S_\mathcal{T}$ with distribution $D^\mathcal{T}_k $
        \STATE $\epsilon_{k} = \sum_{i=1}^{N_\mathcal{S}} D_k^\mathcal{S}(i) \mathbbm{1}_{h_k(x_i^\mathcal{S})\neq y_i^\mathcal{S}}+\sum_{i=1}^{N_\mathcal{T}} D_k^\mathcal{T}(i)\mathbbm{1}_{h_k(x_i^\mathcal{T}) \neq y_i^\mathcal{T}}$
        \STATE $\alpha_k=\log\frac{1-\epsilon_{k}}{\epsilon_{k}}$
        \STATE $\dot{\epsilon}_k^\mathcal{T} = \frac{\sum_{i=1}^{N_\mathcal{T}} D_k^\mathcal{T}(i)\mathbbm{1}_{h_k(x_i^\mathcal{T}) \neq y_i^\mathcal{T}}} {\sum_{i=1}^{N_\mathcal{T}} D_k^\mathcal{T}(i)}$, $\ddot{\epsilon}_k^\mathcal{T} = \frac{\sum_{i=1}^{N_\mathcal{T}} D_k^\mathcal{T}(i)\mathbbm{1}_{h_k^\mathcal{T}(x_i^\mathcal{T}) \neq y_i^\mathcal{T}}} {\sum_{i=1}^{N_\mathcal{T}} D_k^\mathcal{T}(i)}$
        \STATE $\zeta_k = \ddot{\epsilon}_k^\mathcal{T} -\dot{\epsilon}_k^\mathcal{T}$
        \FOR{$i = 1,\dots, N_\mathcal{S}$}
            \STATE{$\beta_i^\mathcal{S} = \alpha_k \mathbbm{1}_{ h_k(x_i^\mathcal{S}) \neq y_i^\mathcal{S}}$}
            \STATE {$D^\mathcal{S}_{k+1}(i)=D^\mathcal{S}_{k}(i) \exp \left( \beta^\mathcal{S}_i + \zeta_k \right)$}
        \ENDFOR
        \FOR{$i = 1,\dots, N_\mathcal{T}$}
            \STATE{$\beta_i^\mathcal{T} =  \alpha_k \mathbbm{1}_{ h_k(x_i^\mathcal{T}) \neq y_i^\mathcal{T}}$}
            \STATE {$D^\mathcal{T}_{k+1}(i)=D^\mathcal{T}_{k}(i)\exp\left( \beta^\mathcal{T}_i \right)$}
        \ENDFOR
        \STATE Normalize $D^\mathcal{S}_{k+1}$ and $D^\mathcal{T}_{k+1}$ such that $\sum_{i=1}^{N_\mathcal{S}} D_{k+1}^\mathcal{S}(i)+\sum_{i=1}^{N_\mathcal{T}} D_{k+1}^\mathcal{T}(i) =1$
    \ENDFOR
  \end{algorithmic}
  \textbf{Output:}  $f(x) = \text{sign} \left(\sum_{k=1}^K \alpha_k h_k(x) \right)$
\end{algorithm}

\begin{algorithm}[t]\small
\caption{TrAdaBoost.R2}
\label{alg:tradaboostR}
  \textbf{Input:} $S_\mathcal{S}, S_{\mathcal{T}}, K$, a learning algorithm $\mathcal{A}$
  \begin{algorithmic}[1]
  \STATE Initialize $D^\mathcal{S}_1(i) = D^\mathcal{T}_1(i)=\frac {1}{N_\mathcal{S}+N_\mathcal{T}}$,  $\beta = 1/(1+\sqrt{2\log N_\mathcal{S}/K})$
    \FOR{\(k=1,\dots,K\)}
        \STATE Call $\mathcal{A}$ to train a base learner $h_k$ using $S_\mathcal{S}\cup S_\mathcal{T}$ with distribution $D^\mathcal{S}_k \cup D^\mathcal{T}_k$
        \STATE Let $D_k = \max \left\{|h_k(x_1^\mathcal{S})-y_1^\mathcal{S}|,\dots,|h_k(x_{N_\mathcal{S}}^\mathcal{S})-y_{N_\mathcal{S}}^\mathcal{S}|,|h_k(x_1^\mathcal{T})-y_1^\mathcal{T}|,\dots,|h_k(x_{N_\mathcal{T}}^\mathcal{T})-y_{N_\mathcal{T}}^\mathcal{T}|\right\}$
        \STATE ${\epsilon_{k} = \sum_{i=1}^{N_\mathcal{S}} D_k^\mathcal{S}(i) \frac{|h_k(x_i^\mathcal{S}) - y_i^\mathcal{S}|}{D_k}+\sum_{i=1}^{N_\mathcal{T}} D_k^\mathcal{T}(i)\frac{|h_k(x_i^\mathcal{T}) - y_i^\mathcal{T}|}{D_k}}$
        \FOR{$i = 1,\dots, N_\mathcal{S}$}
            \STATE {$D^\mathcal{S}_{k+1}(i)=D^\mathcal{S}_{k}(i) \beta^{\frac{|h_k(x_i^\mathcal{S}) - y_i^\mathcal{S}|}{D_k}}$}
        \ENDFOR
        \FOR{$i = 1,\dots, N_\mathcal{T}$}
            \STATE{$\beta_i^\mathcal{T} =  \frac{1-\epsilon_{k}}{\epsilon_{k}}$}
            \STATE {$D^\mathcal{T}_{k+1}(i)=D^\mathcal{T}_{k}(i){\beta_i^\mathcal{T}}^{\frac{|h_k(x_i^\mathcal{T}) - y_i^\mathcal{T}|}{D_k}}$}
        \ENDFOR
        \STATE Normalize $D^\mathcal{S}_{k+1}$ and $D^\mathcal{T}_{k+1}$ such that $\sum_{i=1}^{N_\mathcal{S}} D_{k+1}^\mathcal{S}(i)+\sum_{i=1}^{N_\mathcal{T}} D_{k+1}^\mathcal{T}(i) =1$
    \ENDFOR
  \end{algorithmic}
  \raggedright\textbf{Output:}  $f(x) = \text{sign} \left(\sum_{k=\lceil K/2 \rceil}^K \alpha_k h_k(x) \right)$
\end{algorithm}

\section{Implementation Details for the Multitask Learning Experiments}\label{implementation}

We evaluate the algorithm on Digits, PACS, Office-31 and Office-Home data sets. In this part we show the implementation details about these experiments.
\subsection{Network architecture}

 For the Digits data set, we adopt the LeNet-5 as feature extractor and adopt a three layer MLP as classifier. The model is
trained from scratch and the architecture of the model is provided as follows
\begin{itemize}
    \item Feature extractor: with 2 convolution layers.
    \begin{itemize}
        \item ’layer1’: ’conv’: [1, 32, 5, 1, 2], ’relu’, ’maxpool’: [3, 2, 0]
        \item ’layer2’: ’conv’: [32, 64, 5, 1, 2], ’relu’, ’maxpool’: [3, 2, 0]
    \end{itemize}
    \item Task prediction: with 2 fc layers.
    \begin{itemize}
        \item ’layer3’: ’fc’: [*, 128], ’act\_fn’: ’elu’
        \item ’layer4’: ’fc’: [128, 10], ’act\_fn’: ’softmax’
    \end{itemize}
\end{itemize}

For experiments on PACS, we adopt the pre-trained AlexNet provided in Pytorch while removing the last FC layer as a feature extractor, which leads to the output feature size of 4096. Then, we implement several three-MLPs for the task-specific classification (4096-256-RELU-\# classes), the architecture of the classification network is
\begin{itemize}
    \item  (0): Linear(in\_features=4096, out\_features=256, bias=True)
     \item (1): Linear(in\_features=256, out\_features= \# classes, bias=True)
     \item (2): Softmax(dim=-1)
\end{itemize}

For the Office-31 and Office-Home data sets, we adopt pre-trained ResNet-18 in Pytorch while removing the last FC layers as a feature extractor. The output size of the feature extractor is 512. Then, we also implement an MLP for the classification (512-256-RELU-\# classes))
\begin{itemize}
    \item(0): Linear(in\_features=512, out\_features=256, bias=True)
      \item(1): Linear(in\_features=256, out\_features=\# classes, bias=True)
      \item(2): Softmax(dim=-1)
\end{itemize}

\# classes of PACS data sets is 7, \# classes of Office-31 is 31 and \# classes of Office-Home is 65. When computing the semantic loss, we extract the features with size 256. 

\subsection{Data Set Processing and Hyper-parameter Setup}

For the Digits data set, we randomly select $3K$, $5K$ and $8K$ instances for training. For the SVHN data set, we resize the images to $28\times 28 \times 1$; we do not apply any data augmentation on the Digits data set. For PACS data set, we adopt the same original split of train/validation/test splits of~\citep{li2017deeper}. Then, for training the set, we randomly select $10\%, 15\%, 20\%$ of the total instances in the original training set and use the original validation set for developing the model and report the empirical results on the testing set. We train the model with a mini-batch size of $64$. We didn't specify a random seed for the random selection process. We follow the pre-processing protocol of~\citep{carlucci2019domain} to process the data set.
For the office-31 and Office-Home data sets, we adopt the same train/validation/test splits strategy of~\citep{long2017learning,zhou2021multi}. We randomly select $5\%, 10\%, 20\%$ of the instances from the training set to train the model. The validation set is used for developing the model while we test on the rest test data set.  For Office-31 we use a training batch size of $16$ and for the Office-Home data set we use a training batch size of $24$.

We adopt the Adam optimizer to train the model for the three data sets. The initial learning rate was set to $2\times10^{-4}$. During the training process, we decay the learning rate $5\%$ for every 5 epochs. To enforce a $L_2$ regularization, we also enable the weight decay option in Adam provided by Pytorch. The relation coefficients $\balpha$ are initialized by $1/\#\text{tasks}$ for each class, then it is optimized dynamically as the training goes on. We set a weight of $0.1$ to regularize the semantic loss and the marginal alignment objective.

All the experiments were repeated 10 times on Intel Gold 6148 Skylake 2.4 GHz CPU and 2 x NVidia V100SXM2 (16G memory) GPU. 
The main software packages used for the experiments are Pytorch version 1.0, Torchvision Version 0.2, Python version 3.6, and Python CVXPY package 1.0.

\subsection{Experimental Details on Measuring the Wasserstein Distance}

One main similarity measure adopted in our work is the Wasserstein distance. In this Appendix section, we  follow~\citep{zhou2021multi} to briefly introduce the W-distance and its corresponding adversarial training process.

The Wasserstein distance of order $p$ between $\D_i$ and $\D_j$ for any $p \geq 1$ is defined as:
\begin{equation}
\label{Eq.primal_Wass}
W_p^p(\D_i,\D_j) = \inf_{\gamma\in \Pi(\D_i,\D_j)} \int_{\mathcal{X}_1 \times \mathcal{X}_2} c(x,y)^p d \gamma(x,y),
\end{equation}
where $c:\mathcal{X}\times\mathcal{X} \to \R^{+}$ is the cost function of transportation of one unit of mass $x$ to $y$ and $\Pi(\D_i,\D_j)$ is the collection of all joint probability measures on $\mathcal{X}_1 \times \mathcal{X}_2$ with marginals $\D_i$ and $\D_j$. In this paper, we consider the case of Wasserstein-1 distance ($p=1$).

Suppose $|\mathcal{D}_i|=n, |\mathcal{D}_j|=m$, then, computing Eq.~(\ref{Eq.primal_Wass}) has complexity of $\mathcal{O}(n^3+m^3)$. Aiming to solve this issue, we can adopt the \emph{Kantorovich-Rubinstein duality}~\citep{shen2018wasserstein,wainwright2019high}, and for any function $f$, if it is 1- Lipschitz \emph{w.r.t.} the cost function, $\|f(x)-f(x^\prime)\| \leq c(x,x^\prime)$, we have 
\begin{equation}
\label{EqW1}
    W_1 (\mathcal{D}_i,\mathcal{D}_j)  \geq \mathbb{E}_{x\sim \mathcal{D}_i} d(x) - \mathbb{E}_{x^\prime \sim \mathcal{D}_j} d(x^\prime)
\end{equation}

Eq.~(\ref{EqW1}) arrives equality when $f$ reaches the maximum of the left side. Then, we can approximate $W_1$ by,
\begin{equation}
    \label{original_eq_wasserstein}
    W_1(\mathcal{D}_i,\mathcal{D}_j) = \sup_{\|f\|_L < 1} \mathbb{E}_{x\in\mathcal{D}_i}f(x) - \mathbb{E}_{x^\prime \in \mathcal{D}_j}f(x^\prime)
\end{equation}

In practice, we can implement a neural network model to approximate the function and determine its maximum value. Thus, we can have an approximation of Wasserstein distance by determining 
\begin{equation}\label{equation_argmax_wass}
    \arg \max \mathbb{E}_{x\in\mathcal{D}_i}f(x) - \mathbb{E}_{x^\prime \in \mathcal{D}_j}f(x^\prime)
\end{equation}

Then, to minimize the Wasserstein distance can be approximated by minimizing Eq.~(\ref{equation_argmax_wass}),
which leads to a `min-max' optimization and could be solved by adversarial training~\citep{shen2018wasserstein}.



\newpage
\bibliography{ref}

\end{document}